\documentclass[pdflatex,sn-mathphys-ay]{sn-jnl}

\usepackage{graphicx}
\usepackage{multirow}
\usepackage{amsmath,amssymb,amsfonts}
\usepackage{amsthm}
\usepackage{mathrsfs}
\usepackage[title]{appendix}
\usepackage{xcolor}
\usepackage{textcomp}
\usepackage{manyfoot}
\usepackage{booktabs}
\usepackage{algorithm}
\usepackage{algorithmicx}
\usepackage{algpseudocode}
\usepackage{listings}

\usepackage{hyperref}
\usepackage{colortbl}
\usepackage{subcaption}
\usepackage[algo2e, ruled, linesnumbered]{algorithm2e}
\usepackage{bm}
\usepackage[dash,dot]{dashundergaps}
\usepackage{multirow}

\theoremstyle{thmstyleone}
\newtheorem{theorem}{Theorem}

\theoremstyle{thmstyletwo}

\theoremstyle{thmstylethree}

\newtheorem{problem}{Problem}

\newcommand{\Method}{3CPO\xspace}

\begin{document}

\title[Poisson Subspace Clustering: Focusing on the Essentials in Count Data]{Poisson Subspace Clustering: Focusing on the Essentials in Count Data}

\author*[1,2]{\fnm{Collin} \sur{Leiber}}\email{collin.leiber@aalto.fi}

\author[2]{\fnm{Kai} \sur{Puolamäki}}\email{kai.puolamaki@helsinki.fi}

\author[1]{\fnm{Heikki} \sur{Mannila}}\email{heikki.mannila@aalto.fi}

\affil*[1]{\orgname{Aalto University}, \orgaddress{\city{Espoo}, \country{Finland}}}

\affil[2]{\orgname{University of Helsinki}, \orgaddress{\city{Helsinki}, \country{Finland}}}

\abstract{Count data represented as a matrix of non-negative integer values, such as contingency tables, are prevalent across diverse domains. When clustering such data sets, specific methods are required, as generic algorithms often fail to consider their unique distributional properties, leading to unreliable outputs. An effective strategy is to use well-established statistical models such as the Poisson and negative binomial distributions. We present \Method, a clustering algorithm based on statistically solid modeling of count data. In addition to the cluster labels, it identifies a subset of relevant columns, enhancing the interpretability of the results. We propose a simple iterative algorithm that maximizes the posterior probability to find good clustering solutions and discuss its properties. Extensive experiments demonstrate its ability to define high-quality clusters within associated subspaces for various data domains, ranging from gene expressions and texts to economics. Our findings suggest that \Method is a robust solution for clustering count data in a statistically sound and interpretable manner. Our code is available at \url{https://github.com/collinleiber/3CPO}.
}

\keywords{Common Subspace Clustering, Count Data, Expectation Maximization, Column Selection, Poisson Distribution}

\maketitle

\section{Introduction}\label{sec:intro}

Count data are a natural and commonly used type of data, e.g., in the form of contingency tables. The idea is that each entry in the data matrix represents the number of occurrences of some event. This type of data is used in a wide variety of research fields, such as biology~\citep{snow2007survey}, ecology~\citep{farnsworth2005statistical}, mobility studies~\citep{ceder1984bus}, environmental studies~\citep{hargesheimer1998particle}, publication studies~\citep{waltman2012leiden}, or economics~\citep{rouwendal2009assessing}. Due to its definition, count data have some unique properties: (1) all values are non-negative natural numbers, (2) all rows share a common domain, and (3) all columns share a common domain. Property (3) contrasts typical tabular data sets, where each column usually describes a different trait. Consider a data set regarding personal information consisting of columns for year of birth, place of birth, height, and gender. These columns are hard to compare as they do not even share a common data type. In contrast, an exemplary count data set could comprise various options, where each entry in the data matrix counts how often a person has chosen a particular option. This information could be used to determine whether someone prefers \textit{option A} or \textit{option B}. 

If one would like to explore certain groups within count data, clustering is a viable choice. Clustering algorithms try to group the rows of a data matrix, using some notion of similarity, so that all rows within a group share a natural relationship, while those in different groups are dissimilar \citep{dubes76clustering}. As an unsupervised machine-learning task, no prior knowledge of the data is required. In general, clustering has already been extensively studied \citep{xu2005survey}, especially in the context of tabular data. 
However, it has been shown that due to the unique characteristics of count data, transformations often perform worse than sophisticated models, such as Poisson and negative binomial distributions \citep{ohara2010not}. This limits the applicability of traditional clustering algorithms that often require pre-processed data to return satisfactory results. Therefore, we argue that specialized approaches are a valuable tool in the data mining toolbox.

As count data is often high-dimensional, e.g., gene expression data or bag-of-words representations of texts, it is beneficial to cluster relevant rows and automatically select columns relevant for the clustering task. In this way, a subsequent analysis of the identified structures can be simplified, reducing the manual work a domain expert has to invest. For tabular data sets, such subspace clustering algorithms have already been studied extensively \citep{sim2013survey}. The selection of relevant columns can be carried out in two different ways: specifically for each cluster or globally for all clusters. In the first variant, individual columns are selected for each cluster so that their characteristics are best described. This is a well-studied field of research \citep{kriegel2012subspace} and is used, e.g, in block-diagonal co-clustering \citep{battaglia2024co}. A disadvantage is that the comparability of individual clusters is more difficult as the clusters are located in different subspaces \citep{goebl2014finding}. For this reason, so-called common subspace clustering has gained momentum in recent years \citep{ding2007adaptive,goebl2014finding,mautz2017towards,bauer2023extension}. Here, a single subspace is defined that contains all clusters.

In this paper, we propose \Method, \textbf{C}lustering and \textbf{C}olumn selection of \textbf{C}ount Data using a \textbf{P}oisson-based \textbf{O}ptimization. \Method uses concepts based on the Poisson distribution to cluster the rows of a data matrix and to simultaneously select the columns that are most relevant for this clustering result. Extensive experiments verify the good performance of \Method with respect to the created clusters. Furthermore, we show that our method can significantly reduce the number of relevant columns. This property simplifies a subsequent interpretation of the clustering results, which is beneficial in real-world unsupervised scenarios. 

Our main contributions are as follows:
\begin{enumerate}
    \item We propose a sophisticated and statistically sound model to describe the entries within a count data matrix using the Poisson distribution.
    \item We introduce \Method, a common subspace clustering algorithm that iteratively maximizes the posterior probability with respect to our proposed model.
    \item Extensive experiments using data sets from diverse domains verify that \Method outperforms its competitors in many scenarios and simplifies interpreting the results by automatically selecting the most informative columns.
\end{enumerate}

\section{Related Work}\label{sec:related}

We see two research areas that are relevant for our proposal: clustering of count data and column selection for clustering.

\subsection{Clustering of Count Data}
\noindent\textbf{Baselines.} An easy way to take into account certain characteristics of count data is to pre-process the data. For example, to handle different magnitudes of row sums, one can consider the relative frequencies of the columns, i.e., divide each value by the row sum. Another option is to compute the revealed comparative advantage (RCA) \citep{balassa1965Trade}, which is often used in economics. An issue with pre-processing count data is discussed by \citet{ohara2010not}, where transformations are shown to be often unsuitable for count data. Therefore, it would be beneficial if the clustering algorithm operated on the raw data. Here, an option is Spherical $k$-Means~\citep{dhillon2001concept} (SKM), a variant of $k$-Means \citep{lloyd82least} that optimizes toward the cosine instead of the Euclidean distance. It follows that the length of a vector does not influence the result, making SKM more suitable for sparse data such as text representations.
Our proposal follows a similar idea by implicitly incorporating pre-processing within the model and therefore simplifying the clustering process.

\noindent{\bf Poisson-based Clustering.} Examples of Poisson-based clustering approaches are PoissonC and PoissonL~\citep{cai2004clustering}, which model the expected values of the Poisson distribution as products of row- and column-specific factors, where each cluster receives its own column values (details in Appendix \ref{appendix:poissonl}). This formulation is optimized using an Expectation Maximization (EM)~\citep{dempster77Maximum} approach, where cluster assignments are updated by considering probabilities (PoissonL) or a Chi-squared test (PoissonC). Note that columns not following the clustering model can heavily influence the final result. To receive more appropriate distances, TransChisq~\citep{kim2007measuring} adds a feature transformation to this formulation by considering all pairs of features. In \citep{rau2011Clustering}, a method based on a Poisson-Mixture-Model is proposed that considers replicates in gene expression data. Using a Poisson-based dissimilarity function, the proposal in \citep{witten2011classification} applies complete linkage to perform hierarchical clustering.

\noindent\textbf{Dirichlet-based Clustering.} Another family of approaches is based on Dirichlet distributions, often used for topic modeling. Latent Dirichlet Allocation~\citep{blei03} adapts a three-level hierarchical Bayesian model that is optimized using an EM-based approach. 
EDCM~\citep{elkan2006clustering}, an approximation of the Dirichlet compound multinomial distribution, improves on traditional multinomial distributions by considering \textit{burstiness}, i.e., the phenomenon that if an entry occurs once, it is likely that it occurs again.
Often, multiple distributional families are combined to better capture the unique characteristics of the data. Examples are the Multinomial Generalized Dirichlet Distribution~\citep{bouguila2008clustering}, a composition of the generalized Dirichlet and multinomial distributions, and Multinomial Beta-Liouville Mixture~\citep{bouguila2010count}, a combination of the multinomial and the Beta-Liouville distributions. Exponential approximation to the Multinomial Beta-Liouville \citep{zamzami2020high}~is another case that can simultaneously fit a model and select the optimal number of clusters by applying an extended EM procedure. 
In contrast to these methods, our approach only considers a simple Poisson distribution to yield a hard clustering solution.

\subsection{Column Selection for Clustering}

While some research has already been done that combines clustering of count data with column selection, to the best of our knowledge, this has been focused on \textbf{co-clustering}, i.e., simultaneously clustering the rows and columns of a data matrix. This can, for example, be achieved by using information-theoretic approaches such as CROINFO~\citep{govaert2018mutual} which optimizes a loss function based on mutual information. Algorithms like CoclustMod~\citep{ailem2015co} and CoclustSpecMod~\citep{labiod2011co} interpret co-clustering as a graph modularity problem. Other methods, e.g., \citep{govaert2010latent,ailem2017sparse,ailem2017modelbased}, use the Poisson distribution in the form of Poisson Latent Block Models \citep{nadif05comparison}. This idea has been generalized by ELBM and SELBM~\citep{hoseinipour2025sparse} to consider (sparse) block models from various distributions of the exponential family. TauCC~\citep{battaglia2024fast} simplifies the complexity of choosing hyperparameters in co-clustering by automatically identifying an appropriate number of clusters. In our work, the goal is not to analyze multiple column sets containing different clustering characterizations but to filter out columns not helpful for the clustering task and, thus, identify a single set of columns that is relevant for all clusters. This allows not only the analysis of the intra-cluster but also the inter-cluster relationship~\citep{goebl2014finding}. 

\textbf{Common subspace} algorithms pursue a similar goal, by defining a single relevant subspace, often by combining a feature transformation with column selection, e.g., by applying Singular Value or QR Decomposition~\citep{ding2002adaptive}, Linear Discriminant Analysis~\citep{ding2007adaptive}, Givens rotations~\citep{goebl2014finding}, specialized eigenvalue decompositions~\citep{mautz2017towards} or modality-based projection pursuit~\citep{bauer2023extension}. The actual clustering is then performed in the resulting subspaces, often using variants of $k$-Means or the EM algorithm, and influences the feature transformation in an iterative manner. Yet, if one wants to receive a lower-dimensional and interpretable representation of count data, these approaches are less practical, as the applied rotations lead to mixtures of features that cannot be easily interpreted. Furthermore, we have no guarantee that the resulting features are integers, so statistical foundations based on count data like the Poisson distribution cannot be applied. A similar problem is described by \cite{klein2023ksubmix} for clustering mixed-type data in a common subspace setting. Their proposal identifies relevant features by comparing the cost function with respect to a so-called clustered space to a noise space that contains a single cluster. \cite{zamzami2023novel} identify the number of clusters and the most important features for count data by combining the generalized Dirichlet multinomial with the minimum message length criterion.

\section{Definitions, Theory, and Algorithms}\label{sec:theory}

\begin{figure}[t]
\centering
\resizebox{0.92\textwidth}{!}{
$\displaystyle
\begin{array}{@{} r @{}}
    \\\\
    \text{Cluster $R_1$}\left\{\begin{array}{@{}c@{}}\null\\\null\\\null\end{array}\right.\\
    \text{Cluster $R_2$}\left\{\begin{array}{@{}c@{}}\null\\\null\\\null\end{array}\right.\\
    \text{Cluster $R_3$}\left\{\begin{array}{@{}c@{}}\null\\\null\\\null\\\null\end{array}\right.\\
\end{array}
\begin{array}{| c | c | c | c | c | c |}
\multicolumn{2}{c}{\in C_1}&\multicolumn{2}{c}{\in C_0}&\multicolumn{2}{c}{\in C_{-}}\\
\multicolumn{2}{c}{\overbrace{\rule{1.6cm}{0pt}}} & \multicolumn{2}{c}{\overbrace{\rule{1.6cm}{0pt}}} & \multicolumn{2}{c}{\overbrace{\rule{1.6cm}{0pt}}}\\
\hline
~\hspace{0.1cm}\cellcolor{yellow!10} 2 \hspace{0.1cm}~&~\hspace{0.1cm} \cellcolor{yellow!10} 3 \hspace{0.1cm}~&~\hspace{0.1cm} \cellcolor{cyan!10} 4 \hspace{0.1cm}~&~\hspace{0.1cm} \cellcolor{cyan!10} 1 \hspace{0.1cm}~&~\hspace{0.1cm} \cellcolor{gray!10} 6 \hspace{0.1cm}~&~ \cellcolor{gray!10} 25~\\
\hline
\cellcolor{yellow!10} 6 & \cellcolor{yellow!10} 9 & \cellcolor{cyan!10} 12 & \cellcolor{cyan!10} 3 & \cellcolor{gray!10} 7 & \cellcolor{gray!10} 25\\
\hline
\cellcolor{yellow!10} 6 & \cellcolor{yellow!10} 9 & \cellcolor{cyan!10} 12 & \cellcolor{cyan!10} 3 & \cellcolor{gray!10} 1 & \cellcolor{gray!10} 20\\
\hline
\cellcolor{red!10} 32 & \cellcolor{red!10} 8 & \cellcolor{cyan!10} 32 & \cellcolor{cyan!10} 8 & \cellcolor{gray!10} 5 & \cellcolor{gray!10} 20\\
\hline
\cellcolor{red!10} 28 & \cellcolor{red!10} 7 & \cellcolor{cyan!10} 28 & \cellcolor{cyan!10} 7 & \cellcolor{gray!10} 7 & \cellcolor{gray!10} 25\\
\hline
\cellcolor{red!10} 8 & \cellcolor{red!10} 2 & \cellcolor{cyan!10} 8 & \cellcolor{cyan!10} 2 & \cellcolor{gray!10} 5 & \cellcolor{gray!10} 23\\
\hline
\cellcolor{blue!10} 27 & \cellcolor{blue!10} 18 & \cellcolor{cyan!10} 36 & \cellcolor{cyan!10} 9 & \cellcolor{gray!10} 3 & \cellcolor{gray!10} 20\\
\hline
\cellcolor{blue!10} 21 & \cellcolor{blue!10} 14 & \cellcolor{cyan!10} 28 & \cellcolor{cyan!10} 7 & \cellcolor{gray!10} 10 & \cellcolor{gray!10} 24\\
\hline
\cellcolor{blue!10} 18 & \cellcolor{blue!10} 12 & \cellcolor{cyan!10} 24 & \cellcolor{cyan!10} 6 & \cellcolor{gray!10} 5 & \cellcolor{gray!10} 22\\
\hline
\cellcolor{blue!10} 12 & \cellcolor{blue!10} 8 & \cellcolor{cyan!10} 16 & \cellcolor{cyan!10} 4 & \cellcolor{gray!10} 8 & \cellcolor{gray!10} 25\\
\hline
\end{array}
\begin{minipage}{0.1\textwidth}
\vspace{1.3cm}
\begin{center}
    \scalebox{2}{$\rightarrow$}
\end{center}
\end{minipage}
\begin{minipage}{0.2\textwidth}
\vspace{1cm}
    \begin{align*}
        \lambda_i^R = \{&0.02, 0.06, 0.06, 0.16, 0.14, \\
        &0.04, 0.18, 0.14, 0.12, 0.08\}\\
        \lambda^C_{j|-} = \{&16, 9, 20, 5, 5.7, 22.9\}\\
        \lambda^C_{j|0} = \{&160, 90, 200, 50, 57, 229\}\\
        \lambda^C_{j|1}=\{&100, 150, 200, 50, 100, 500\}\\
        \lambda^C_{j|2}=\{&200, 50, 200, 50, 50, 200\}\\
        \lambda^C_{j|3}=\{&150, 100, 200, 50, 50, 175\}\\
    \end{align*}
\end{minipage}
\vspace{-0.2cm}
$}
\resizebox{!}{0.093\textheight}{$
\begin{array}{| c | c | c | c | c | c |}
\multicolumn{6}{c}{\lambda_i^R\lambda_{j|k}^C $(Eq. \eqref{eq:mui})$}\\
\hline
~\cellcolor{green!10} 2.0 ~&~ \cellcolor{green!10} 3.0 ~&~ \cellcolor{green!10} 4.0 ~&~~ \cellcolor{green!10} 1.0 ~&~ 2.0 ~& 10.0 \\
\hline
\cellcolor{green!10} 6.0 & \cellcolor{green!10} 9.0 & \cellcolor{green!10} 12.0 & \cellcolor{green!10} 3.0 & \cellcolor{green!10}6.0 & 30.0 \\
\hline
\cellcolor{green!10} 6.0 & \cellcolor{green!10} 9.0 & \cellcolor{green!10} 12.0 & \cellcolor{green!10} 3.0 & 6.0 & 30.0 \\
\hline
\cellcolor{green!10} 32.0 & \cellcolor{green!10} 8.0 & \cellcolor{green!10} 32.0 & \cellcolor{green!10} 8.0 & 8.0 & 32.0 \\
\hline
\cellcolor{green!10} 28.0 & \cellcolor{green!10} 7.0 & \cellcolor{green!10} 28.0 & \cellcolor{green!10}7.0 & \cellcolor{green!10}7.0 & 28.0 \\
\hline
\cellcolor{green!10} 8.0 & \cellcolor{green!10} 2.0 & \cellcolor{green!10} 8.0 & \cellcolor{green!10} 2.0 & 2.0 & 8.0 \\
\hline
\cellcolor{green!10} 27.0 & \cellcolor{green!10} 18.0 & \cellcolor{green!10} 36.0 & \cellcolor{green!10} 9.0 & 9.0 & 31.5 \\
\hline
\cellcolor{green!10} 21.0 & \cellcolor{green!10} 14.0 & \cellcolor{green!10} 28.0 & \cellcolor{green!10} 7.0 & 7.0 & 24.5 \\
\hline
\cellcolor{green!10} 18.0 & \cellcolor{green!10} 12.0 & \cellcolor{green!10} 24.0 & \cellcolor{green!10} 6.0 & 6.0 & 21.0 \\
\hline
\cellcolor{green!10} 12.0 & \cellcolor{green!10} 8.0 & \cellcolor{green!10} 16.0 & \cellcolor{green!10} 4.0 & 4.0 & 14.0 \\
\hline
\end{array}$
}
\hspace{0.1cm}
\resizebox{!}{0.093\textheight}{$
\begin{array}{| c | c | c | c | c | c |}
\multicolumn{6}{c}{\lambda^R_i \lambda^C_{j|0} $(Eq. \eqref{eq:mu0})$}\\
\hline
~3.2~ & ~1.8~ & ~\cellcolor{green!10} 4.0~ &~ \cellcolor{green!10}1.0~ & ~1.1~ & ~4.6~ \\
\hline
9.6 & 5.4 & \cellcolor{green!10} 12.0 & \cellcolor{green!10} 3.0 & 3.4 & 13.7 \\
\hline
9.6 & 5.4 & \cellcolor{green!10} 12.0 & \cellcolor{green!10} 3.0 & \cellcolor{green!10}3.4 & 13.7 \\
\hline
25.6 & 14.4 & \cellcolor{green!10} 32.0 & \cellcolor{green!10} 8.0 & 9.1 & 36.6 \\
\hline
22.4 & 12.6 & \cellcolor{green!10} 28.0 & \cellcolor{green!10} 7.0 & 8.0 & 32.1 \\
\hline
6.4 & 3.6 & \cellcolor{green!10} 8.0 & \cellcolor{green!10} 2.0 & 2.3 & 9.2 \\
\hline
28.8 & 16.2 & \cellcolor{green!10} 36.0 & \cellcolor{green!10} 9.0 & 10.3 & 41.2 \\
\hline
22.4 & 12.6 & \cellcolor{green!10} 28.0 & \cellcolor{green!10} 7.0 & \cellcolor{green!10}8.0 & 32.1 \\
\hline
19.2 & 10.8 & \cellcolor{green!10} 24.0 & \cellcolor{green!10} 6.0 & 6.8 & 27.5 \\
\hline
12.8 & 7.2 & \cellcolor{green!10} 16.0 & \cellcolor{green!10} 4.0 & 4.6 & 18.3 \\
\hline
\end{array}$
}
\hspace{0.1cm}
\resizebox{!}{0.093\textheight}{$
\begin{array}{| c | c | c | c | c | c |}
\multicolumn{6}{c}{\lambda^C_{j|-} $(Eq. \eqref{eq:muminus})$}\\
\hline
16.0 &~9.0~&20.0&~5.0~&~\cellcolor{green!10}5.7~&\cellcolor{green!10}22.9 \\
\hline
16.0&\cellcolor{green!10}9.0&20.0&5.0&5.7&\cellcolor{green!10}22.9\\
\hline
16.0&\cellcolor{green!10}9.0&20.0&5.0&5.7&\cellcolor{green!10}22.9\\
\hline
16.0&9.0&20.0&5.0&\cellcolor{green!10}5.7&\cellcolor{green!10}22.9\\
\hline
16.0&9.0&20.0&5.0&5.7&\cellcolor{green!10}22.9\\
\hline
16.0&9.0&20.0&5.0&\cellcolor{green!10}5.7&\cellcolor{green!10}22.9\\
\hline
16.0&9.0&20.0&5.0&\cellcolor{green!10}5.7&\cellcolor{green!10}22.9\\
\hline
16.0&9.0& 20.0&5.0&5.7&\cellcolor{green!10}22.9\\
\hline
16.0&9.0&20.0&5.0&\cellcolor{green!10}5.7&\cellcolor{green!10}22.9\\
\hline
16.0&9.0&20.0&5.0&\cellcolor{green!10}5.7&\cellcolor{green!10}22.9\\
\hline
\end{array}$
}

\caption{\textbf{Motivation.} The upper matrix shows a data set ${\bf X} \in {\mathbb{N}}^{n\times m}$, where $n=10, m=6$, containing three clusters. Only the first two columns are relevant for clustering (highlighted in yellow, red and dark blue) and therefore belong to $C_1$. Columns three and four (light blue) 
equally contribute to all clusters and belong to $C_0$. Columns five and six (gray) contain (uniformly distributed) noise and belong to $C_{-}$. The lower matrices show the expected values when using $\lambda^C_{j|-}, \lambda^C_{j|0}$ and $\lambda^C_{j|k}$, where the green color highlights the closest expected value compared to ${\bf X}$ among those three matrices.}\label{fig:exampleMatrix}
\end{figure}

In this work, the main goal is to partition the $n$ rows of a matrix ${\bf Y}\in{\mathbb{N}}_0^{n\times m}$ consisting of non-negative integer counts, into $K \in \mathbb{N}_{>0}$ clusters $R_k\subseteq [n]=\{1,\ldots,n\}$, where $k\in[K]$, such that $\cup_{k=1}^K{R_k}=[n]$ and $R_k\cap R_l=\emptyset$ if $k\ne l$.

More formally, we define the probability of ${\bf Y}_{ij}$ by using the Poisson distribution as
\begin{equation}
    p({\bf Y}_{ij},\mu_{ij})
    =\text{Pois}({\bf Y}_{ij}\mid\mu_{ij})
   \propto \mu_{ij}^{{\bf Y}_{ij}}e^{-\mu_{ij}}.
\end{equation}
In our experiments, we use ${\bf X}_{ij}={\bf Y}_{ij}+10^{-3}$ for numerical stability in place of ${\bf Y}_{ij}$, which can be interpreted as a weak Gamma prior.\footnote{
The full expression with the prior would be: $p({\bf Y}_{ij},\mu_{ij})
  =\text{Pois}({\bf Y}_{ij}\mid\mu_{ij})\times\text{Gamma}(\mu_{ij}\mid\alpha,\beta)    \propto \mu_{ij}^{{\bf Y}_{ij}+\alpha-1}e^{-\mu_{ij}}
    \propto \mu_{ij}^{{\bf X}_{ij}}e^{-\mu_{ij}}$ with $\alpha=1+10^{-3}$ and the second Gamma parameter being $\beta=0$.
}

Following the proposal in, e.g., \citep{cai2004clustering}, we model the expected number of occurrences $\mu_{ij}$ as a product of row- and column-specific factors. This leads to two main assumptions for the data matrix $\bf X$: (i) entries within row $i$ are scaled by the row-specific value $\lambda^R_i$ and (ii) entries in cluster $k$ follow the proportions as defined by the column and cluster-specific values $\lambda^C_{j|k}$ that play a role similar to ``cluster centroids'' and are ideally different from those of other clusters.

These conditions are quite strict and are often not satisfied in practice. Let us consider a hypothetical matrix, where each row contains the word counts of a letter. In most cases, greeting phrases are only contained once, regardless of the length of the letter. This contradicts Assumption~(i). In contrast, accompanying words such as ‘the’ usually occur more frequently in longer documents and fulfill Assumption~(i). However, their number is probably less dependent on the content (i.e., cluster), which contradicts Assumption~(ii).

To address these problems, we divide the $m$ columns of the data matrix into three non-overlapping subsets $C_-$, $C_0$, and $C_1$, with $C_- \cup C_0 \cup C_1=[m]$; see Fig.~\ref{fig:exampleMatrix}. 
The subset $C_1$ contains those columns that are important for clustering and for which the overall scale of the row matters. For columns $j \in C_1$ the Poisson parameter reads
\begin{equation}\label{eq:mui}
\mu_{ij} = \lambda^R_i\lambda^C_{j\mid k},~~~{\rm if}~~~j\in C_1 ~{\rm and}~i \in R_k,
\end{equation}
where $\lambda^R_i\in{\mathbb{R}}_{>0}$ are the row-specific and $\lambda^C_{j\mid k}\in{\mathbb{R}}_{>0}$ for $k\in[K]$ the column-specific parameters.
The subset $C_0$ contains those columns where the overall scale does have an effect, but the clusters behave in a similar way (violating Assumption (ii)). For columns $j \in C_0$ we have 
\begin{equation}\label{eq:mu0}
\mu_{ij} = \lambda^R_i\lambda^C_{j\mid 0},~~~{\rm if}~~~j\in C_0.
\end{equation}
Finally, the subset $C_-$ contains those columns where neither the scale of the row nor the cluster is informative (violating Assumptions (i) and (ii)). The values in columns $j\in C_-$ are modeled by a column-specific parameter:
\begin{equation}\label{eq:muminus}
\mu_{ij} = \lambda^C_{j\mid -},~~~{\rm if}~~~j\in C_-.
\end{equation}

Since we assume that all entries within a matrix ${\bf X}$ have been drawn independently, the log-loss of ${\bf X}$ is given by
\begin{align}
\label{eq:likelihood}
{\cal L}({\bf X})=
-\sum\nolimits_{i = 1}^n{\sum\nolimits_{j=1}^m{\log{p({\bf X}_{ij},\mu_{ij})}}}+\sum_{j\in C_1}{b({\bf X}_{\cdot j}|K)},
\end{align}
where $b({\bf X}_{\cdot j}|K)\in{\mathbb{R}}_{\ge 0}$ is a penalty term related to the column selection, defined in Sect. \ref{sec:column}. This loss function is scale invariant if $b({\bf X}_{\cdot j}|K)=0$, i.e., for all $\alpha \in {\mathbb{R}}_{> 0}$ we have ${\cal L}({\bf X})={\cal L}({\bf \alpha X})$. For the details, see Appendix \ref{sec:scale_invariance_appendix}.

\noindent\fbox{\begin{minipage}{\textwidth}
\noindent \textbf{Our main computational problem is as follows}.
\vspace{-0.35cm}
\begin{problem}\label{prob:main}
Given a $n\times m$ matrix of counts ${\bf X}$, the number of clusters $K$, and the definitions given in Eq.~\eqref{eq:mui}-\eqref{eq:muminus}, find
the parameters $\{\lambda^R_i,\lambda^C_{j|\cdot}\}$,
the clustering $R_1,\ldots,R_K$ and the column partitions $C_-, C_0, C_1$ that minimize the log-loss given by Eq. \eqref{eq:likelihood}.
\vspace{0.1cm}
\end{problem}
\end{minipage}}\\

We solve Problem \ref{prob:main} using an Expectation Maximization (EM)-type algorithm, where we iterate finding (i) the optimal expected values $\mu_{ij}$ (Sect. \ref{sec:em1}), (ii) the clustering $R_1,\ldots,R_K$ (Sect. \ref{sec:em2}), and (iii) the column partition $C_-,C_0,C_1$ (Sect. \ref{sec:column}), that at each step maximize the likelihood. This is repeated until convergence. We describe the initialization of $C_-,C_0,C_1$ and $R_1,\ldots,R_K$ in Sect. \ref{sec:initial}. A pseudocode version of our method is given in Algorithm \ref{algo:algorithm}. Detailed derivations can be found in Appendix \ref{app:a}.

\subsection{Finding the Expected Values $\mu_{ij}$}\label{sec:em1}
Following the definitions in Eq.~\eqref{eq:mui}--\eqref{eq:muminus}, we can find the expected values $\mu_{ij}$ by determining the parameters $\lambda_i^R, \lambda_{j|-}^C, \lambda_{j|0}^C$ and $\lambda_{j\mid k}^C$. 
The update rules are obtained by solving the equations $\partial{{\cal L}({\bf X})}/\partial \lambda^R_i=0$ and $\partial{{\cal L}({\bf X})}/\partial \lambda^C_{j\mid l}=0$ for $l \in \{-, 0\} \cup [K]$. 
Notice that for any constant $\kappa\in{\mathbb{R}}_{>0}$, we can multiply all $\lambda^R_i\leftarrow \kappa \lambda^R_i $ and divide all $\lambda^C_{j\mid l}\leftarrow \lambda^C_{j\mid l}/\kappa $ for $l\in\{0\}\cup [K]$ without affecting the parameters $\mu_{ij}$. For this reason, we will set, without loss of generality, $\sum\nolimits_{i=1}^n{\lambda^R_i}=1$. 
Given a clustering $R_1,\dots,R_K$ and a column partition $C_1, C_0, C_-$, the solutions for the row-specific parameters are
\begin{align}\label{eq:lambdaR}
\lambda^R_i=\begin{cases}
\frac{(\sum_{j\in C_0 \cup C_1} {\bf X}_{ij}) (\sum_{i \in R_k} \sum_{j\in C_0} {\bf X}_{ij})}{(\sum_{i=1}^n \sum_{j \in C_0} {\bf X}_{ij})(\sum_{i \in R_k} \sum_{j \in C_1 \cup C_0} {\bf X}_{ij})},&~~~{\rm if}~~~ i \in R_k~{\rm and}~C_0 \neq \emptyset\\
\frac{\sum_{j \in C_1} {\bf X}_{ij}}{\sum_{i=1}^n \sum_{j \in C_1} {\bf X}_{ij}},&~~~{\rm if}~~~ i \in R_k~{\rm and}~C_0 = \emptyset
\end{cases}
\end{align}
(for details, see Appendix \ref{sec:3cpo_appendix}). For the column parameters, we obtain: 
\begin{align}
\lambda^C_{j\mid -}&=\sum\nolimits_{i=1}^n{{\bf X}_{ij}/n},\label{eq:meanminus}\\
\lambda^C_{j\mid 0}&=\sum\nolimits_{i=1}^n{{\bf X}_{ij}},\label{eq:mean0}\\
\lambda^C_{j\mid k}&=\begin{cases}
    \frac{(\sum_{i \in R_k} {\bf X}_{ij}) (\sum_{i=1}^n \sum_{j \in C_0} {\bf X}_{ij})}{\sum_{i \in R_k} \sum_{j\in C_0} {\bf X}_{ij}},&~~~{\rm if}~~~ C_0 \neq \emptyset\\
    \frac{(\sum_{i \in R_k} {\bf X}_{ij}) (\sum_{i=1}^n \sum_{j \in C_1} {\bf X}_{ij})}{\sum_{i \in R_k} \sum_{j\in C_1} {\bf X}_{ij}},&~~~{\rm if}~~~ C_0 = \emptyset.
\end{cases}\label{eq:mean1}
\end{align}
We will compute and store the column parameters for all columns to update the column partitions later in Sect. \ref{sec:column}.

\subsection{Finding the Clustering $R_1,\ldots,R_K$}\label{sec:em2}

Similarly, the update rule for the clustering $R_1,\ldots,R_K$ that maximizes the likelihood when all other parameters are kept fixed is given by computing the score 
\begin{equation}\label{eq:Sik}
S_{i}^k=\sum\nolimits_{j\in C_1}{\left(
{\bf X}_{ij}\log{\hat{\mu}_{ij}^k}-\hat{\mu}_{ij}^k
\right)},
\end{equation}
where $\hat{\mu}_{ij}^k=\frac{(\sum_{i \in \hat{R}_k} {\bf X}_{ij})(\sum_{j\in C_1 \cup C_0} {\bf X}_{ij})}{\sum_{i \in \hat{R}_k} \sum_{j \in C_1 \cup C_0} {\bf X}_{ij}}$ and $\hat{R}_k$ are the cluster assignments from the last iteration. 
Finally, we assign each row $i\in [n]$ to the cluster $k\in[K]$ with the highest score $S_{i}^k$, i.e., 
$R_k=\left\{
i\in[n]\mid
k=\arg\max\nolimits_{k'\in [K]}{S_{ik'}}
\right\}$.

\subsection{Finding the Column Partitions $C_-,C_0,C_1$}\label{sec:column}

To find the column partitions, we again maximize the likelihood, keeping all other parameters fixed. To this end, we compute the contributions to the log-likelihood $L_j(g)$ for each column $j\in[m]$ and partition $g\in\{-,0,1\}$. We obtain:
\begin{align}
L_j(-)=&\sum\nolimits_{i=1}^n{\left({\bf X}_{ij}\log{\lambda^C_{j\mid -}}-\lambda^C_{j\mid -}\right)},\label{eq:Lminus}\\
L_j(0)=&\sum\nolimits_{i=1}^n{\left({\bf X}_{ij}\log{\left(\lambda^R_i\lambda^C_{j\mid 0}\right)}-\lambda^R_i\lambda^C_{j\mid 0}\right)},\label{eq:L0}\\
L_j(1)=&\sum\nolimits_{k=1}^K{\sum\nolimits_{i\in R_k}{\left({\bf X}_{ij}\log{\left(\lambda^R_i\lambda^C_{j\mid k}\right)}-\lambda^R_i\lambda^C_{j\mid k}\right)}} -b({\bf X}_{\cdot j}|K).\label{eq:L1}
\end{align}
Here, we added a penalty term motivated by the Minimum Description Length (MDL) \citep{risannen1978modeling} (see Eq. \eqref{eq:likelihood}) defined by
\begin{equation}\label{eq:Bx}
b({\bf x}|K)=\text{max}\left(0,
(K-1)\log{\sum\nolimits_{i=1}^n{\bf x}_i}\right).
\end{equation}
The intuition is that the additional information to describe parameters $\lambda^C_{j\mid k}$ requires $K-1$ cluster-specific column sums (knowing the total column sum gives the $K$-th sum), each of which can be described by $\log{\sum\nolimits_{i=1}^n{\bf X}_{ij}}$ bits or less. Other options for the penalty term, such as the Bayesian Information Criterion (BIC) \citep{schwarz1978estimating}, i.e., $b({\bf x}|K)=\frac{K}{2}\log n$, are also applicable.
To solve Problem \ref{prob:main} we simply assign column $j$ to group $C_{g}$ such that $g=\arg\max\nolimits_{g'\in\{-,0,1\}}{L_j(g')}$.

\subsection{Distance Between the Rows}\label{sec:distance}

In many practical applications, it may be helpful to have a distance measure between the rows. In particular, we use such a distance measure to obtain a better initialization of the clusters (see Sect. \ref{sec:initial}).

A natural way to define a distance between items $a,b\in[n]$ is to consider a $2\times m$ submatrix containing only these two rows. The distance can then be computed as the difference in losses of Eq. \eqref{eq:likelihood} (assuming $C_1 = [m]$, $C_-=C_0=\emptyset$) between both rows being in the same cluster ($R_1=\{a,b\}$) and in individual clusters ($R_1=\{a\}$, $R_2=\{b\}$). Doing the computations using Eqs. \eqref{eq:lambdaR}--\eqref{eq:mean1} results in the distance
\begin{align}\label{eq:distance}
d(a,b)=
\sum\nolimits_{i\in\{a,b\}}{\sum\nolimits_{j=1}^m{
{\bf X}_{ij}\log{\left({\bf X}_{ij}N_{ab}/\left(r_ic_{j}\right)\right)}}},
\end{align}
where we used $r_i=\sum\nolimits_{j=1}^m{{\bf X}_{ij}}$, $c_j={\bf X}_{aj}+{\bf X}_{bj}$, and $N_{ab}=r_a+r_b$ (details are provided in Appendix \ref{sec:distance_function_appendix}). The distance is, to a constant factor, the classical likelihood–ratio test statistic for independence in $2\times m$ contingency tables. It is always non-negative and zero only if ${\bf X}_{ij}=r_ic_j/N_{ab}$ for all $i\in \{a,b\}$ and $j\in[m]$; in particular, the distance is zero if there exists $\kappa\in{\mathbb{R}}_{>0}$ such that ${\bf X}_{aj}=\kappa\,{\bf X}_{bj}$ for all $j\in[m]$.

\begin{algorithm2e}[t!]
    \scriptsize
	\SetAlgoVlined
	\DontPrintSemicolon
	\KwIn{data set ${\bf X}$, number of clusters $K$}
	\KwOut{cluster assignments $R_k$, column values $\lambda^C_{j|k}$, column partitions $C_1, C_0, C_-$}
    // Initialization (Sect. \ref{sec:initial})\;
    $\lambda^C_{j|-} \gets \sum_{i=1}^n{\bf X}_{ij}/n \quad$ and $ \quad \lambda^C_{j|0} \gets \sum_{i=1}^n{\bf X}_{ij}$\;
    $L_j(-) \gets \sum_{i=1}^n({\bf X}_{ij} \log \lambda^C_{j|-}-\lambda^C_{j|-})$\;
    $C_1 \gets$ assign the $\lfloor \frac{m}{2} \rfloor$ columns with the hightest $h(j)$ values (Eq. \eqref{eq:columnInfo}) to $C_1$\;
    $C_- \gets [m]\setminus C_1 \quad $ and $\quad C_0 \gets \emptyset$\;
    $R_1, \dots, R_K \gets $ initial cluster assignments by $k$-Means++ with $d(a,b)$ (Eq. \eqref{eq:distance}) in $C_1$\;
    \While{$C_1$, $C_0$ or any $R_k$ changed in last iteration}{
        $\lambda_i^R, \lambda_{j|k}^C \gets$ update lambdas (Sect. \ref{sec:em1})\;
        // Update column partitions (Sect. \ref{sec:column})\;
        \For{$j \in [m]$}{
            $L_j(0) \gets \sum_{i=1}^n({\bf X}_{ij} \log(\lambda^R_i\lambda^C_{j|0})-\lambda^R_i\lambda^C_{j|0})$\;
            $L_j(1) \gets \sum_{k=1}^K \sum_{i \in R_k}({\bf X}_{ij} \log (\lambda^R_i\lambda^C_{j|k})-\lambda^R_i\lambda^C_{j|k})-b({\bf X}_{\cdot j}|K)$\;
        }
        $C_- \gets \{j \mid j \in [m] \wedge L_j(-) \ge \text{max}(L_j(0), L_j(1))\}$\;
        $C_0 \gets \{j \mid j \in [m] \wedge j \notin C_- \wedge L_j(0) \ge L_j(1)\}$\;
        $C_1 \gets [m] \setminus (C_- \cup C_0)$\;
        // Update cluster assignments (Sect. \ref{sec:em2})\;
        $R_1, \dots, R_K \gets \emptyset$\;
        \For{$i \in [n]$}{
            $k \gets \underset{k'\in [K]}{\mathrm{argmax}}~ {\sum_{j \in C_1}{ {\bf X}_{ij} \log (\hat{\mu}_{ij}^k)  - \hat{\mu}_{ij}^k}}$ (Eq. \eqref{eq:Sik})\;
            $R_k \gets R_k \cup \{i\}$\;
        }
    }
	\Return{$R_k, \lambda_{j|k}^C, C_1,C_0,C_-$}
	\caption{The \Method algorithm (without outlier detection)}
 \label{algo:algorithm}
\end{algorithm2e}

\subsection{Finding the Initial Values}\label{sec:initial}

Good initial values result in faster convergence to a better (local) optimum.

\textbf{Initial $C_-,C_0,C_1$.}
As an initial guess, we set $C_0=\emptyset$. 
Next, we use the following heuristic, motivated by Eq.~\eqref{eq:distance}, to measure the potential clustering information $h(j)$ within each column
\begin{align}\label{eq:columnInfo}
h(j)=
\left(\sum\nolimits_{i=1}^n{ 
{\bf X}_{ij}  \log{\left({\bf X}_{ij}N/\left(r_ic_{j}\right)\right)}} \right)/c_j,
\end{align}
where  $c_j=\sum\nolimits_{i=1}^n{\bf X}_{ij}$ and $N=\sum\nolimits_{j=1}^m c_j$\footnote{This formulation is equivalent to the Kullback-Leibler divergence with $p={\bf X}_{ij}/c_j$ and $q=r_i/N$.}. $C_1$ is then defined as the $\lfloor \frac{m}{2} \rfloor$ columns with the highest $h(j)$ values. We divide by $c_j$ to avoid placing disproportionate weight on columns containing high values. Lastly, we set $C_- = [m] \setminus C_1$. Alternatively, one can randomly assign the columns in $[m]$ to $C_-$ and $C_1$, so that $C_- \cap C_1 = \emptyset$.

\textbf{Initial $R_1,\ldots,R_K$.}
Using the $k$-Means++ \citep{Arthur07kmeans} seeding mechanism within $C_1$ with $d(a,b)$ as defined by Eq. \eqref{eq:distance} in place of the Euclidean distance gives us the $K$ rows ${\bf Z}\in {\mathbb{R}_{>0}}^{K\times |C_1|}$. Afterward, we receive the initial cluster assignments $R_1,\ldots,R_K$ by assigning each row in ${\bf X}$ to its best matching row within ${\bf Z}$ according to Eq.~\eqref{eq:Sik} with $\hat{R_k}=\{{\bf Z}_k\} \Rightarrow \hat{\mu}_{ij}^k= ( \sum_{j\in C_1} {\bf X} _{ij} ) ({\bf Z}_{kj} / \sum_{j \in C_1} {\bf Z}_{kj} ) $.

\subsection{Identifying Outliers}
\label{sec:outlier_detection}

In many real-world applications, data sets contain rows that deviate from the clustering model assumptions, potentially degrading clustering quality. To address this, we propose an outlier detection mechanism integrated within the \Method framework that intends to demonstrate the expandability of our method. Other outlier detection approaches are also conceivable and can be easily integrated. 

The core idea is to model outliers using a data set-wide (``background'') column parameter rather than cluster-specific values. Using such a noise distribution is a well-studied strategy in clustering, e.g., in \citep{banfield1993model}. Specifically, rows assigned to the outlier set~\(O \subseteq [n]\) have their Poisson parameters defined by:
\begin{equation}
\mu_{ij} = 
\lambda^R_i \lambda_{j\mid 0}^C,~~~{\rm if}~~~i \in O ~\wedge~ j \in C_0 \cup C_1,
\label{eq:outlier:mu}
\end{equation}
where $\bigcup_{k \in K} R_k \cup O = [n]$ and $\forall_{k \in K}~ R_k \cap O = \emptyset$.
At each iteration (or as a final step, after convergence), rows are assigned to clusters or to the outlier set based on likelihood comparison. For each row~\(i\), we compute the score for cluster assignment:
\begin{equation}
S_i^k = \sum\nolimits_{j \in C_1} \left( X_{ij} \log \hat{\mu}_{ij}^k - \hat{\mu}_{ij}^k \right) - \sum\nolimits_{j \in C_1} \left( X_{ij} \log \hat{\mu}_{ij}^0 - \hat{\mu}_{ij}^0\right),
\label{eq:outlier:score}
\end{equation}
where $\hat{\mu}_{ij}^k$ is defined in Sect.~\ref{sec:em2} and $\hat{\mu}_{ij}^0 = \frac{(\sum_{i = 1}^n {\bf X}_{ij})(\sum_{j\in C_1 \cup C_0} {\bf X}_{ij})}{\sum_{i = 1}^n \sum_{j \in C_1 \cup C_0} {\bf X}_{ij}}$.
The score \(S_i^k\) quantifies the log-likelihood difference between assigning row~\(i\) to cluster~\(k\) versus the background model. We then assign $i\in R_k$ if $\max_k S_i^k \ge 0$, otherwise $i\in O$. This criterion ensures that rows fitting any cluster better than the background model remain within a cluster. See Appendix~\ref{app:outlier} for detailed derivations and update rules.

\subsection{Convergence and Complexity}\label{sec:observations}

\textbf{Convergence.} We see that the steps described in Sects. \ref{sec:em1}, \ref{sec:em2}, and \ref{sec:column} always lead to a local optimum. Since there is a finite number of possible clustering solutions and each step is guaranteed not to increase the loss function, \Method has to converge. To empirically verify this claim, Fig. \ref{fig:convergence} shows the loss after each iteration for three data sets. It is evident that the loss is steadily decreasing. The dashes on the bottom also show that it is beneficial to integrate the selection of columns into the optimization instead of performing a simpler pre-/post-processing, as changes can happen at any time due to updates in $R_k$ which often lead to noticeable decreases in the loss.

\noindent\textbf{Complexity.} As for other clustering algorithms, except for simple methods such as $k$-Means, for which theoretical results exist \citep{Arthur07kmeans,arthur2006How}, it is challenging to prove approximability or runtime guarantees. However, we can derive the asymptotic computational complexity of our method. Calculating the column parameters $\lambda_{j|-}^C$ and $\lambda_{j|0}^C$ as well as the bias $b({\bf X}_{\cdot j}|K)$ has to be done only once, and the complexity is ${\mathcal O}(nm)$. Updating the row parameters $\lambda_i^R$ has a complexity of ${\mathcal O}(n\,|C_0\cup C_1|)$, and updating the cluster-specific column parameters $\lambda_{j|k}^C$ has a complexity of ${\mathcal O}(nm)$. 
The complexity of updating the cluster assignments $R_k$ is ${\mathcal O}(n\,|C_1|\,K)$, and for the column partitions $C_-$, $C_0$ and $C_1$ it is ${\mathcal O}(nm)$. Assuming $I$ iterations until the process converges gives the worst-case complexity of ${\mathcal O}(InmK)$, indicating that the runtime grows linearly with $I$, $n$, $m$, and $K$.

\begin{figure}[t]
    \centering
    \begin{subfigure}{0.31\textwidth}
        \includegraphics[width=\textwidth]{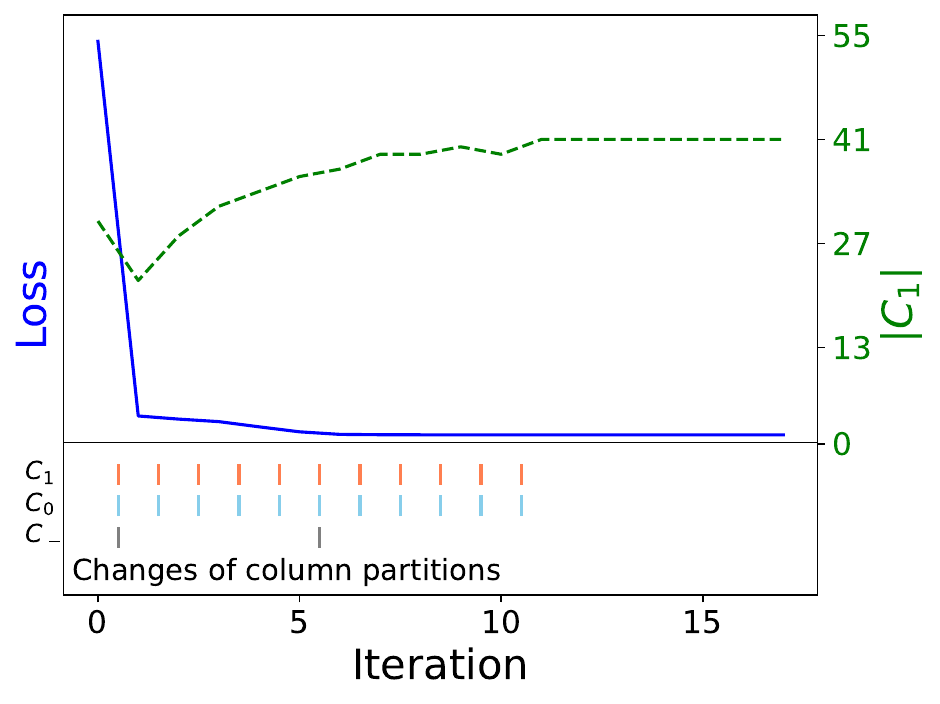}
        \caption{Data set: SportA.}
    \label{fig:loss_sportarticles}
    \end{subfigure}
    \begin{subfigure}{0.31\textwidth}
        \includegraphics[width=\textwidth]{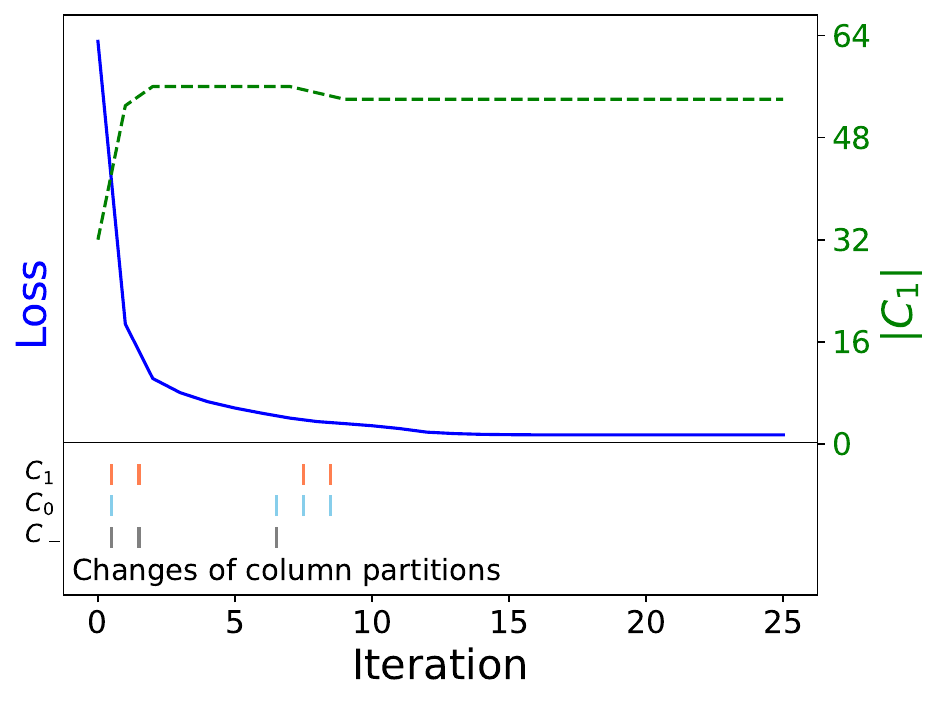}
        \caption{Data set: Optdigits.}
        \label{fig:loss_optdigits}
    \end{subfigure}
    \begin{subfigure}{0.31\textwidth}
        \includegraphics[width=\textwidth]{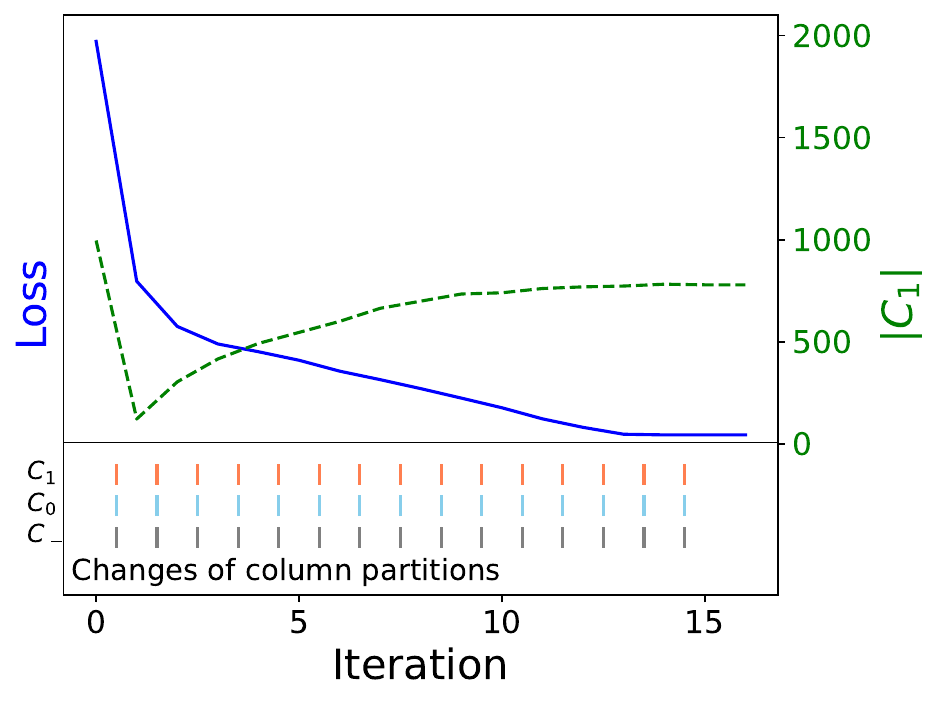}
        \caption{Data set: BBCSports.}
        \label{fig:loss_newsgroups}
    \end{subfigure}
    \caption{Loss of \Method (blue line) and number of columns in $C_1$ (green line) after each iteration. The dashes below indicate changes of $C_1$ (red), $C_0$ (blue) and $C_-$ (gray).}
    \label{fig:convergence}
\end{figure} 

\section{Experiments}\label{sec:exp}

We show the applicability of our method by evaluating its performance in various scenarios. In addition, we validate essential design decisions through ablation studies.

\subsection{Evaluation Setup}

We compare \Method with various comparison algorithms using multiple evaluation metrics and data sets from different domains.

\noindent\textbf{Comparisons.}
For comparison, we choose multiple clustering algorithms that return hard cluster labels, including the Poisson-based clustering algorithms PoissonL and PoissonC (using our proposed initialization strategy for $R_k$ - see Sect.~\ref{sec:initial}) as well as SKM. Furthermore, we consider $k$-Means (KM) in combination with various transformation functions to account for different normalizations. These are row-wise z-normalization (STD, ${\bf X}_{ij}^{STD} = ({\bf X}_{ij} - \text{mean}({\bf X}_{i \cdot}))/ \text{std}({\bf X}_{i \cdot})$), row-wise min-max normalization (MM, ${\bf X}_{ij}^{MM} = ({\bf X}_{ij} - \text{min}({\bf X}_{i \cdot}))/(\text{max}({\bf X}_{i \cdot})-\text{min}({\bf X}_{i \cdot}))$), relative frequency (RF, ${\bf X}_{ij}^{RF} = {\bf X}_{ij}/r_i$), and revealed comparative advantage (RCA, ${\bf X}_{ij}^{RCA} = {\bf X}_{ij}N/(r_i c_j)$).
Here, $r_i = \sum_{j=1}^m {\bf X}_{ij}$, $c_j = \sum_{i=1}^n {\bf X}_{ij}$, and $N=\sum_{j=1}^m c_j$. 

In addition, we consider the co-clustering algorithms CROINFO, CoclustMod (CCMod), CoclustSpecMod (CCSMod), ELBM, SELBM, and TauCC. The number of row and column clusters is set according to the ground truth and ELBM and SELBM use a Poisson model. Other parameters follow the defaults in the implementations.

All algorithms except TauCC receive the number of ground truth clusters as input parameter. As the comparison algorithms are unable to detect outliers, we use \Method without outlier detection if not explicitly mentioned otherwise to ensure a fair comparison. Outlier detection is examined separately in the end of Sect.~\ref{sec:experiment_outlier}.

Information regarding the used implementations is given in Appendix~\ref{sec:implementations}.

\noindent\textbf{Data sets.} We consider one synthetic and $11$ real-world count data sets from various domains: synthetic (Synth), \textit{Wholesales}~\citep{uciRepository}, \textit{Sport Articles} (SportA)~\citep{uciRepository}, \textit{Optdigits}~\citep{uciRepository}, \textit{BBCSports}~\citep{greene2006practical}, \textit{BBCNews}~\citep{greene2006practical}, \textit{WebKB}\footnote{\url{https://www.cs.cmu.edu/~webkb/} (accessed 07.01.2026)}, \textit{Reuters21578} (Reuters)~\citep{uciRepository}, \textit{20Newsgroups} (20NewsG)~\citep{uciRepository}, \textit{Mouse Cell Atlas} (MouseAtlas)~\citep{tran2020benchmark}, \textit{Gene Expression} (GeneExp)~\citep{uciRepository}, and \textit{Human Dendritic Cells} (HDendritic)~\citep{tran2020benchmark}. Detailed descriptions of these data sets, including potential pre-processing steps, are provided in Appendix~\ref{sec:dataset_appendix}, and the specific characteristics, i.e., the number of clusters $K$, rows $n$, columns $m$, data ranges, sparsity information, and imbalance ratios, are summarized in Table~\ref{tab:dataset_characteristics}.

\noindent\textbf{Metrics.} To evaluate our results, we consider the \textit{Unsupervised Clustering Accuracy} (ACC)~\citep{acc}, \textit{Normalized Mutual Information} (NMI)~\citep{nmi} and \textit{Adjusted Rand Index} (ARI)~\citep{ari}. These metrics compare the predicted clustering labels with ground truth labels, where $1$ indicates a perfect match and values close to $0$ a random assignment. All results are reported in \%.

\subsection{Evaluation of the Clustering Performance}

\noindent\textbf{Comparison to traditional Clustering Algorithms.}
The upper part of Table~\ref{tab:experiments_ari} shows the ARI results of \Method and the traditional clustering algorithms, i.e., those not performing a column selection (NMI and ACC results can be found in Appendix~\ref{sec:additional_results}).
Each entry corresponds to the average and standard deviation ($\pm$) of ten executions. Note that each execution itself consists of ten runs, where only the result with the best internal score (e.g., Eq.~\eqref{eq:likelihood} for \Method and inertia for $k$-Means) is considered. 

\begin{table*}[t]
\caption{ARI results (in \%) of traditional clustering (top) and co-clustering algorithms (bottom). Entries correspond to the mean of ten executions $\pm$ the standard deviation. The best result for each data set and group of algorithms (traditional/co-clustering) is marked in \textbf{bold}, the runner-up is \underline{underlined}, and the third place is \dashuline{dashed-underlined}.}
\label{tab:experiments_ari}
\resizebox{1\textwidth}{!}{
\begin{tabular}{l|ccccccccc}
\toprule
\textbf{Data set}  & 3CPO & PoissonL & PoissonC & SKM & KM & STD+KM & MM+KM & RF+KM & RCA+KM\\
\midrule
Synth & \bm{$95.5 \pm 0.0$} & $33.3 \pm 0.1$ & $29.9 \pm 0.0$ & \dashuline{$42.7 \pm 0.0$} & $0.2 \pm 0.0$ & $42.1 \pm 0.0$ & \underline{$42.9 \pm 0.1$} & $42.5 \pm 0.0$ & $0.0 \pm 0.0$\\
Wholesales & $30.1 \pm 0.0$ & \dashuline{$31.3 \pm 0.0$} & \underline{$32.8 \pm 0.0$} & $26.5 \pm 0.0$ & $-3.1 \pm 0.0$ & $28.2 \pm 0.0$ & $27.9 \pm 0.2$ & $21.3 \pm 0.0$ & \bm{$47.0 \pm 0.3$}\\
SportA & \underline{$32.3 \pm 0.5$} & \bm{$32.5 \pm 0.2$} & $21.9 \pm 1.5$ & $29.5 \pm 0.0$ & $21.3 \pm 0.0$ & \dashuline{$31.9 \pm 0.0$} & $25.9 \pm 0.1$ & $7.5 \pm 0.0$ & $0.1 \pm 0.0$\\
Optdigits & $66.7 \pm 2.2$ & $67.0 \pm 1.8$ & $45.0 \pm 3.1$ & \underline{$67.5 \pm 0.1$} & $67.1 \pm 0.2$ & \bm{$67.6 \pm 0.1$} & \dashuline{$67.3 \pm 0.3$} & $66.5 \pm 0.2$ & $0.0 \pm 0.0$\\
BBCSports & \bm{$90.1 \pm 2.9$} & \underline{$51.6 \pm 8.1$} & \dashuline{$27.7 \pm 4.0$} & $10.4 \pm 1.7$ & $0.3 \pm 0.2$ & $9.5 \pm 0.9$ & $6.9 \pm 3.6$ & $9.8 \pm 1.7$ & $3.7 \pm 2.6$\\
BBCNews & \bm{$89.9 \pm 0.9$} & \underline{$89.2 \pm 0.8$} & \dashuline{$63.4 \pm 6.7$} & $31.5 \pm 4.0$ & $6.0 \pm 0.3$ & $24.6 \pm 1.6$ & $16.4 \pm 1.5$ & $25.6 \pm 1.8$ & $15.4 \pm 12.0$\\
WebKB & \bm{$31.5 \pm 3.2$} & \underline{$25.4 \pm 0.9$} & \dashuline{$20.2 \pm 3.8$} & $11.4 \pm 0.2$ & $3.6 \pm 0.0$ & $12.0 \pm 0.2$ & $10.9 \pm 0.4$ & $11.0 \pm 2.7$ & $-0.0 \pm 0.1$\\
Reuters & \bm{$67.8 \pm 1.6$} & \underline{$65.2 \pm 5.0$} & \dashuline{$44.1 \pm 4.0$} & $21.9 \pm 0.2$ & $15.2 \pm 0.6$ & $32.5 \pm 0.1$ & $31.9 \pm 0.0$ & $3.7 \pm 4.2$ & $0.3 \pm 0.3$\\
20NewsG & \bm{$23.0 \pm 1.3$} & \underline{$22.2 \pm 1.9$} & \dashuline{$14.6 \pm 1.5$} & $2.0 \pm 0.2$ & $0.4 \pm 0.0$ & $2.1 \pm 0.1$ & $1.5 \pm 0.1$ & $1.1 \pm 0.2$ & $0.0 \pm 0.0$\\
MouseAtlas & \bm{$56.3 \pm 6.2$} & \underline{$55.4 \pm 5.0$} & $18.5 \pm 9.5$ & \dashuline{$39.3 \pm 2.6$} & $1.6 \pm 0.1$ & $38.4 \pm 1.4$ & $36.3 \pm 2.0$ & $28.6 \pm 2.6$ & $0.1 \pm 0.0$\\
GeneExp & \bm{$99.1 \pm 0.1$} & \underline{$98.7 \pm 0.1$} & $86.7 \pm 11.0$ & \dashuline{$98.5 \pm 0.0$} & $98.2 \pm 0.1$ & \bm{$99.1 \pm 0.1$} & $98.3 \pm 0.0$ & \dashuline{$98.5 \pm 0.1$} & $3.7 \pm 9.8$\\
HDendritic & $79.6 \pm 5.8$ & \bm{$83.4 \pm 5.2$} & $74.8 \pm 6.1$ & \dashuline{$79.9 \pm 0.2$} & $37.5 \pm 13.3$ & \underline{$80.2 \pm 0.4$} & $75.5 \pm 0.2$ & $37.5 \pm 13.3$ & $-0.1 \pm 0.1$\\
\bottomrule
\addlinespace[0.5pt]
\cmidrule[\heavyrulewidth]{1-8}
\textbf{Data set} & 3CPO & CROINFO & CCMod & CCSMod & ELBM & SELBM & TauCC \\
\cmidrule{1-8}
Synth & \bm{$95.5 \pm 0.0$} & \underline{$32.8 \pm 0.3$} & \dashuline{$25.2 \pm 0.1$} & $17.8 \pm 20.8$ & $9.2 \pm 8.6$ & $3.5 \pm 1.6$ & $25.1 \pm 0.6$\\
Wholesales & \dashuline{$30.1 \pm 0.0$} & \bm{$33.6 \pm 0.5$} & \underline{$32.0 \pm 0.0$} & $3.2 \pm 9.1$ & $2.0 \pm 5.0$ & $1.9 \pm 0.6$ & $27.6 \pm 0.0$\\
SportA & \bm{$32.3 \pm 0.5$} & $25.4 \pm 0.8$ & \dashuline{$30.3 \pm 0.0$} & \underline{$31.4 \pm 0.1$} & $7.1 \pm 0.0$ & $6.5 \pm 0.1$ & $15.8 \pm 15.1$\\
Optdigits & \bm{$66.7 \pm 2.2$} & \underline{$49.7 \pm 2.1$} & $21.4 \pm 1.5$ & $33.2 \pm 2.9$ & \dashuline{$45.3 \pm 3.0$} & $23.1 \pm 4.3$ & $10.6 \pm 7.5$\\
BBCSports & \bm{$90.1 \pm 2.9$} & \underline{$48.3 \pm 5.2$} & \dashuline{$41.0 \pm 4.3$} & $38.1 \pm 0.1$ & $0.3 \pm 0.1$ & $0.3 \pm 0.0$ & $26.2 \pm 6.3$\\
BBCNews & \bm{$89.9 \pm 0.9$} & $62.1 \pm 2.3$ & \underline{$71.9 \pm 1.8$} & \dashuline{$62.3 \pm 0.3$} & $4.5 \pm 0.2$ & $2.9 \pm 0.6$ & $41.8 \pm 10.5$\\
WebKB & \bm{$31.5 \pm 3.2$} & $20.1 \pm 1.4$ & \dashuline{$20.4 \pm 0.7$} & \underline{$25.8 \pm 0.1$} & $3.7 \pm 0.1$ & $0.0 \pm 0.0$ & $11.3 \pm 2.0$\\
Reuters & \bm{$67.8 \pm 1.6$} & \dashuline{$59.9 \pm 1.2$} & \underline{$64.6 \pm 4.1$} & $46.1 \pm 0.0$ & $34.6 \pm 0.3$ & $10.1 \pm 1.0$ & $43.2 \pm 0.5$\\
20NewsG & \bm{$23.0 \pm 1.3$} & \underline{$16.2 \pm 0.7$} & \dashuline{$6.9 \pm 0.6$} & $6.2 \pm 2.3$ & $0.5 \pm 0.1$ & $0.0 \pm 0.0$ & $2.1 \pm 1.5$\\
MouseAtlas & \bm{$56.3 \pm 6.2$} & \underline{$49.7 \pm 1.8$} & \dashuline{$38.4 \pm 1.8$} & $27.4 \pm 0.1$ & $1.4 \pm 0.3$ & $0.5 \pm 0.1$ & $34.4 \pm 4.6$\\
GeneExp & \bm{$99.1 \pm 0.1$} & $59.0 \pm 3.1$ & \dashuline{$72.4 \pm 2.9$} & \underline{$93.2 \pm 0.0$} & $4.3 \pm 1.4$ & $0.0 \pm 0.0$ & $53.1 \pm 9.4$\\
HDendritic & \underline{$79.6 \pm 5.8$} & \bm{$81.9 \pm 1.3$} & \dashuline{$72.2 \pm 5.0$} & $0.0 \pm 0.0$ & $45.1 \pm 4.3$ & $4.0 \pm 7.8$ & $42.3 \pm 23.0$\\
\cmidrule[\heavyrulewidth]{1-8}
\end{tabular}}
\end{table*}

We see that \Method is the top performer in eight out of twelve experiments. Considering Synth and BBCSport, it outperforms all competitors by more than $38\%$ and by more than $6\%$ in the case of WebKB. Furthermore, if \Method does not perform best, the gap to the best performer is often marginal with differences below $1\%$ in the case of SportA and Optdigits. An exception is Wholesales, where RCA+KM is by far the best algorithm, indicating why RCA is a common technique in economics. However, RCA+KM performs much worse than its competitors in all other scenarios. The only other case in which the gap to the best-performing algorithm exceeds $1\%$ is HDendritic. Here, the difference between \Method ($79.6 \pm 5.8$) and the best competitor, PoissonL ($83.4 \pm 5.2$), lies within one standard deviation, indicating that \Method often attains results comparable to the top performer. Overall, the strongest competitor is PoissonL, which can be interpreted as a special version of \Method with $C_-=C_0=\emptyset$. Nevertheless, \Method yields superior results on most high-dimensional text and biology data sets, underscoring the benefit of explicit column selection. Considering the average across all tested data sets, \Method is a good choice in unsupervised scenarios.

\noindent\textbf{Comparison to Co-Clustering Algorithms.}
The lower part of Table~\ref{tab:experiments_ari} shows the ARI results of \Method and the co-clustering clustering algorithms (NMI and ACC results can be found in Appendix~\ref{sec:additional_results}). The setting is the same as for the traditional approaches. The only exception is TauCC, where each execution corresponds to a single run as the implementation does not include an internal scoring function.

The results show that \Method ranks among the top three across all data sets. The strongest competitor is CROINFO, which outperforms \Method on Wholesales and HDendritic while maintaining a low standard deviation in almost all experiments. Considering that TauCC has to determine the number of clusters and only does a single run per execution, its performance is noteworthy as it is often in a similar region as CROINFO, CCMod, and CCSMod. In contrast, ELBM and SELBM appear to struggle with raw features, suggesting they may require additional pre-processing.

\begin{table*}[t]
\centering
\caption{ARI results (in \%) regarding text data sets pre-processed by TF-IDF and BM25. Entries correspond to the mean of ten executions $\pm$ the standard deviation. The best performance is highlighted in \textbf{bold}.}
\label{tab:tfidf_ari}
\resizebox{0.7\textwidth}{!}{
\begin{tabular}{l|ccccc}
\toprule
\textbf{Data set} & \Method & TF-IDF+KM & BM25+KM & TF-IDF+SKM & BM25+SKM\\
\midrule
BBCSports & \bm{$90.1 \pm 2.9$} & $71.0 \pm 11.4$ & $43.7 \pm 17.0$ & $76.5 \pm 7.1$ & $81.1 \pm 6.8$\\
BBCNews & $89.9 \pm 0.9$ & $84.2 \pm 2.8$ & $87.7 \pm 6.8$ & $86.4 \pm 0.8$ & \bm{$90.7 \pm 0.3$}\\
WebKB & \bm{$31.5 \pm 3.2$} & $18.8 \pm 1.0$ & $14.6 \pm 1.3$ & $22.4 \pm 1.1$ & $29.0 \pm 1.9$\\
Reuters & \bm{$67.8 \pm 1.6$}  & $19.3 \pm 0.2$ & $34.8 \pm 5.0$ & $34.0 \pm 0.2$ & $55.8 \pm 1.2$\\
20NewsG &\bm{$23.0 \pm 1.3$} & $5.6 \pm 0.6$ & $5.5 \pm 0.5$ & $8.2 \pm 0.4$ & $21.1 \pm 1.2$\\
\bottomrule
\end{tabular}
}
\end{table*}

\noindent\textbf{A detailed look at text data sets: Comparison to TF-IDF.}
To better assess the performance with respect to the text data sets, we compare \Method to $k$-Means and SKM in combination with term frequency–inverse document frequency (TF-IDF) and BM25~\citep{robertson09probabilistic} with its free parameters set to $k_\text{bm25}=1.5$ and $b_\text{bm25}=0.75$. The remaining experimental setting corresponds to Table~\ref{tab:experiments_ari}. The ARI results are shown in Table~\ref{tab:tfidf_ari} (NMI and ACC results can be found in Appendix~\ref{sec:additional_results}).
We see that \Method performs best in four out of five scenarios. It outperforms the runner-up, BM25+SKM, by a substantial margin on BBCSports and Reuters. 

In general, the experiments indicate the applicability of \Method for text data. Furthermore, as it uses raw word counts and discovers a subset of relevant columns, its results are more interpretable than those of more sophisticated word embeddings like Word2Vec~\citep{mikolov2013efficient} or BERT~\citep{devlin2019bert}. This can benefit a subsequent analysis of the results, allowing domain experts to identify key terms driving cluster definitions. Our main experiments further demonstrate that \Method can be applied to data from various domains and is not restricted to texts. 

\noindent\textbf{Performance with Outlier Detection.}\label{sec:experiment_outlier}
We further evaluate \Method when incorporating the optional ``background'' component for outlier detection. Table~\ref{tab:outlier_result} shows the results where metrics (denoted by $*$) are calculated only on non-outlier samples. Using the outlier detection method, \Method is able to improve the clustering scores for all data sets except Synth, indicating that it successfully identifies rows that are hard to assign to any specific cluster. This behavior is practically valuable since declaring uncertain samples as outliers can prevent wrong cluster assignments leading to false conclusions. 

The proportion of identified outliers varies considerably across data sets, exceeding $10\%$ for Synth, Wholesales, SportA, and MouseAtlas. While this reduces the clustering task's complexity, it also reflects genuine data characteristics where many samples lack clear cluster membership.
We want to highlight the result regarding GeneExp, where \Method achieves a perfect clustering result while only identifying $\sim14$ outliers.

Appendix~\ref{sec:outliers_optdigits} provides visual examples of outliers identified within Optdigits.

\begin{table*}
\centering
\caption{Clustering results (in \%) and the number of outliers $|O|$ when using the outlier detection feature. An asterisk $*$ indicates that only non-outliers are used for the evaluation. Entries correspond to the mean of ten executions $\pm$ the standard deviation. Improvements compared to results without outlier detection are marked in \textbf{bold}.}
\label{tab:outlier_result}
\resizebox{1\textwidth}{!}{
\begin{tabular}{l|c|c|c|c||l|c|c|c|c}
\toprule
\textbf{Data set} & $\text{ACC}^*$ & $\text{NMI}^*$ & $\text{ARI}^*$ & $|O|$ & \textbf{Data set} & $\text{ACC}^*$ & $\text{NMI}^*$ & $\text{ARI}^*$ & $|O|$ \\
\midrule
Synth & $91.9 \pm 0.0$ & $81.7 \pm 0.0$ & $85.4 \pm 0.0$ & $336 \pm 0$ & WebKB & $56.8 \pm 3.7$ & \bm{$39.3 \pm 1.1$} & \bm{$33.1 \pm 3.0$} & $431 \pm 38$ \\
Wholesales & \bm{$86.2 \pm 0.2$} & \bm{$47.4 \pm 0.5$} & \bm{$52.2 \pm 0.7$} & $144 \pm 1$ & Reuters & \bm{$85.3 \pm 7.3$} & \bm{$75.3 \pm 7.0$} & \bm{$73.9 \pm 8.2$} & $427 \pm 275$\\
SportA &  \bm{$85.3 \pm 0.1$} & \bm{$37.2 \pm 0.2$} & \bm{$49.4 \pm 0.2$} & $341 \pm 4$ & 20NewsG & \bm{$39.8 \pm 2.6$} & \bm{$42.7 \pm 1.1$} & \bm{$25.4 \pm 1.3$} & $978 \pm 49$\\
Optdigits & \bm{$82.1 \pm 2.0$} & \bm{$79.2 \pm 1.0$} & \bm{$72.6 \pm 2.4$} & $443 \pm 3$ & MouseAtlas & \bm{$72.4 \pm 7.7$} & \bm{$75.8 \pm 3.5$} & \bm{$64.5 \pm 8.2$} & $1326 \pm 84$\\
BBCSports & \bm{$98.4 \pm 1.1$} & \bm{$95.5 \pm 2.0$} & \bm{$95.9 \pm 2.7$} & $56 \pm 4$ & GeneExp & \bm{$100.0 \pm 0.0$} & \bm{$100.0 \pm 0.0$} & \bm{$100.0 \pm 0.0$} & $14 \pm 2$\\
BBCNews & \bm{$97.9 \pm 0.4$} & \bm{$93.3 \pm 0.8$} & \bm{$94.9 \pm 0.9$} & $137 \pm 11$ & HDendritic & \bm{$87.7 \pm 7.3$} & \bm{$81.5 \pm 3.7$} & \bm{$82.6 \pm 5.5$} & $22 \pm 2$\\
\bottomrule
\end{tabular}}
\end{table*}

\subsection{Analysis of the Column Partitions}
\begin{table*}[t]
\centering
\caption{Total ($m$) and cluster-relevant columns ($|C_1|$) as identified by \Method. Entries for $|C_1|$ correspond to the mean of ten executions $\pm$ the standard deviation.}
\label{tab:numberColumns}
\resizebox{1\textwidth}{!}{
\begin{tabular}{l|cccccccccccc}
\toprule
& Synth & Wholesales & SportA & Optdigits & BBCSports & BBCNews & WebKB & Reuters & 20NewsG & MouseAtlas & GeneExp & HDendritic \\
\midrule
$m$ & $6$ & $6$ & $55$ & $64$ & $2000$ & $2000$ & $2000$ & $2000$ & $2000$ & $15006$ & $20531$ & $26593$\\
$|C_1|$ & $2 \pm 0$ & $5 \pm 0$ & $41 \pm 1$ & $56 \pm 1$ & $799 \pm 8$ & $1379 \pm 6$ & $1315 \pm 18$ & $1322 \pm 8$ & $1454 \pm 15$ & $14566 \pm 59$ & $6445 \pm 3$ & $16673 \pm 406$\\
\bottomrule
\end{tabular}}
\end{table*}

In addition to the clustering performance, we analyze the number of columns selected as relevant for the clustering task, i.e., the number of columns in $C_1$. These results are shown in Table~\ref{tab:numberColumns}.
For many high-dimensional data sets such as BBCSports, BBCNews, WebKB, Reuters, 20NewsG, and GeneExp, \Method noticeable reduces the number of selected columns. 
Our main takeaway is that our approach fulfills its goal by outperforming PoissonL in most cases, while considering less columns for clustering, e.g., only $\sim 32\%$ in the case of GeneExp. Noteworthy is also the result for BBCSports, where \Method outperforms all competitors while only using $\sim 40\%$ of the columns.

\begin{figure}[t]
    \centering
    \begin{subfigure}{0.48\textwidth}
        \centering
        \includegraphics[height=0.36\textwidth]{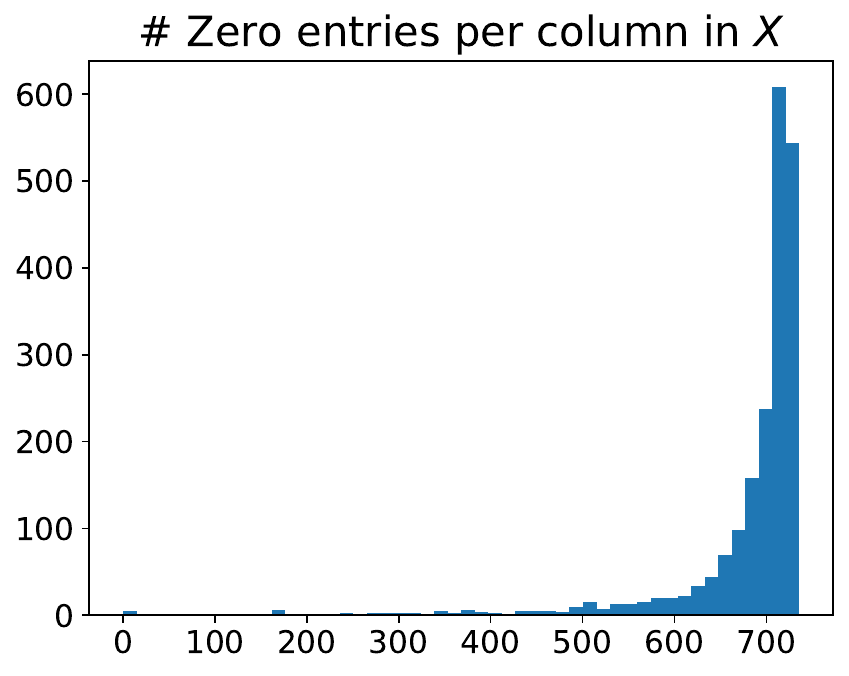}
        \includegraphics[height=0.36\textwidth]{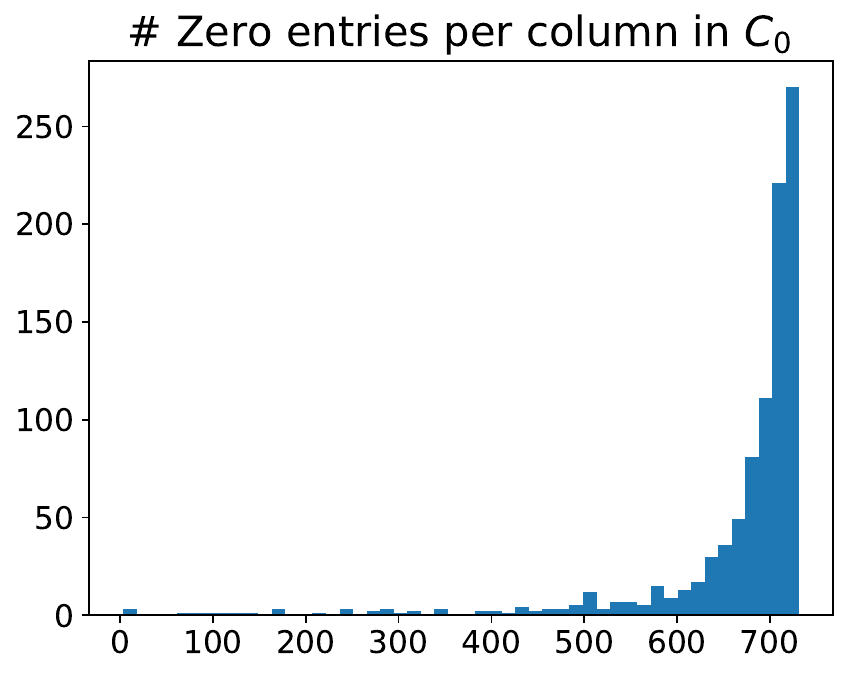}
        \includegraphics[height=0.36\textwidth]{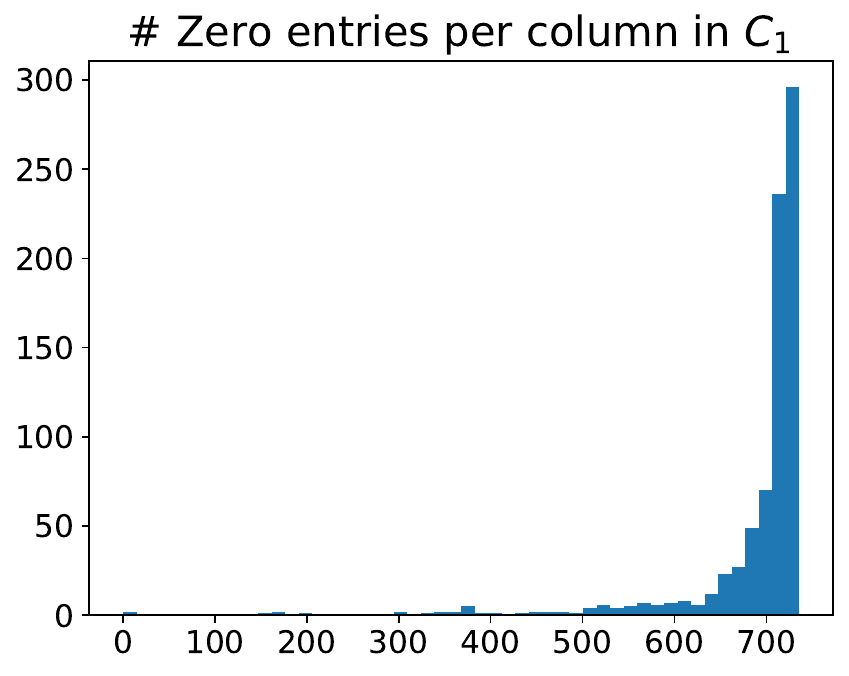}
        \includegraphics[height=0.36\textwidth]{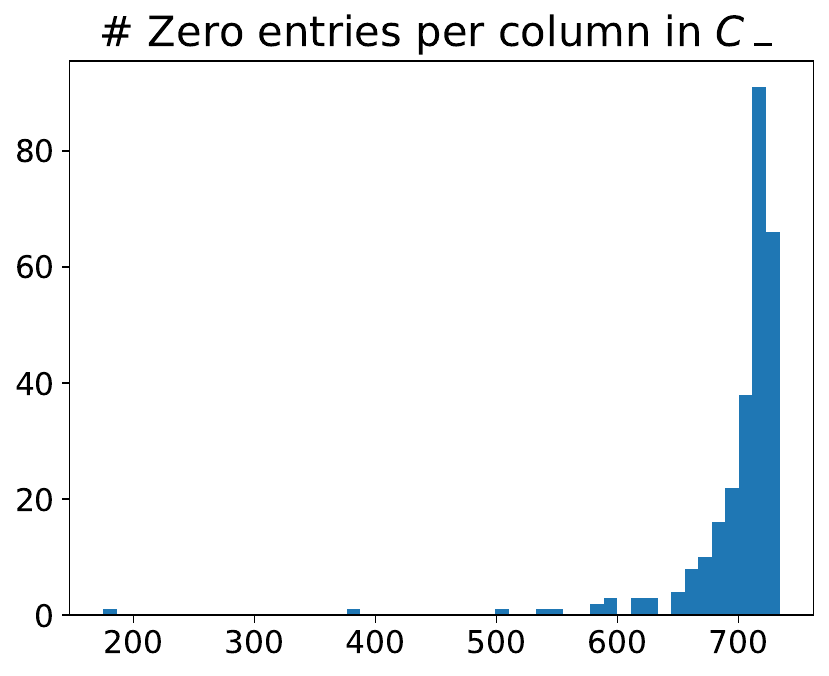}
        \caption{BBCSports data set.}
    \end{subfigure}
    \begin{subfigure}{0.48\textwidth}
        \centering
        \includegraphics[height=0.36\textwidth]{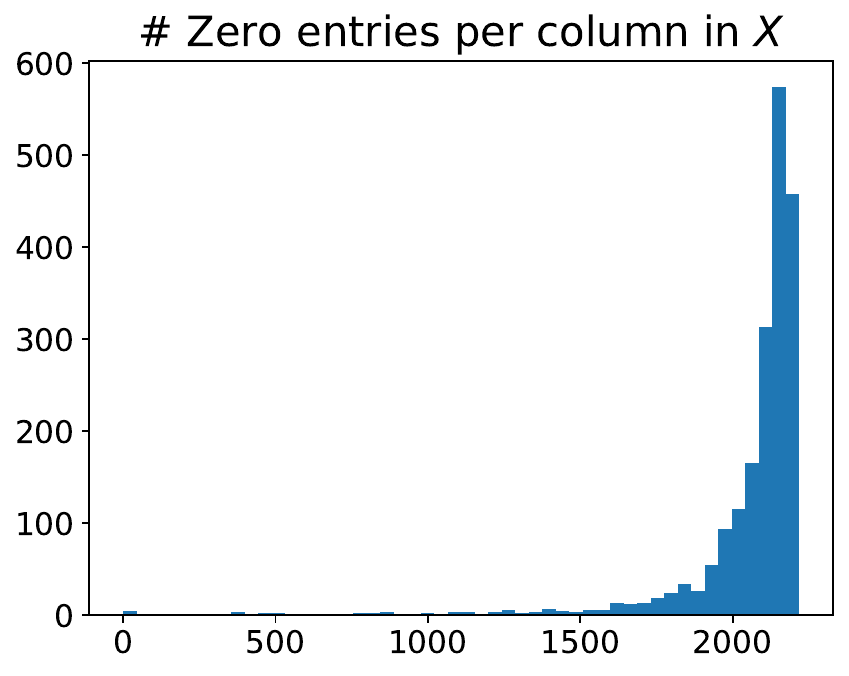}
        \includegraphics[height=0.36\textwidth]{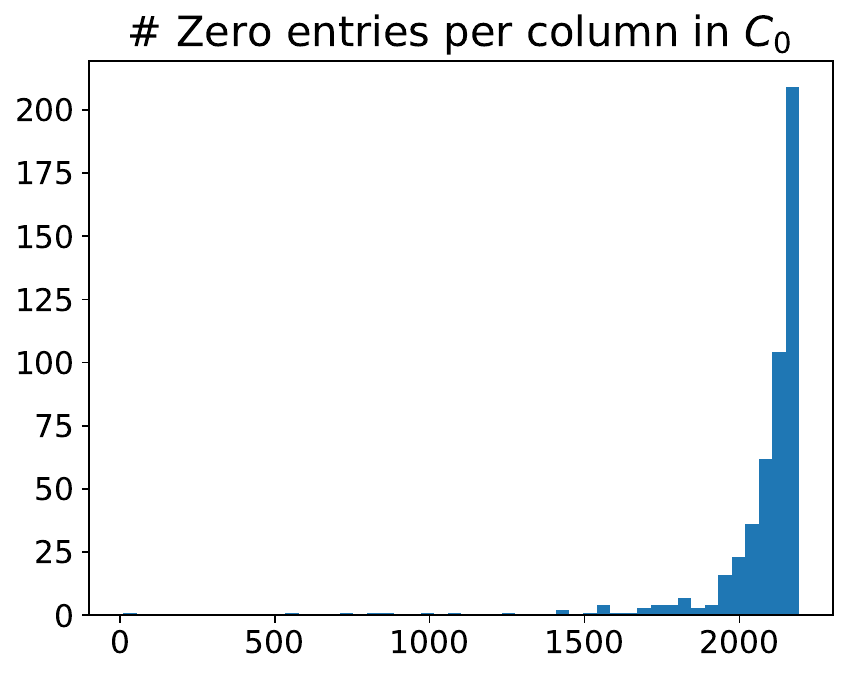}
        \includegraphics[height=0.36\textwidth]{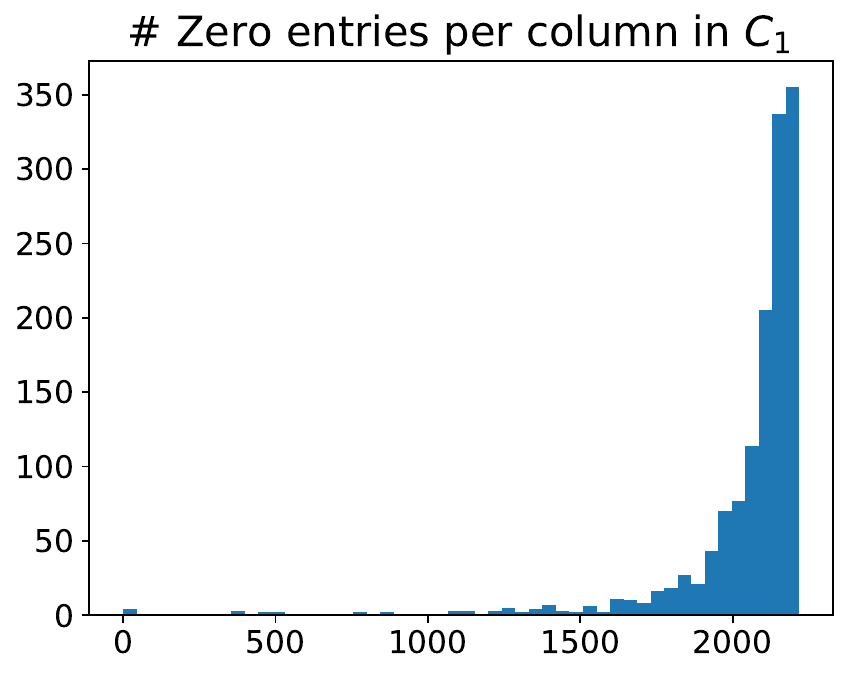}
        \includegraphics[height=0.36\textwidth]{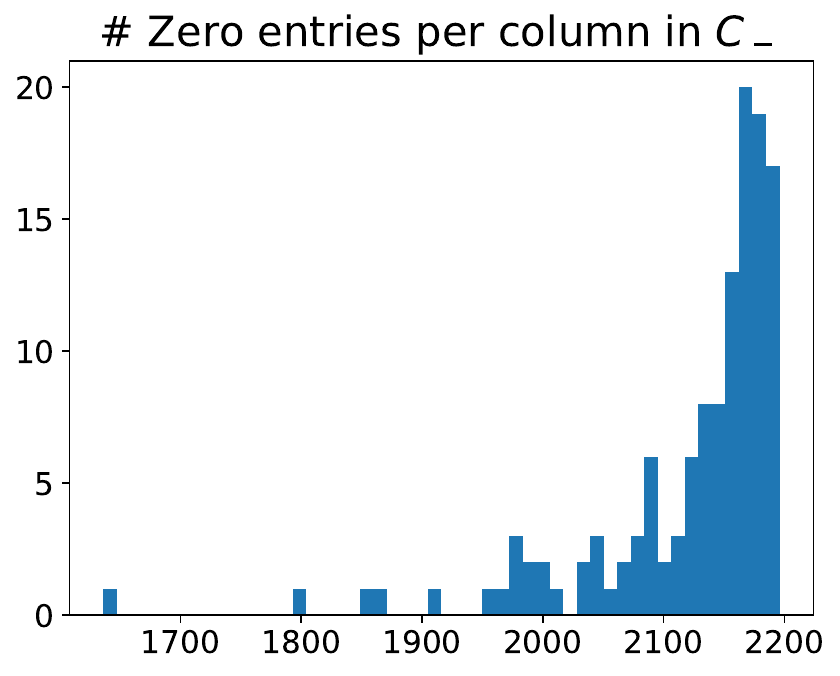}
        \caption{BBCNews data set.}
    \end{subfigure}
    \caption{Histograms showing the amount of zeros per column in each column partition.}
    \label{fig:zero_entries}
\end{figure}

As count matrices often contain many zero entries (see Table~\ref{tab:dataset_characteristics}), we further investigate whether \Method simply filters out columns that contain mostly zeros. To disprove this hypothesis, we analyze the amount of zero entries per column in the whole data ${\bf X}$ as well as in $C_1, C_0$ and $C_-$. The histograms in Fig.~\ref{fig:zero_entries} illustrate that the general distribution of the zero entries per column is similar across all column partitions. This applies to data sets where only $\sim 40\%$ of the columns are put into $C_1$ (see BBCSports) and to data sets where more than $65\%$ of the columns are assigned to $C_1$ (see BBCNews). This confirms that \Method's performance is driven by structural patterns in the count data rather than a trivial filtering of zero-heavy columns.

\begin{table*}[t]
\centering
\caption{The ten most important words (based on the largest difference $\lambda_{j|l}^C-\lambda_{j|0}^C$ for $j \in C_1$ and based on the largest $\lambda_{j|l}^C$ for $j\in C_0 \cup C_-$) within the BBCSports and BBCNews data sets per column partition and cluster as identified by \Method.}
\label{tab:detailed_columns}
\resizebox{0.96\textwidth}{!}{
\begin{tabular}{l|l|l|c}
\toprule
Data set & Partition + Cluster & Most important words w.r.t. $\lambda_{j|l}^C-\lambda_{j|0}^C$ or $\lambda_{j|l}^C$ (stemmed) & Ground Truth Fit \\
\midrule
BBCSports & $C_1 \qquad 1~ (\Rightarrow l=1)$ & `half', `six', `their', `franc', `nation', `the', `rugbi', `ireland', `wale', `england' & Rugby\\ 
BBCSports & $C_1 \qquad 2~ (\Rightarrow l=2)$ & `year', `indoor', `world', `race', `athlet', `her', `olymp', `she', `in', `the' & Athletics\\ 
BBCSports & $C_1 \qquad 3~ (\Rightarrow l=3)$ & `four', `day', `by', `run', `his', `test', `wicket', `over', `off', `ball' & Cricket\\ 
BBCSports & $C_1 \qquad 4~ (\Rightarrow l=4)$ & `the', `australian', `she', `her', `final', `in', `seed', `roddick', `set', `open' & Tennis\\ 
BBCSports & $C_1 \qquad 5~ (\Rightarrow l=5)$ & `unit', `leagu', `have', `chelsea', `club', `that', `he', `is', `we', `it' & Football\\ 
BBCSports & $C_0 \qquad ~~ (\Rightarrow l=0)$ & `as', `at', `with', `but', `was', `on', `for', `of', `and', `to' & --\\ 
BBCSports & $C_- \qquad ~ (\Rightarrow l=-)$ & `former', `move', `face', `month', `week', `old', `intern', `injuri', `last', `said' & --\\
\midrule
BBCNews & $C_1 \qquad 1~ (\Rightarrow l=1)$ & `tori', `blair', `would', `govern', `said', `parti', `elect', `labour', `he', `mr' & Politics\\ 
BBCNews & $C_1 \qquad 2~ (\Rightarrow l=2)$ & `music', `was', `for', `award', `best', `of', `in', `and', `film', `the' & Entertainment\\ 
BBCNews & $C_1 \qquad 3~ (\Rightarrow l=3)$ & `player', `after', `england', `game', `play', `win', `we', `his', `he', `but' & Sport\\ 
BBCNews & $C_1 \qquad 4~ (\Rightarrow l=4)$ & `said', `firm', `bank', `market', `compani', `us', `of', `it', `in', `the' & Business\\ 
BBCNews & $C_1 \qquad 5~ (\Rightarrow l=5)$ & `can', `user', `phone', `game', `are', `peopl', `that', `technolog', `mobil', `use' & Tech\\ 
BBCNews & $C_0 \qquad ~~ (\Rightarrow l=0)$ & `about', `new', `up', `their', `an', `this', `will', `as', `is', `to' & --\\ 
BBCNews & $C_- \qquad ~ (\Rightarrow l=-)$ & '12', `earlier', `20', `januari', `charg', `2003', `third', `dure', `follow', `week' & --\\
\bottomrule
\end{tabular}}
\end{table*}

Lastly, we provide a qualitative analysis of the column partitioning using the text data sets BBCSports and BBCNews. We identify the most discriminative words in $C_1$ for each cluster by calculating the difference $\lambda^C_{j|k}-\lambda^C_{j|0}$. Additionally, we report the most frequent words in $C_0$ and $C_-$, characterized by $\lambda^C_{j|0}$ and $\lambda^C_{j|-}$, respectively. The results are shown in Tab.~\ref{tab:detailed_columns}. \Method is able to capture essential cluster characteristics (e.g., `party' and `govern' for `Politics' or `athlete' and `olymp' for `Athletics') and also recognizes irrelevant words (e.g., stop words) by moving them into $C_0$ and $C_-$.

\subsection{Robustness to Noise Columns}

\begin{figure*}[t]
    \centering
    \begin{subfigure}{0.385\textwidth}
        \centering
        \includegraphics[width=\textwidth]{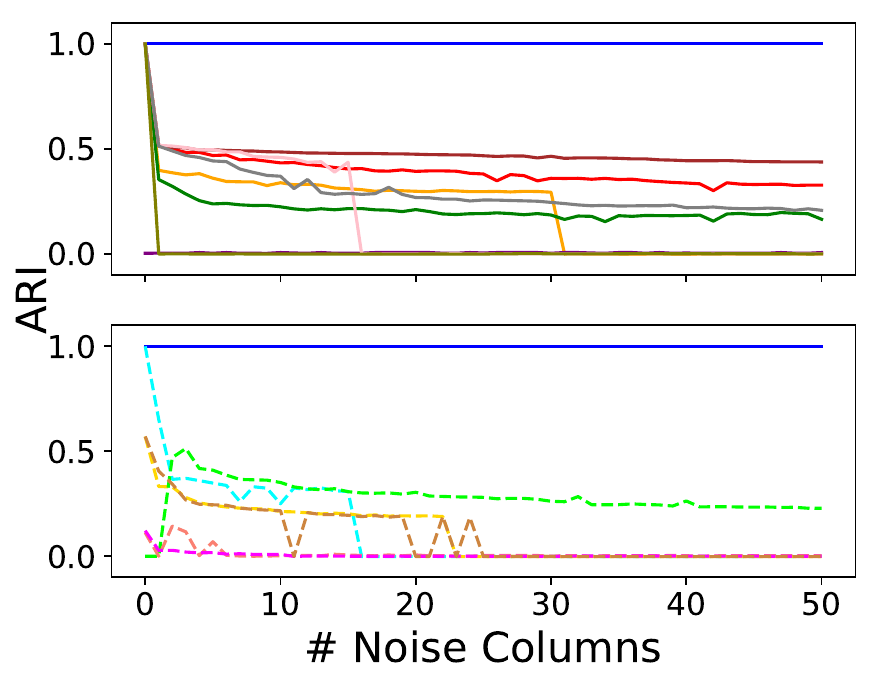}
        \caption{Increasing the amount of noise columns.}
        \label{fig:robustness_increasing_number_columns}
    \end{subfigure}
    \begin{subfigure}{0.385\textwidth}
        \centering
        \includegraphics[width=\textwidth]{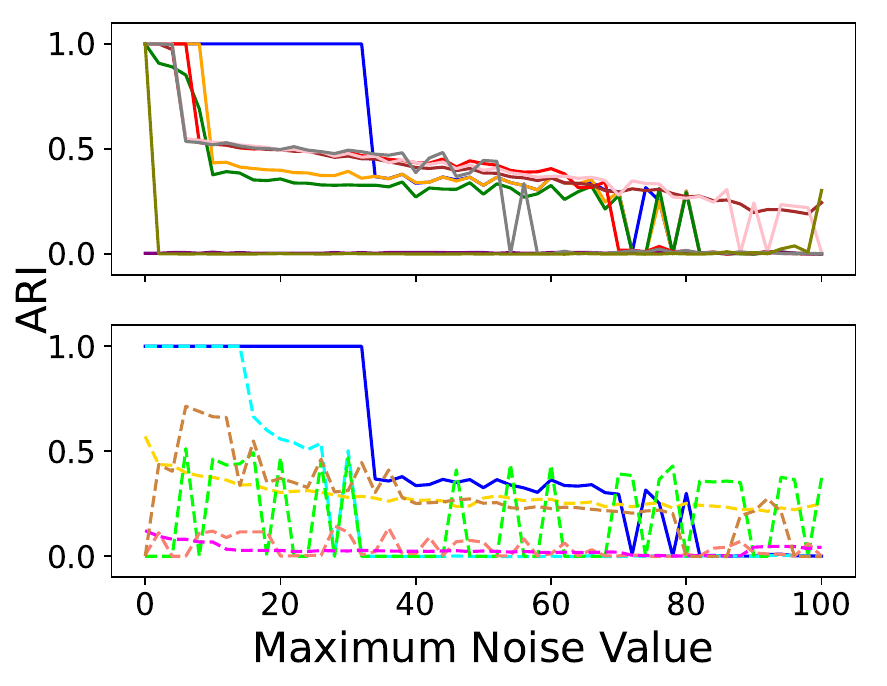}
        \caption{Increasing the maximum value within the noise column.}
        \label{fig:robustness_maximum_noise_value}
    \end{subfigure}
    \begin{subfigure}{0.21\textwidth}
        \centering
        \includegraphics[width=\textwidth]{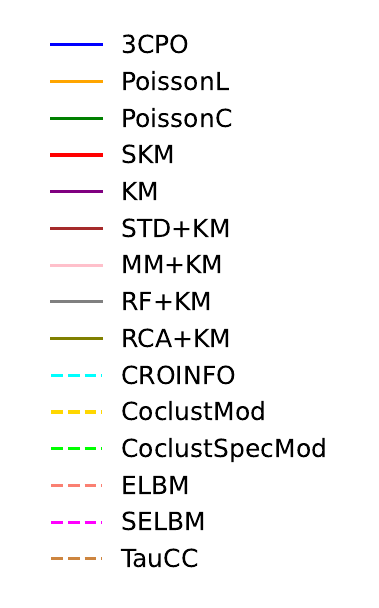}
        \vspace{0.5cm}
    \end{subfigure}
    \caption{Robustness experiments investigating the impact of columns containing uniformly distributed noise on the clustering performance (measured by ARI). The upper plots show traditional clustering algorithms and the lower plots show co-clustering algorithms. In \textbf{(a)} we increase the amount noise columns with values within $[0,15]$ and in \textbf{(b)} we have a single noise column with values within $[0,u]$ ($u$ is given on the x-axis).}
    \label{fig:robustness}
\end{figure*}

We investigate the robustness of \Method with respect to noise columns by conducting two experiments. The basis for these experiments is the Synth data set, where we only consider the first four columns ($C_1$ and $C_0$). In the first experiment (Fig. \ref{fig:robustness_increasing_number_columns}), we add an increasing number of columns containing uniformly distributed values within $[0, 15]$. The results show that \Method is the only algorithm that is not affected by these noise columns, as the ARI is constantly $1.0$, indicating that it successfully assigns all those columns into $C_-$. In the second experiment (Fig. \ref{fig:robustness_maximum_noise_value}), we add a single column containing uniformly distributed values within $[0, u]$, where $u \in [1, 100]$. Here, all traditional clustering algorithms except KM and RCA+KM perform well if $u \le 7$. With respect to the co-clustering algorithms only CROINFO returns good results if $u \le 18$. However, with increasing $u$, \Method is the only method that still returns high-quality results until around $u=32$.

\subsection{Ablation Studies}\label{sec:ablation}

We conduct a series of experiments in which certain components of our method are ignored. Here, we fix $C_{-}=\emptyset$, $C_{0}=\emptyset$, ignore the penalty for column selection by setting $b({\bf x}|K)=0$, or use BIC as the penalty for column selection as described in Sect. \ref{sec:column}. The results are summarized in Table~\ref{tab:ablation_ari}. As our proposed version of \Method never performs significantly worse than a competitor, it is considered the most reliable across all experiments. In addition, it usually identifies a solution with fewer selected columns.
We see that ignoring $b({\bf x}|K)$ and setting $C_0=\emptyset$ leads to similar results, including a significant increase in the number of selected columns without positively influencing the clustering results. This confirms our hypothesis that a penalty term is needed to identify columns that behave similarly for all clusters. In many cases, removing such columns positively influences the clustering, as less noise is included in the final result. This is particularly evident with BBCSports and 20NewsG. Without the integration of $C_-$, \Method also has trouble properly clustering WebKB. When using $b({\bf x}|K)=BIC({\bf X}_{\cdot j}|K)$ instead of the MDL-based formulation, the results are very similar, indicating stability w.r.t. the choice of $b({\bf x}|K)$. We want to emphasize that in situations where the proposed variant of \Method does not perform best (e.g., in the case of SportA), the difference is usually small and within the standard deviation.

\begin{table*}
\centering
\caption{ARI results (in \%) and the final number of columns included in $C_1$ of various ablation studies. Entries correspond to the mean of ten executions $\pm$ the standard deviation. Colors indicate whether the average performance of \Method lies above (green), within (yellow) or below (red) the standard deviation band, where above is better for ARI and below is better for $|C_1|$.}
\label{tab:ablation_ari}
\resizebox{0.65\textwidth}{!}{
\begin{tabular}{l|l|ccccc}
\toprule
\textbf{Data set} & \textbf{Metric} & 3CPO & $C_{-}=\emptyset$ & $C_{0}=\emptyset$ & $b({\bf X}_{\cdot j})=0$ & $b({\bf X}_{\cdot j})=\text{BIC}({\bf X}_{\cdot j})$\\
\midrule
Synth & ARI & $95.5 \pm 0.0$ & \cellcolor{green!30}$33.3 \pm 0.1$ & \cellcolor{yellow!30}$95.5 \pm 0.0$ & \cellcolor{yellow!30}$95.5 \pm 0.0$ & \cellcolor{yellow!30}$95.5 \pm 0.0$\\
& $|C_1|$ & $2.0 \pm 0.0$ & \cellcolor{green!30}$5.8 \pm 0.6$ & \cellcolor{green!30}$4.0 \pm 0.0$ & \cellcolor{green!30}$4.0 \pm 0.0$ & \cellcolor{yellow!30}$2.0 \pm 0.0$\\
\midrule
Wholesales & ARI & $30.1 \pm 0.0$ & \cellcolor{red!30}$31.3 \pm 0.0$ & \cellcolor{yellow!30}$30.1 \pm 0.0$ & \cellcolor{yellow!30}$30.1 \pm 0.0$ & \cellcolor{yellow!30}$30.1 \pm 0.0$\\
& $|C_1|$ & $5.0 \pm 0.0$ & \cellcolor{green!30}$6.0 \pm 0.0$ & \cellcolor{yellow!30}$5.0 \pm 0.0$ & \cellcolor{yellow!30}$5.0 \pm 0.0$ & \cellcolor{yellow!30}$5.0 \pm 0.0$\\
\midrule
SportA & ARI & $32.3 \pm 0.5$ & \cellcolor{yellow!30}$32.3 \pm 0.1$ & \cellcolor{red!30}$32.6 \pm 0.2$ & \cellcolor{red!30}$32.6 \pm 0.2$ & \cellcolor{yellow!30}$32.5 \pm 0.3$\\
& $|C_1|$ & $41.1 \pm 1.4$ & \cellcolor{green!30}$42.5 \pm 1.1$ & \cellcolor{green!30}$52.0 \pm 0.0$ & \cellcolor{green!30}$52.0 \pm 0.0$ & \cellcolor{yellow!30}$41.3 \pm 0.9$\\
\midrule
Optdigits & ARI & $66.7 \pm 2.2$ & \cellcolor{yellow!30}$66.9 \pm 1.8$ & \cellcolor{yellow!30}$66.8 \pm 2.2$ & \cellcolor{yellow!30}$66.8 \pm 2.2$ & \cellcolor{yellow!30}$66.8 \pm 2.2$\\
& $|C_1|$ & $55.8 \pm 0.6$ & \cellcolor{green!30}$56.0 \pm 0.0$ & \cellcolor{green!30}$56.0 \pm 0.0$ & \cellcolor{green!30}$62.0 \pm 0.0$ & \cellcolor{green!30}$56.0 \pm 0.0$\\
\midrule
BBCSports & ARI & $90.1 \pm 2.9$ & \cellcolor{green!30}$78.0 \pm 8.2$ & \cellcolor{green!30}$75.2 \pm 8.5$ & \cellcolor{green!30}$63.3 \pm 7.6$ & \cellcolor{yellow!30}$87.0 \pm 6.3$\\
& $|C_1|$ & $799.2 \pm 7.6$ & \cellcolor{green!30}$860.6 \pm 28.1$ & \cellcolor{green!30}$1218.8 \pm 14.5$ & \cellcolor{green!30}$1953.7 \pm 5.9$ & \cellcolor{red!30}$720.8 \pm 14.7$\\
\midrule
BBCNews & ARI & $89.9 \pm 0.9$ & \cellcolor{yellow!30}$90.1 \pm 0.7$ & \cellcolor{yellow!30}$89.6 \pm 0.7$ & \cellcolor{yellow!30}$89.5 \pm 0.4$ & \cellcolor{yellow!30}$90.1 \pm 0.7$\\
& $|C_1|$ & $1379.0 \pm 6.2$ & \cellcolor{green!30}$1437.4 \pm 4.5$ & \cellcolor{green!30}$1758.4 \pm 2.3$ & \cellcolor{green!30}$1984.8 \pm 2.3$ & \cellcolor{red!30}$1367.4 \pm 6.1$\\
\midrule
WebKB & ARI & $31.5 \pm 3.2$ & \cellcolor{green!30}$24.9 \pm 1.1$ & \cellcolor{yellow!30}$33.8 \pm 3.5$ & \cellcolor{yellow!30}$32.6 \pm 2.7$ & \cellcolor{yellow!30}$30.9 \pm 2.5$\\
& $|C_1|$ & $1314.9 \pm 18.4$ & \cellcolor{green!30}$1407.9 \pm 42.5$ & \cellcolor{green!30}$1854.2 \pm 6.8$ & \cellcolor{green!30}$1936.2 \pm 4.6$ & \cellcolor{red!30}$1282.7 \pm 19.2$\\
\midrule
Reuters & ARI & $67.8 \pm 1.6$ & \cellcolor{yellow!30}$65.7 \pm 5.1$ & \cellcolor{yellow!30}$67.2 \pm 3.3$ & \cellcolor{yellow!30}$65.4 \pm 4.2$ & \cellcolor{yellow!30}$68.3 \pm 1.3$\\
& $|C_1|$ & $1321.9 \pm 7.9$ & \cellcolor{green!30}$1390.7 \pm 29.1$ & \cellcolor{green!30}$1914.6 \pm 2.1$ & \cellcolor{green!30}$1976.0 \pm 1.7$ & \cellcolor{red!30}$1213.3 \pm 11.7$\\
\midrule
20NewsG & ARI & $23.0 \pm 1.3$ & \cellcolor{yellow!30}$23.2 \pm 1.7$ & \cellcolor{yellow!30}$22.4 \pm 1.0$ & \cellcolor{yellow!30}$22.5 \pm 0.9$ & \cellcolor{yellow!30}$22.8 \pm 2.0$\\
& $|C_1|$ & $1453.7 \pm 15.3$ & \cellcolor{yellow!30}$1456.7 \pm 13.1$ & \cellcolor{green!30}$1990.5 \pm 0.8$ & \cellcolor{green!30}$1999.0 \pm 0.0$ & \cellcolor{green!30}$1564.4 \pm 10.4$\\
\midrule
MouseAtlas & ARI & $56.3 \pm 6.2$ & \cellcolor{yellow!30}$55.4 \pm 5.0$ & \cellcolor{yellow!30}$56.3 \pm 6.2$ & \cellcolor{yellow!30}$56.3 \pm 6.2$ & \cellcolor{yellow!30}$56.2 \pm 6.2$\\
& $|C_1|$ & $14566.0 \pm 58.8$ & \cellcolor{green!30}$14753.0 \pm 14.8$ & \cellcolor{green!30}$14650.3 \pm 26.2$ & \cellcolor{green!30}$14736.9 \pm 21.3$ & \cellcolor{yellow!30}$14550.3 \pm 60.4$\\
\midrule
GeneExp & ARI & $99.1 \pm 0.1$ & \cellcolor{green!30}$98.7 \pm 0.1$ & \cellcolor{yellow!30}$99.1 \pm 0.1$ & \cellcolor{green!30}$98.8 \pm 0.1$ & \cellcolor{yellow!30}$99.0 \pm 0.2$\\
& $|C_1|$ & $6445.1 \pm 3.1$ & \cellcolor{green!30}$7652.8 \pm 3.0$ & \cellcolor{green!30}$7718.8 \pm 3.0$ & \cellcolor{green!30}$14135.0 \pm 0.0$ & \cellcolor{green!30}$8065.6 \pm 5.3$\\
\midrule
HDendritic & ARI & $79.6 \pm 5.8$ & \cellcolor{yellow!30}$83.4 \pm 5.2$ & \cellcolor{yellow!30}$79.6 \pm 5.8$ & \cellcolor{yellow!30}$79.6 \pm 5.7$ & \cellcolor{yellow!30}$79.6 \pm 5.7$\\
& $|C_1|$ & $16672.9 \pm 406.2$ & \cellcolor{green!30}$23694.0 \pm 388.1$ & \cellcolor{green!30}$17599.1 \pm 393.0$ & \cellcolor{green!30}$20094.5 \pm 274.9$ & \cellcolor{yellow!30}$16574.9 \pm 343.4$\\
\bottomrule
\end{tabular}}
\end{table*}

\subsection{Estimating the Number of Clusters}\label{sec:estimate_k}

In many unsupervised scenarios, the number of clusters $K$ is unknown to the user. Therefore, it is a significant advantage if an algorithm gives guidance on how to define this value. Similarly to our penalty term $b({\bf x}|K)$, we define an MDL-based penalty term to describe different parameterizations for $K$: $v(K)=L^0(K)+n\log K$, where $L^0(\cdot)$ is the universal prior for integers \citep{rissanen1983universal}. This term encodes the number of clusters by $L^0(K)$ and the cluster assignments by $n\log K$. Note that the cluster-specific column values are already encoded by $b({\bf x}|K)$. The penalty $v(K)$ is added to Eq. \eqref{eq:likelihood} to prevent the model from overfitting when increasing $K$.

\begin{figure}[t]
    \centering
    \begin{subfigure}{0.3\textwidth}
        \centering
        \includegraphics[width=\textwidth]{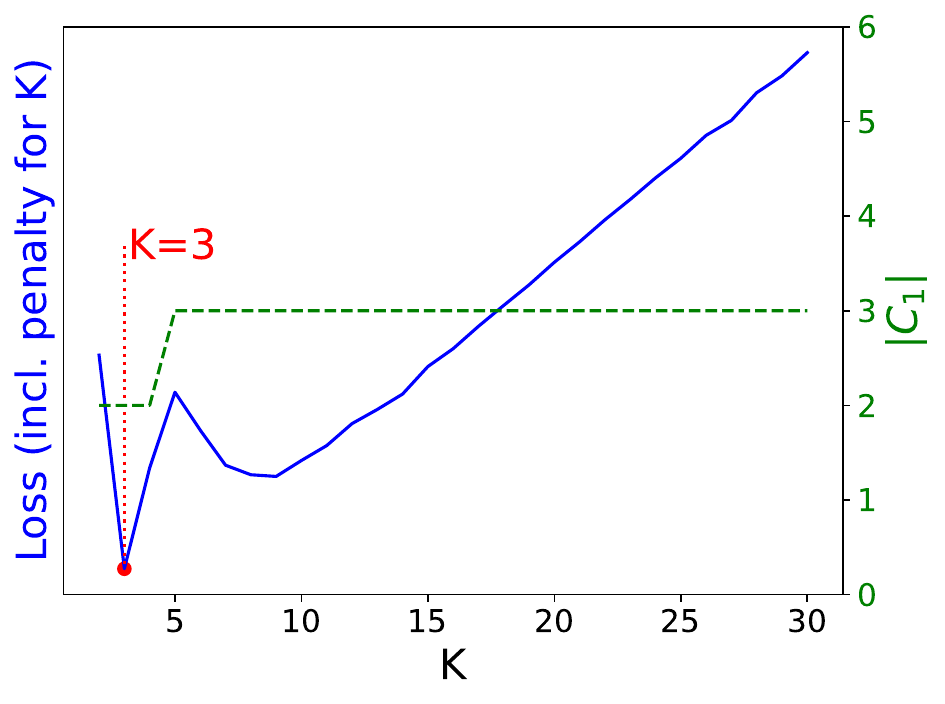}
        \caption{Synth ($K_{gt}=3$).}
    \end{subfigure}
    \begin{subfigure}{0.295\textwidth}
        \centering
        \includegraphics[width=\textwidth]{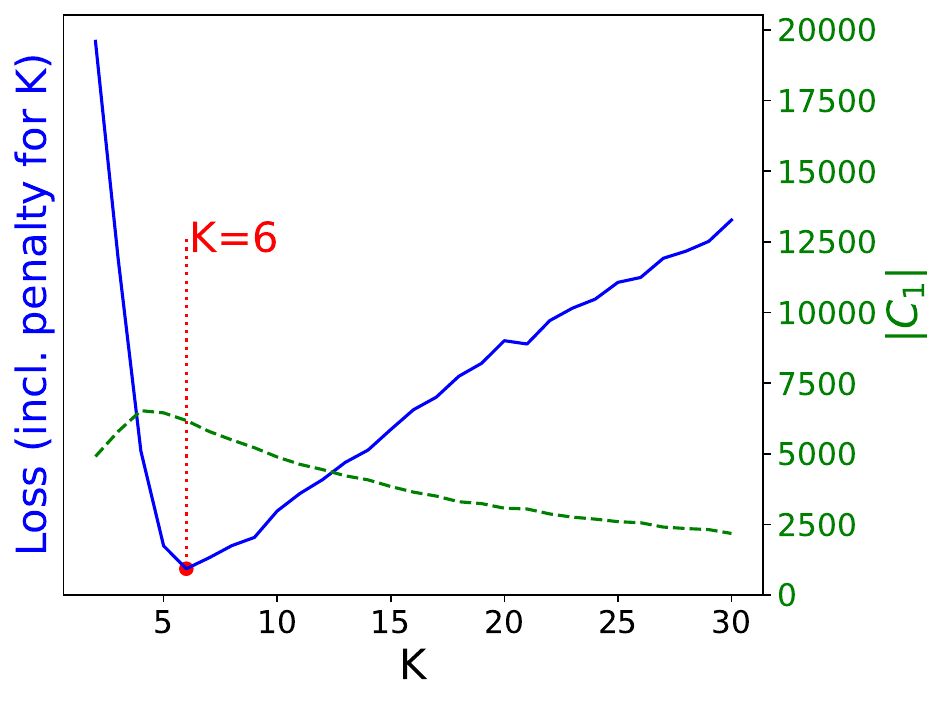}
        \caption{GeneExp ($K_{gt}=5$).}
    \end{subfigure}
    \begin{subfigure}{0.3\textwidth}
        \centering
        \includegraphics[width=\textwidth]{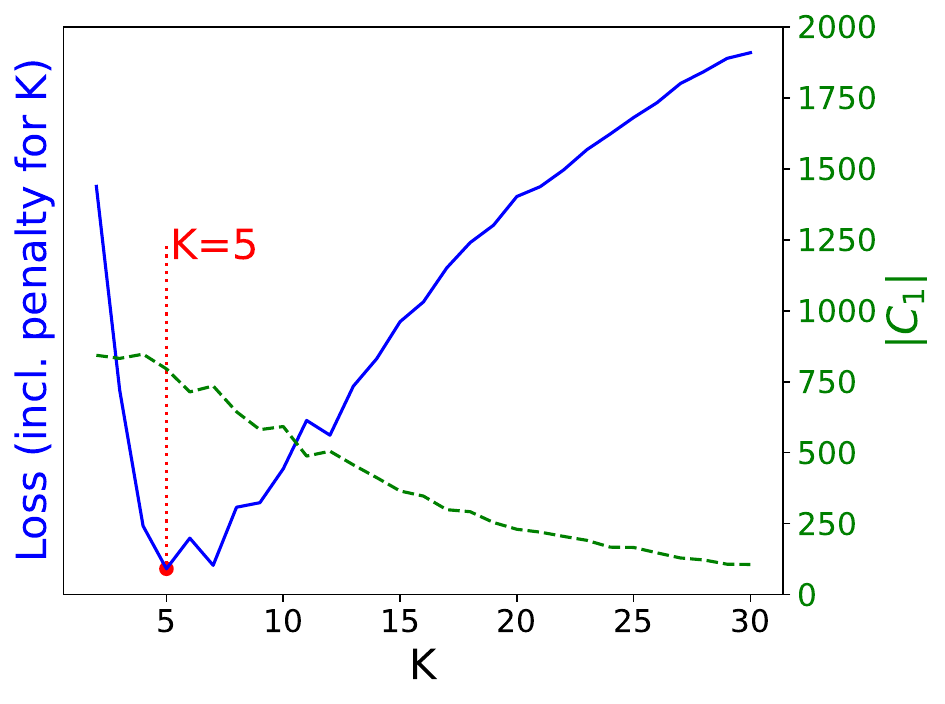}
        \caption{BBCSports ($K_{gt}=5$).}
    \end{subfigure}
    \caption{Loss of \Method~ (incl. the penalty term $v(K)$ -- blue line) and number of columns in $C_1$ (green line) for varying $K$. Each entry corresponds to the best loss after $20$ runs. The red line marks the $K$ for which the minimum loss occurs.}
    \label{fig:kEstimation}
\end{figure}

Fig. \ref{fig:kEstimation} shows the extended loss, i.e., ${\cal L}_K({\bf X})={\cal L}({\bf X})+v(K)$, and the number of columns in $C_1$ of \Method for Synth, GeneExp, and BBCSports using $K\in\{2,3,\dots,30\}$. To receive more reliable results, each entry corresponds to the best result after $20$ runs. 

We notice that the number of columns in $C_1$ decreases with increasing $K$ as $b({\bf x}|K)$ depends on $K$. Yet, \Method identifies the ground truth $K$ for Synth and BBCSports and is only off by one for GeneExp. When analyzing the identified clusters regarding GeneExp, we see that \Method splits the largest ground truth cluster with $300$ samples into two clusters of size $65$ and $235$, leading to an ARI of $87.9$. This behavior potentially offers additional insights for domain experts. In the case of Synth, we can see a conspicuous bump in the loss at $K=9$ due to the fifth column containing uniformly distributed noise. Here, \Method subdivides each ground truth cluster into three additional clusters, e.g., by dividing the noise into ranges $[0,5]$, $[6,10]$, and $[11,15]$. 

Appendix~\ref{sec:estimate_k_kmeans} provides results (estimated $K$, ARI, Purity) for all data sets, discusses $k$-Means-based $k$-estimation techniques and states results regarding the number of clusters as identified by TauCC. In general, our experiments show that \Method can obtain high quality results without knowing the ground truth number of clusters when dealing with high-dimensional text data sets but struggles with certain tabular data.

\section{Limitations}

Despite its robust clustering performance and options for interpretability, \Method possesses certain limitations that provide avenues for future research. First, the model assumes a Poisson-based structure. While effective for many count data scenarios, this may be sub-optimal for data sets exhibiting significant overdispersion or zero-inflation beyond the Poisson mean-variance identity. Extending the methodologies to other distributions such as the negative binomial would enhance its versatility. Second, the current definition of the penalty term for column selection, $b({\bf x}|K)$, and the encoding of outliers are subject to heuristic choices. As the number of clusters $K$ currently influences feature selection, the model may converge to undesired local optima in specific high-dimensional settings. Finally, another issue is that \Method requires the number of clusters to be known apriori. While we propose a heuristic to estimate the number of clusters for \Method, a more cohesive integration -- akin to the iterative splitting approach of $X$-Means \citep{pelleg2000x} -- could further improve the quality and stability of the process, increasing the applicability in real-world scenarios where the number of clusters is often unknown.

\section{Discussion and Conclusions}\label{sec:discussion}

We introduced \Method, a principled approach for clustering count data with integrated column selection that combines theoretical rigor with practical efficiency.
The method outputs a hard clustering of rows and simultaneously finds a subset of columns that are relevant for clustering. We found that \Method outperforms the compared methods in many scenarios. Notably, our approach often beats tailored bag-of-words text representations for text data, indicating its general applicability. 
Beyond competitive performance on standard benchmarks, \Method shows a superior performance in noisy settings. The integration of a Poisson-based noise model enables \Method to outperform clustering methods using normalized versions of the count matrix, e.g., by applying RCA. 
Our outlier detection feature further improves clustering quality by highlighting samples that do not fit the underlying clustering assumptions. This opens up interesting possibilities for future work, for example, by refining the outlier detection mechanism and integrating an automatic estimation of the number of clusters. 

\backmatter

\bmhead{Acknowledgements}
 We thank the Research Council of Finland (decisions 364226, 368654) and the Technology Industries of Finland Centennial Foundation for funding. We thank the Finnish Computing Competence Infrastructure (FCCI) for computational and data storage resources.

\section*{Declarations}
The authors have no competing interests to declare.

\bibliography{dami2025}

\newpage

\begin{appendices}

\section{A Poisson-based Clustering Model}\label{app:a}

The Poisson distribution is a natural foundation when working with count data. The probability mass function regarding a discrete random variable ${\bf X}_{ij}$ is defined as
\begin{align}
\label{eq:poisson}
    p({\bf X}_{ij} \mid \mu_{ij}) = \frac{\mu_{ij}^{{\bf X}_{ij}}}{({\bf X}_{ij})!}e^{-\mu_{ij}},
\end{align}
where $\mu_{ij}$ are the expected number of occurrences.
Since we assume that all entries have been drawn independently, the likelihood of a given $n \times m$ data matrix ${\bf X}$ is given by
\begin{align}
\label{eq:poisson_likelihood}
p({\bf X}\mid \mu)=\prod_{i=1}^n{\prod_{j=1}^m{p({\bf X}_{ij}\mid \mu_{ij})}}.
\end{align}
If we do not restrict the parameters $\mu_{ij}$, the solution is trivially to set the parameters equal to the observations or $\mu_{ij}={\bf X}_{ij}$. However, this will lead to overfitting because the number of parameters equals the number of observations.  Therefore, we need some simplifying modeling assumptions. We follow the definition given, e.g., in \citep{cai2004clustering,kim2007measuring}, where the mean is a product of row- and column-specific factors 
\begin{align}\label{eq:RC}
\mu_{ij}=\lambda_i^R\lambda_j^C,
\end{align}
where $\lambda_i^R\in{\mathbb{R}}_{>0}$ and $\lambda_j^C\in{\mathbb{R}}_{>0}$. This concept is closely related to \textit{Poisson Factorization}~\citep{gopalan2015scalabla}. We can compute optimal values for the parameters $\lambda_i^R$ and $\lambda_j^C$ so that the likelihood of Eq. \eqref{eq:poisson_likelihood} is maximized by applying Maximum Likelihood Estimation (MLE). Since $\text{argmax}_{\mu \in {\mathbb{R}}_{>0}} p({\bf X}\mid \mu)= \text{argmax}_{\mu \in{\mathbb{R}}_{>0}} \log (p({\bf X}\mid \mu))$ our objective can be formulated as to maximizing the following function
\begin{align}
\begin{split}
    \log \big(p({\bf X} \mid \mu)\big) & = \log \left(\prod_{i=1}^{n}\prod_{j=1}^{m} \frac{(\lambda_{i}^R\lambda_{j}^C)^{{\bf X}_{ij}}}{({\bf X}_{ij})!}e^{-(\lambda_{i}^R\lambda_{j}^C)}\right)\\
    & = \sum_{i=1}^{n}\sum_{j=1}^{m} {\bf X}_{ij} \log (\lambda_{i}^R) + {\bf X}_{ij} \log (\lambda_{j}^C)  - (\lambda_{i}^R\lambda_{j}^C)-\log(({\bf X}_{ij})!).
\end{split}
\end{align}
Setting the first derivative regarding $\lambda_{i}^R$ to zero yields
\begin{align}
\begin{split}
    0 &\stackrel{!}{=} \frac{\partial}{\partial\lambda_{i}^R}\log \big( p({\bf X}\mid \mu) \big) = \sum_{j=1}^{m} \frac{{\bf X}_{ij}}{\lambda_{i}^R} - \lambda_{j}^C\\
    \Leftrightarrow \lambda_{i}^R & = \frac{1}{\sum_{j=1}^{m} \lambda_{j}^C} \sum_{j=1}^{m} {\bf X}_{ij}.\label{eq:poisson_lambda_i}
\end{split}
\end{align}
Analogously, when considering the derivative regarding $\lambda_{j}^C$, we obtain
\begin{align}
    \lambda_{j}^C & = \frac{1}{\sum_{i=1}^{n} \lambda_{i}^R} \sum_{i=1}^{n} {\bf X}_{ij}.
\end{align}
Note that for any constant $\kappa\in{\mathbb{R}}_{>0}$, we can divide $\lambda_i^R\leftarrow\lambda_i^R/\kappa$ and multiply $\lambda_j^C\leftarrow\kappa\lambda_j^C$ without affecting the parameters $\mu_{ij}$. Therefore, w.l.o.g. we can set $\sum_{j=1}^{m} \lambda_{j}^C = 1$. This yields
\begin{align}
    \label{eq:lambdas}
    \lambda_{i}^R  = \sum_{j=1}^{m} {\bf X}_{ij} = r_i \qquad \text{and} \qquad
    \lambda_{j}^C  = \frac{1}{\sum_{i=1}^{n} r_{i}} \sum_{i=1}^{n} {\bf X}_{ij} = \frac{c_j}{\sum_{i=1}^{n} r_{i}} = \frac{c_j}{N}
\end{align}
and finally
\begin{align}
    \mu_{ij} & = \frac{r_{i} c_{j}}{\sum_{i'=1}^{n} r_{i'}} = \frac{r_{i} c_{j}}{\sum_{j'=1}^{m} c_{j'}}=\frac{r_{i} c_{j}}{N}.
\end{align}

\noindent\textbf{Standardization using the Poisson Model.} The described Poisson model can be used to apply a Poisson-based standardization (or z-normalization) to a data matrix ${\bf X}$. We show how this relates to Chi-squared and the Revealed Comparative Advantage (RCA) \citep{balassa1965Trade}. 

The standard score of an entry ${\bf X}_{ij}$ is calculated as
\begin{align}
    \text{STD}({\bf X}_{ij}) = \frac{{\bf X}_{ij} - \mu_{ij}}{\sigma_{ij}}.
\end{align}
Considering that in the case of the Poisson distribution $\mu=\sigma^2$, the Poisson-based standard score can be formulated as
\begin{align}
\label{eq:pstd}
    \text{P-STD}({\bf X}_{ij}) = \frac{{\bf X}_{ij} - \mu_{ij}}{\sqrt{\mu_{ij}}}.
\end{align}
Note that this formulation is equal to the square root of Pearson's Chi-squared test, i.e., P-STD$({\bf X}_{ij}) = \sqrt{\mathcal{X}^2({\bf X}_{ij})}$.

Furthermore, we can consider a generalized version of P-STD with a weighting factor $\alpha$ for the denominator (which is set to $0.5$ in Eq. \eqref{eq:pstd}).
\begin{align}
     \text{P-STD}({\bf X}_{ij}, \alpha) = \frac{{\bf X}_{ij} - \mu_{ij}}{(\mu_{ij})^\alpha}
\end{align}
If $\alpha=1$, this formulation is equal to the RCA up to a constant of $-1$, i.e.,
\begin{align}
     \text{P-STD}({\bf X}_{ij}, 1) = \frac{{\bf X}_{ij} - \mu_{ij}}{(\mu_{ij})^1}=\frac{{\bf X}_{ij}N}{r_ic_j} - 1= \text{RCA}({\bf X}_{ij})-1.
\end{align}
As P-STD with $\alpha=0.5$ builds on a solid statistical model that takes into account increasing variances for larger expected values, we argue that it is more robust with regard to noisy data than RCA. 

\subsection{PoissonL/PoissonC: Poisson-based Clustering}
\label{appendix:poissonl}
The described Poisson model was used to propose the PoissonL and PoissonC algorithms \citep{cai2004clustering} for clustering gene expression data. However, as shown in our experiments, they are not only applicable to gene expression data but also more generally to count data. The objective is to divide the data into $K\in \mathbb{N}_{>0}$ sets of rows $R_k$ that show a similar relationship between the columns and, therefore, can be modeled using the described Poisson model by representing each set of rows $R_k$ using a specific $\lambda_{j|k}^C \in{\mathbb{R}}_{>0}$. This can be formulated as minimizing the loss function 
\begin{align}
\begin{split}
    \label{eq:rewardPoissonL}
    \mathcal{L}({\bf X}) &=-\sum_{k=1}^K\sum_{i \in R_k}\sum_{j=1}^{m} \log(p({\bf X}_{ij} \mid \lambda_i^R \lambda_{j|k}^C))\\
    & = -\sum_{k=1}^K\sum_{i \in R_k}\sum_{j=1}^{m} {\bf X}_{ij} \log (\lambda_{i}^R) + {\bf X}_{ij} \log (\lambda_{j|k}^C)  - \lambda_{i}^R\lambda_{j|k}^C-\log(({\bf X}_{ij})!).
    \end{split}
\end{align}
As we are only optimizing with respect to $\lambda_{j|k}^C$ and $R_k$, this can be simplified to
\begin{align}
\label{eq:poissonL_Loss}
     \mathcal{L}({\bf X}) &=-\sum_{k=1}^K\sum_{i \in R_k}\sum_{j=1}^{m} {\bf X}_{ij} \log (\lambda_{j|k}^C)  - \lambda_{i}^R\lambda_{j|k}^C+\text{const}.
\end{align}
Using this definition, we perform an iterative approach to update the cluster assignments $R_k$ and cluster-specific $\lambda_{j|k}^C$ values. 

We start with initial cluster assignments, e.g., by random initialization or by utilizing more sophisticated initialization strategies like the one proposed in Sect. \ref{sec:initial}, which builds on $k$-Means{+}{+} \citep{Arthur07kmeans}. Afterward, we repeat two steps until convergence. First, we compute the cluster-specific column values
\begin{align}
    \label{eq:lambdaCUpdate}
    \lambda_{j|k}^C = \frac{1}{\sum_{i \in R_k} r_{i}} \sum_{i \in R_k} {\bf X}_{ij},
\end{align}
where the column parameters are scaled such that $\forall_{k \in [K]}~\sum_{j=1}^m \lambda_{j|k}^C=1$ and $\lambda_{i}^R$ is defined as in Eq. \eqref{eq:lambdas}. 
Second, we update the cluster assignments by choosing the cluster that gives the highest log-probability, considering the new column values
\begin{align}
    \label{eq:assignmentUpdate}
    R_k=\{i \in [n] \mid k = \text{argmax}_{k'\in [K]} \sum_{j=1}^{m} {\bf X}_{ij} \log (\lambda_{j|k'}^C)  - \lambda_{i}^R\lambda_{j|k'}^C\}.
\end{align}
PoissonC \citep{cai2004clustering} uses an alternative strategy to update the cluster assignments by choosing the cluster that gives the minimum Chi-squared value.
The complete process is given in Algorithm \ref{algo:poissonmeans}.

\begin{algorithm2e}[t]
	\SetAlgoVlined
	\DontPrintSemicolon
	\KwIn{data set ${\bf X}$, number of clusters $K$}
	\KwOut{the cluster assignments $R_k$, the cluster-specific column values $\lambda_{j|k}^C$}
    // Initialization\;
    $\lambda^R_i \gets \sum_{j=1}^m {\bf X}_{ij}$\;
    $R_1, \dots, R_K \gets $ initialize cluster assignments (e.g., by $k$-Means{+}{+})\;
    \While{any $R_k$ changed in last iteration}{
        // Update lambdas\;
        \For{$k \in [K]$}{
            \For{$j \in [m]$}{
                $\lambda_{j|k}^C \gets \frac{1}{\sum_{i \in R_k} r_{i}} \sum_{i \in R_k} {\bf X}_{ij}$\;
            }
        }
        // Update cluster assignments\;
        $R_1, \dots, R_K \gets \emptyset$\;
        \For{$i \in [n]$}{
            \uIf{Algorithm is PoissonL}{
                $k \gets \text{argmax}_{k'\in [K]} \sum_{j=1}^{m} {\bf X}_{ij} \log (\lambda_{j|k'}^C)  - \lambda_{i}^R\lambda_{j|k'}^C$\;
            }
            \uElseIf{Algorithm is PoissonC}{
                $k \gets \text{argmin}_{k'\in [K]} \sum_{j=1}^{m} \frac{({\bf X}_{ij} - \lambda_{i}^R\lambda_{j|k'}^C)^2}{\lambda_{i}^R\lambda_{j|k'}^C}$\;
            }
            $R_k \gets R_k \cup \{i\}$\;
        }
    }
	\Return{$R_k, \lambda_{j|k}^C$}
	\caption{The PoissonL/PoissonC algorithms}
 \label{algo:poissonmeans}
\end{algorithm2e}

\subsection{\Method: Poisson-based Subspace Clustering}
\label{sec:3cpo_appendix}

The PoissonL and PoissonC algorithms have two main \textbf{assumptions} regarding the data matrix $\bf X$:
\begin{enumerate}
    \item All entries within row $i$ are scaled by the row-specific value $\lambda^R_i$. \label{item:scaling}
    \item The values in cluster $k$ all follow the proportions as defined in the column-specific values $\lambda^C_{j|k}$ and are different to other clusters. \label{item:clusterProportions}
\end{enumerate}
If one of these properties is violated, the quality of the clustering result can suffer greatly. Therefore, our proposed algorithm \Method automatically discovers noisy columns and defines a subspace that only contains columns that fulfill the mentioned assumptions. This also has the advantage that the resulting clustering solution is easier to interpret, as only a subset of the columns has to be analyzed. \Method identifies relevant columns by optimizing the following loss function
\begin{align}
\begin{split}
    \mathcal{L}({\bf X})= -\sum\nolimits_{i = 1}^n & \sum\nolimits_{j=1}^m {\log{p({\bf X}_{ij},\mu_{ij})}}  +  \sum_{j\in C_1}{b({\bf X}_{\cdot j}|K)}\\
    =- \sum_{k=1}^K \sum_{i\in R_k} \Bigg(
    &\overbrace{\sum_{j \in C_-} \log \big(p({\bf X}_{ij} \mid \lambda_{j|-}^C)\big)}^\text{columns violating assumption (\ref{item:scaling})}
    + \overbrace{\sum_{j \in C_0} \log \big(p({\bf X}_{ij} \mid \lambda_i^R \lambda_{j|0}^C) \big)}^\text{columns violating assumption (\ref{item:clusterProportions})}\\
    + &\underbrace{\sum_{j \in C_1} \log \big(p({\bf X}_{ij} \mid \lambda_i^R \lambda_{j|k}^C) \big)}_\text{relevant columns for clustering}
    \Bigg) 
    + \underbrace{\sum_{j \in C_1} b({\bf X}_{\cdot j}|K),}_\text{column selection penalty term}
\end{split}
\end{align}
where $C_-$, $C_0$ and $C_1$ are non-overlapping partitions of the $m$ columns so that $C_- \cup C_0 \cup C_1 = [m]$ and $b({\bf X}_{\cdot j}|K)$ is a penalty term explained in Sect.~\ref{sec:column}. Note that columns that do not follow assumption (\ref{item:scaling}) are contained in $C_-$ and do not use the row-specific values $\lambda_i^R$ to calculate the expected value.

Given an input matrix ${\bf Y}$, we set ${\bf X}_{ij} = {\bf Y}_{ij} + \alpha - 1$, as defined in the main paper. Considering constant terms that are not relevant for the optimization yields
\begin{align}
\begin{split}
    \mathcal{L}({\bf X})= -\sum_{k=1}^K \sum_{i\in R_k} \Bigg( 
    &\sum_{j \in C_-} {\bf X}_{ij}\log(\lambda_{j|-}^C)-\lambda_{j|-}^C
    + \sum_{j \in C_0} {\bf X}_{ij} \log(\lambda_{j|0}^C)-\lambda_i^R \lambda_{j|0}^C\\
    +& \sum_{j \in C_1} {\bf X}_{ij} \log(\lambda_{j|k}^C)-\lambda_i^R \lambda_{j|k}^C
     + \sum_{j \in C_0 \cup C_1} {\bf X}_{ij} \log(\lambda_{i}^R)
    \Bigg) \\
    +& \sum_{j \in C_1} b({\bf X}_{\cdot j}|K) + \text{const.}.
\label{eq:lossSubspaceClustering}
\end{split}
\end{align}

The lambda parameters can be optimized again by performing MLE. This gives us
\begin{align}
    \label{eq:lambda_i_update_appendix}
    \lambda_{i}^R & = \frac{1}{\sum_{j \in C_1} \lambda_{j|k}^C + \sum_{j \in C_0}\lambda_{j|0}^C} \sum_{j \in C_0 \cup C_1} {\bf X}_{ij},\quad~~~{\rm if}~~~ i \in R_k,\\
    \lambda_{j|-}^C & = \frac{1}{n} \sum_{i = 1}^n {\bf X}_{ij},\\
    \lambda_{j|0}^C & = \frac{1}{\sum_{i =1}^n \lambda_{i}^R} \sum_{i = 1}^n {\bf X}_{ij},&\\
    \label{eq:lambda_jk_update_appendix}
    \lambda_{j|k}^C & = \frac{1}{\sum_{i \in R_k} \lambda_{i}^R} \sum_{i \in R_k} {\bf X}_{ij}.
\end{align}

Since $\lambda^R_i$ depends on $\lambda_{j|0}^C$ and $\lambda_{j|k}^C$, we cannot follow the strategy proposed before and simply set the sum of the cluster-specific column values equal to one. Since $\lambda_{j|0}$ is shared across all clusters, the factor $\kappa$ used for scaling $\lambda_i^R \gets \kappa \lambda_{i}^R$ and $\lambda_{j|l}^C \gets \frac{1}{\kappa}\lambda_{j|l}^C$ has to be equal for all $l\in \{0\}\cup [K]$ so that $\lambda_i^R\lambda_{j|l}^C=\mu_{ij}$ is fulfilled. However, w.l.o.g., we can scale $\sum_{i=1}^n\lambda^R_i=1$. It follows that
\begin{align}
    \lambda_{j|0}^C = \sum_{i=1}^n {\bf X}_{ij} \quad \Rightarrow \quad \sum_{j \in C_0} \lambda_{j|0}^C = \sum_{i=1}^n \sum_{j \in C_0} {\bf X}_{ij}.
\end{align}
Furthermore, we consider the following formulations (based on Eq. \eqref{eq:lambda_i_update_appendix} and \eqref{eq:lambda_jk_update_appendix}) regarding cluster $k$
\begin{align}
    \label{eq:sum_c1_lambda_jk_appendix}
    \sum_{j \in C_1} \lambda_{j|k}^C = \frac{\sum_{i \in R_k} \sum_{j \in C_1} {\bf X}_{ij}}{\sum_{i \in R_k} \lambda_i^R}
\end{align}
and
\begin{align}
\begin{split}
    \sum_{i \in R_k} \lambda_i^R =& \frac{\sum_{i \in R_k} \sum_{j\in C_0 \cup C_1} {\bf X}_{ij}}{\sum_{j \in C_1} \lambda_{j|k}^C + \sum_{j \in C_0} \lambda_{j|0}^C}\\
    \Rightarrow \sum_{j \in C_1} \lambda_{j|k}^C =& \frac{\sum_{i \in R_k} \sum_{j\in C_0 \cup C_1} {\bf X}_{ij}}{\sum_{i \in R_k} \lambda_i^R} - \sum_{j \in C_0} \lambda_{j|0}^C.
\end{split}
\end{align}
Inserting Eq. \eqref{eq:sum_c1_lambda_jk_appendix} leads to
\begin{align}
\begin{split}
    \frac{\sum_{i \in R_k} \sum_{j \in C_1} {\bf X}_{ij}}{\sum_{i \in R_k} \lambda_i^R} = &\frac{\sum_{i \in R_k} \sum_{j\in C_0 \cup C_1} {\bf X}_{ij}}{\sum_{i \in R_k} \lambda_i^R} - \sum_{j \in C_0} \lambda_{j|0}^C\\
    \Rightarrow \sum_{i \in R_k} \lambda_i^R = &\frac{\sum_{i \in R_k} \sum_{j\in C_0 \cup C_1} {\bf X}_{ij}- \sum_{i \in R_k} \sum_{j \in C_1} {\bf X}_{ij}}{\sum_{j \in C_0}  \lambda_{j|0}^C}\\
    \Rightarrow \sum_{i \in R_k} \lambda_i^R = &\frac{\sum_{i \in R_k} \sum_{j\in C_0} {\bf X}_{ij}}{\sum_{i=1}^n \sum_{j \in C_0} {\bf X}_{ij}}.
\end{split}
\end{align}
This formulation can be used to compute $\lambda_{j|k}^C$ as
\begin{align}
    \lambda_{j|k}^C = \frac{(\sum_{i \in R_k} {\bf X}_{ij}) (\sum_{i=1}^n \sum_{j \in C_0} {\bf X}_{ij})}{\sum_{i \in R_k} \sum_{j\in C_0} {\bf X}_{ij}}.
\end{align}
Lastly, for $i \in R_k$ this yields
\begin{align}
\begin{split}
    \lambda_i^R =&\frac{\sum_{j\in C_0 \cup C_1} {\bf X}_{ij}}{\sum_{j \in C_1} \frac{(\sum_{i \in R_k} {\bf X}_{ij}) (\sum_{i=1}^n \sum_{j \in C_0} {\bf X}_{ij})}{\sum_{i \in R_k} \sum_{j\in C_0} {\bf X}_{ij}} + \sum_{i=1}^n \sum_{j \in C_0} {\bf X}_{ij}} \\
    =&\frac{(\sum_{j\in C_0 \cup C_1} {\bf X}_{ij})(\sum_{i \in R_k} \sum_{j\in C_0} {\bf X}_{ij})}{(\sum_{i=1}^n \sum_{j \in C_0} {\bf X}_{ij}) (\sum_{i \in R_k} \sum_{j \in C_1} {\bf X}_{ij} + \sum_{i \in R_k} \sum_{j\in C_0} {\bf X}_{ij})} \\
    =&\frac{(\sum_{j\in C_0 \cup C_1} {\bf X}_{ij}) (\sum_{i \in R_k} \sum_{j\in C_0} {\bf X}_{ij})}{(\sum_{i=1}^n \sum_{j \in C_0} {\bf X}_{ij})(\sum_{i \in R_k} \sum_{j \in C_1 \cup C_0} {\bf X}_{ij})}.
\end{split}
\end{align}

When using these definitions, we have to make sure that $C_0 \neq \emptyset$, because the denominator in the formulation of $\lambda_i^R$ and $\lambda_{j|k}^C$ would be zero if $C_0 = \emptyset$. In this case, we can set $\sum_{i \in R_k} \lambda_i^R = \frac{\sum_{i \in R_k}\sum_{j \in C_1}{\bf X}_{ij}}{\sum_{i=1}^n\sum_{j \in C_1}{\bf X}_{ij}}$ for all $k \in [K]$ as the $\lambda_{j|k}^C$ parameters are independent. This fulfills the assumption $\sum_{i=1}^n \lambda_i^R=1$ and leads to
\begin{align}
    \lambda_{j|k}^C=\frac{\sum_{i=1}^n\sum_{j \in C_1}{\bf X}_{ij}}{\sum_{i \in R_k}\sum_{j \in C_1}{\bf X}_{ij}} \sum_{i \in R_k} {\bf X}_{ij},\quad~~~{\rm if}~~~ C_0 = \emptyset
\end{align}
and 
\begin{align}
    \lambda_{i}^R=\frac{1}{\sum_{i=1}^n\sum_{j \in C_1}{\bf X}_{ij}}\sum_{j \in C_1} {\bf X}_{ij},\quad~~~{\rm if}~~~ C_0 = \emptyset.
\end{align}

These results can be summarized as
\begin{align}\label{eq:summary_lambda_appendix}
\lambda^C_{j\mid -}&=\sum\nolimits_{i=1}^n{{\bf X}_{ij}/n},\\
\lambda^C_{j\mid 0}&=\sum\nolimits_{i=1}^n{{\bf X}_{ij}},\\
\lambda^C_{j\mid k}&=\begin{cases}
    \frac{(\sum_{i \in R_k} {\bf X}_{ij}) (\sum_{i=1}^n \sum_{j \in C_0} {\bf X}_{ij})}{\sum_{i \in R_k} \sum_{j\in C_0} {\bf X}_{ij}},&~~~{\rm if}~~~ C_0 \neq \emptyset\\
    \frac{(\sum_{i \in R_k} {\bf X}_{ij}) (\sum_{i=1}^n \sum_{j \in C_1} {\bf X}_{ij})}{\sum_{i \in R_k} \sum_{j\in C_1} {\bf X}_{ij}},&~~~{\rm if}~~~ C_0 = \emptyset
\end{cases}\\
\lambda^R_i&=\begin{cases}
\frac{(\sum_{j\in C_0 \cup C_1} {\bf X}_{ij}) (\sum_{i \in R_k} \sum_{j\in C_0} {\bf X}_{ij})}{(\sum_{i=1}^n \sum_{j \in C_0} {\bf X}_{ij})(\sum_{i \in R_k} \sum_{j \in C_1 \cup C_0} {\bf X}_{ij})},&~~~{\rm if}~~~ i \in R_k~{\rm and}~C_0 \neq \emptyset\\
\frac{\sum_{j \in C_1} {\bf X}_{ij}}{\sum_{i=1}^n \sum_{j \in C_1} {\bf X}_{ij}},&~~~{\rm if}~~~ i \in R_k~{\rm and}~C_0 = \emptyset.
\end{cases}
\end{align}

\subsection{Scale Invariance}\label{sec:scale_invariance_appendix}

We observe that the scaling of the matrix ${\bf X}$ does not matter if $\forall_{j\in C_1}{b({\bf X}_{\cdot j}|K)}=0$.
\begin{theorem}
If we fix $b({\bf X}_{\cdot j}|K)=0$ the clustering $R_1,\ldots,R_K$ and the column partition is unchanged if we multiply ${\bf X}\rightarrow\alpha{\bf X}$ by a constant $\alpha\in{\mathbb{R}}_{>0}$.
\end{theorem}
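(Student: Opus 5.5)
The plan is to show that each update step of the algorithm is equivariant under ${\bf X}\to\alpha{\bf X}$. Every log-likelihood comparison the algorithm makes then changes only by a positive factor plus a term independent of the decision being made. I will argue at the level of the closed-form updates in Eqs.~\eqref{eq:lambdaR}--\eqref{eq:mean1}, the scores $S_i^k$ in Eq.~\eqref{eq:Sik}, and the column contributions $L_j(g)$ in Eqs.~\eqref{eq:Lminus}--\eqref{eq:L1} with $b\equiv 0$. Since the iteration is deterministic given the initialization, an induction over iterations then shows that $R_1,\ldots,R_K$ and $C_-,C_0,C_1$ coincide at every step.

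\textbf{Parameter equivariance.} Fix $R_k$ and $C_-,C_0,C_1$ and replace ${\bf X}$ by $\alpha{\bf X}$. In Eq.~\eqref{eq:lambdaR}, $\lambda_i^R$ is a ratio with two ${\bf X}$-sums in the numerator and two in the denominator (or one and one when $C_0=\emptyset$), so $\lambda_i^R$ is unchanged. From Eqs.~\eqref{eq:meanminus}--\eqref{eq:mean1}, each column parameter $\lambda^C_{j|l}$ is homogeneous of degree one, so it becomes $\alpha\lambda^C_{j|l}$. Consequently every $\mu_{ij}\to\alpha\mu_{ij}$, and likewise $\hat\mu^k_{ij}\to\alpha\hat\mu^k_{ij}$.

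\textbf{Invariance of the decisions.} For any term $x\log\mu-\mu$ we get
\[
\alpha x\log(\alpha\mu)-\alpha\mu=\alpha\,(x\log\mu-\mu)+\alpha x\log\alpha .
\]
For the row assignment, this gives $S_i^k\to\alpha S_i^k+\alpha\log\alpha\sum_{j\in C_1}{\bf X}_{ij}$. The added term does not depend on $k$, so $\arg\max_k$ is unchanged. For the column partition, each $L_j(g)$ with $b=0$ becomes $\alpha L_j(g)+\alpha\log\alpha\sum_i{\bf X}_{ij}$. The added term is the same for $g\in\{-,0,1\}$, because every group sums over all $n$ rows of column $j$. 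Hence the $\arg\max$ over $g$, including the tie-breaking order used in Algorithm~\ref{algo:algorithm}, is unchanged.

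\textbf{Initialization.} The heuristic $h(j)$ in Eq.~\eqref{eq:columnInfo} is invariant because $N/(r_ic_j)$ scales as $\alpha^{-1}$ while ${\bf X}_{ij}$ scales as $\alpha$, and the division by $c_j$ cancels the remaining factor. So the initial $C_1$ is unchanged. The distance $d(a,b)$ in Eq.~\eqref{eq:distance} simply scales by $\alpha$, so the $k$-Means++ seeding probabilities, which are normalized, are identical. The seeding argument therefore holds with the same random choices.

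\textbf{Main obstacle: the additive $10^{-3}$.} The statement concerns ${\bf X}$ itself, so I will take $\alpha{\bf X}$ to be the matrix fed to the model and not apply the offset again. I will state this explicitly.

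The derivation behind $\mathcal{L}({\bf X})=\mathcal{L}(\alpha{\bf X})$ should then be phrased as equality up to a positive factor $\alpha$ and a constant that does not depend on the clustering or the partition. That is exactly what the computation above provides, and it is all the minimization needs, so the global minimizer of Problem~\ref{prob:main} is also unchanged.
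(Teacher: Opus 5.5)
Your proof is correct, and its core is the same computation the paper uses. $\lambda^R_i$ is homogeneous of degree zero and all column parameters scale by $\alpha$, so $\mu_{ij}$ and $\hat\mu^k_{ij}$ scale by $\alpha$. Hence $S_i^k$ and $L_j(g)$ with $b=0$ undergo $x\mapsto\alpha x+c$, where $c$ does not depend on the choice being made.

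Where you differ is scope.
\begin{itemize}
\item The paper argues only at a converged state. It rescales after convergence and observes that one further sweep changes nothing, i.e., fixed points of the iteration are scale invariant. This needs no assumption about initialization or randomness.
\item You additionally check that the initialization is invariant: $h(j)$ is exactly invariant and $d(a,b)$ only gains the factor $\alpha$, so the normalized seeding probabilities agree. The subsequent assignment of rows to seeds is covered by your $S_i^k$ argument. You then induct over iterations, which gives the stronger conclusion that the whole trajectory and output coincide, at the price of coupling the random choices.
\end{itemize}

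Your closing remark is also right and sharpens the paper. With $b=0$, the loss minimized over the $\lambda$'s for fixed partitions satisfies $\mathcal{L}(\alpha{\bf X})=\alpha\,\mathcal{L}({\bf X})-\alpha\log\alpha\sum_{i,j}{\bf X}_{ij}$, not literal equality. The additive term does not depend on $R_1,\ldots,R_K$ or on $C_-,C_0,C_1$, so the global minimizer of Problem~\ref{prob:main} is unchanged.

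A minor point in your favor: the additive term in $S_i^k$ is $\alpha\log\alpha\sum_{j\in C_1}{\bf X}_{ij}$, as you write. The paper states a sum over all $j\in[m]$, although either is constant in $k$.
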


\begin{proof}
Assume the iterations described in Sects. \ref{sec:em1}, \ref{sec:em2}, and \ref{sec:column} have converged. If we transform ${\bf X}\leftarrow\alpha{\bf X}$ and continue the iterations, $\lambda^R_i$ defined by Eq. \eqref{eq:lambdaR} remains unchanged. The parameters $\lambda^C_{j\mid -}\leftarrow\alpha\lambda^C_{j\mid -}$ (Eq. \eqref{eq:meanminus}), $\lambda^C_{j\mid 0}\leftarrow\alpha\lambda^C_{j\mid 0}$ (Eq. \eqref{eq:mean0}), and $\lambda^C_{j\mid k}\leftarrow\alpha\lambda^C_{j\mid k}$ (Eq. \eqref{eq:mean1}) are each multiplied by $\alpha$. Then $S_{i}^k$ of Eq. \eqref{eq:Sik} becomes $S_{i}^k\leftarrow\alpha S_{i}^k+\sum\nolimits_{j=1}^m{\alpha{\bf X}_{ij}\log{\alpha}}$. The clustering remains unchanged because the transformation does not change the ordering of $S_{i}^k$ for a given $i$ (the multiplication by $\alpha$ does not change the ordering and the term $\sum\nolimits_{j=1}^m{\alpha{\bf X}_{ij}\log{\alpha}}$ is constant with respect to $k$). Similarly, the terms for the column partition change as $L_j(-)\leftarrow\alpha L_j(-)+\sum\nolimits_{i=1}^n{\alpha{\bf X}_{ij}\log{\alpha}}$ (Eq. \eqref{eq:Lminus}), 
$L_j(0)\leftarrow\alpha L_j(0)+\sum\nolimits_{i=1}^n{\alpha{\bf X}_{ij}\log{\alpha}}$ (Eq. \eqref{eq:L0}), and 
$L_j(1)\leftarrow\alpha L_j(1)+\sum\nolimits_{i=1}^n{\alpha{\bf X}_{ij}\log{\alpha}}$ (Eq. \eqref{eq:L1}).
Again, the column partition remains unchanged, since multiplication by $\alpha$ does not change ordering and because the term $\sum\nolimits_{i=1}^n{\alpha{\bf X}_{ij}\log{\alpha}}$  is the same for all $L_j(-)$, $L_j(0)$, and $L_j(1)$. Therefore, if we are at the local optimum and multiply ${\bf X}$ by a constant, the iterative algorithm does not change clustering and the column partitions.
\end{proof}
If the bias term is non-zero, as in Eq. \eqref{eq:Bx}, we have the scaling $L_j(1)\leftarrow\alpha L_j(1)+\sum\nolimits_{i=1}^n{\alpha{\bf X}_{ij}\log{\alpha}}-(K-1)\log{\alpha}$, breaking the invariance, slightly favouring solutions with less columns in $C_1$ for larger $\alpha>1$. 

\subsection{Poisson-based Distance Function}\label{sec:distance_function_appendix}

In many situations, it is desirable to have a general-purpose distance function for count data that can, for example, be used for initialization purposes (compare Sect. \ref{sec:initial}) or when conducting agglomerative clustering \citep{mullner2011modern}. To define such a distance function, we ignore the partitions of the columns, i.e., $C_1=[m]$ and $C_- = C_0 = \emptyset$. We start with the log-loss of Eq. \eqref{eq:poissonL_Loss}.
\begin{align}
     L_K({\bf X}) &=-\sum_{k=1}^K\sum_{i \in R_k}\sum_{j=1}^{m} {\bf X}_{ij} \log (\lambda_{i}^R \lambda_{j|k}^C)  - \lambda_{i}^R\lambda_{j|k}^C+\text{const}.\\
     &=-\sum_{k=1}^K\sum_{i \in R_k}\sum_{j=1}^{m} {\bf X}_{ij} \log (\frac{r_ic_{kj}}{N_k})  - \frac{r_ic_{kj}}{N_k}+\text{const}.,
\end{align}
where $r_i=\sum_{j=1}^m{\bf X}_{ij}$, $c_{kj}=\sum_{i\in R_k}{\bf X}_{ij}$ and $N_k = \sum_{i \in R_k}r_i$. If each row defines its own cluster, this results in 
\begin{align}
    L_n({\bf X})  &=-\sum_{i=1}^n\sum_{j=1}^{m} {\bf X}_{ij} \log ({\bf X}_{ij})  - {\bf X}_{ij}+\text{const}..
\end{align}
The distance function is then based on the difference between those two losses.
\begin{align}
    \Delta L({\bf X})=&L_K({\bf X})-L_n({\bf X})\\
    =&\sum_{k=1}^K\sum_{i \in R_k}\sum_{j=1}^{m} -{\bf X}_{ij} \log (\frac{r_ic_{kj}}{N_k})  + \frac{r_ic_{kj}}{N_k}+ {\bf X}_{ij} \log ({\bf X}_{ij})  - {\bf X}_{ij}\\
    =&\sum_{k=1}^K\sum_{i \in R_k}\sum_{j=1}^{m} {\bf X}_{ij} \log (\frac{{\bf X}_{ij}N_k}{r_ic_{kj}})  + \frac{r_ic_{kj}}{N_k} - {\bf X}_{ij}\\
    =&\sum_{k=1}^K\sum_{i \in R_k} \frac{r_i}{N_k} 
\left( \sum_{j=1}^m c_{kj} \right) - r_i +  \sum_{j=1}^{m} {\bf X}_{ij} \log (\frac{{\bf X}_{ij}N_k}{r_ic_{kj}}) \\
    =&\sum_{k=1}^K\sum_{i \in R_k} r_i - r_i + \sum_{j=1}^{m} {\bf X}_{ij} \log (\frac{{\bf X}_{ij}N_k}{r_ic_{kj}})
\end{align}
Finally, the distance between rows $a$ and $b$ is the difference of the log-losses, considering that the rows originate from two clusters (containing only themselves) or a shared cluster.
\begin{align}
    d(a,b)=&\Delta L_{ab}({\bf X})=\sum_{i \in \{a,b\}}\sum_{j=1}^{m} {\bf X}_{ij} \log (\frac{{\bf X}_{ij}N_{ab}}{r_i c_{(ab)j}}),
\end{align}
where, $c_{(ab)j}={\bf X}_{aj}+{\bf X}_{bj}$ and $N_{ab}=r_a+r_b$.

\subsection{Outlier Detection}\label{app:outlier}

When considering outliers during the optimization (as explained in Sect.~\ref{sec:outlier_detection}), the clustering objective from Eq.~\eqref{eq:lossSubspaceClustering} becomes

\begin{align}
\begin{split}
    \mathcal{L}({\bf X})=- \Bigg(
    &\sum_{i=1}^n \sum_{j \in C_-} {\bf X}_{ij}\log(\lambda_{j|-}^C)-\lambda_{j|-}^C
    + \sum_{i=1}^n\sum_{j \in C_0} {\bf X}_{ij} \log(\lambda_{j|0}^C)-\lambda_i^R \lambda_{j|0}^C\\
    +& \sum_{k=1}^K\sum_{i \in R_k}\sum_{j \in C_1} {\bf X}_{ij} \log(\lambda_{j|k}^C)-\lambda_i^R \lambda_{j|k}^C
     + \sum_{i=1}^n\sum_{j \in C_0 \cup C_1} {\bf X}_{ij} \log(\lambda_{i}^R)\\
    + &\sum_{i \in O} \sum_{j \in C_1} {\bf X}_{ij} \log(\lambda_{j|0}^C)-\lambda_i^R \lambda_{j|0}^C
    \Bigg) + \sum_{j \in C_1} b({\bf X}_{\cdot j}|K) + \text{const.},
\end{split}
\end{align}
where $O$ contains the indices of the outlier rows.
Performing MLE gives us the same update rules for $\lambda^C_{j|-}, \lambda^C_{j|k}$ as shown in Eq.~\eqref{eq:lambda_i_update_appendix}-\eqref{eq:lambda_jk_update_appendix}. Only the updating mechanisms for $\lambda_i^R$ and $\lambda_{j|0}^C$ have to be adapted
\begin{align}
    \lambda_{i}^R= & \begin{cases}
        \frac{1}{\sum_{j \in C_1} \lambda_{j|k}^C + \sum_{j \in C_0}\lambda_{j|0}^C} \sum_{j \in C_0 \cup C_1} {\bf X}_{ij}& \quad \text{, if $i \in R_k$}\\
        \frac{1}{\sum_{j \in C_0 \cup C_1} \lambda_{j|0}^C} \sum_{j \in C_0 \cup C_1} {\bf X}_{ij}&\quad \text{, if $i \in O$.}
    \end{cases}\\
    \lambda_{j|0}^C= & \begin{cases}
        \frac{1}{\sum_{i = 1}^n \lambda_i^R} \sum_{i=1}^n {\bf X}_{ij}& \quad \text{, if $j \in C_0$}\\
        \frac{1}{\sum_{i \in O} \lambda_i^R} \sum_{i \in O} {\bf X}_{ij}& \quad \text{, if $j\in C_1$.}
    \end{cases}
\end{align}
Since we do not want to optimize $\lambda_{j|0}$ for $j \in C_1$ for outliers (else, we would just add another cluster with $\lambda_{j|0}^C \widehat{=} \lambda_{j|K+1}^C$ for $j \in C_1$), we force $\lambda_{j|0}=\sum_{i=1}^n {\bf X}_{ij}$ for all $j \in C_0 \cup C_1$. It follows that
\begin{align}
    \lambda_i^R=\begin{cases}
        \frac{(\sum_{j\in C_0 \cup C_1} X_{ij}) (\sum_{i \in R_k} \sum_{j\in C_0} X_{ij})}{(\sum_{i=1}^n \sum_{j \in C_0} X_{ij})(\sum_{i \in R_k} \sum_{j \in C_1 \cup C_0} X_{ij})}& \quad \text{, if $i \in R_k$}\\
         \frac{1}{\sum_{i=1}^n \sum_{j \in C_0 \cup C_1} {\bf X}_{ij}} \sum_{j \in C_0 \cup C_1} {\bf X}_{ij}&\quad \text{, if $i \in O$.}
    \end{cases}
\end{align}

Extending the summary from Eq. \eqref{eq:summary_lambda_appendix} gives us
\begin{align}
\lambda^C_{j\mid -}&=\sum\nolimits_{i=1}^n{{\bf X}_{ij}/n}\\
\lambda^C_{j\mid 0}&=\sum\nolimits_{i=1}^n{{\bf X}_{ij}}\\
\lambda^C_{j\mid k}&=\begin{cases}
    \frac{(\sum_{i \in R_k} {\bf X}_{ij}) (\sum_{i=1}^n \sum_{j \in C_0} {\bf X}_{ij})}{\sum_{i \in R_k} \sum_{j\in C_0} {\bf X}_{ij}},&~~~{\rm if}~~~ C_0 \neq \emptyset\\
    \frac{(\sum_{i \in R_k} {\bf X}_{ij}) (\sum_{i=1}^n \sum_{j \in C_1} {\bf X}_{ij})}{\sum_{i \in R_k} \sum_{j\in C_1} {\bf X}_{ij}},&~~~{\rm if}~~~ C_0 = \emptyset
\end{cases}\\
\lambda^R_i&=\begin{cases}
\frac{(\sum_{j\in C_0 \cup C_1} {\bf X}_{ij}) (\sum_{i \in R_k} \sum_{j\in C_0} {\bf X}_{ij})}{(\sum_{i=1}^n \sum_{j \in C_0} {\bf X}_{ij})(\sum_{i \in R_k} \sum_{j \in C_1 \cup C_0} {\bf X}_{ij})},&~~~{\rm if}~~~ i \in R_k~{\rm and}~C_0 \neq \emptyset\\
\frac{\sum_{j \in C_1} {\bf X}_{ij}}{\sum_{i=1}^n \sum_{j \in C_1} {\bf X}_{ij}},&~~~{\rm if}~~~ i \in R_k~{\rm and}~C_0 = \emptyset\\
\frac{\sum_{j \in C_0 \cup C_1} {\bf X}_{ij}}{\sum_{i=1}^n \sum_{j \in C_0 \cup C_1} {\bf X}_{ij}}, &~~~{\rm if}~~~ i \in O.
\end{cases}
\end{align}

\section{Data sets}\label{sec:dataset_appendix}

In our study, we consider one synthetic and $11$ real-world data set from various domains such as economics, biology, images, and texts.

\textit{Wholesales}\footnote{\label{footnote:uci}\url{https://archive.ics.uci.edu/} (accessed 07.01.2026)}~\citep{uciRepository} counts the monetary units that were spent annually on six product groups in a horeca (\textbf{ho}tel/\textbf{re}staurant/\textbf{ca}fe) or a retail store. The image data set \textit{Optdigits}\textsuperscript{\ref{footnote:uci}}~\citep{uciRepository} summarizes $32\times32$ bitmaps showing handwritten digits ($0$--$9$) by counting the number of activated pixels within blocks of size $4 \times 4$.
Furthermore, we consider the text data sets \textit{BBCSports}\footnote{\label{footnote:bbc}\url{http://mlg.ucd.ie/datasets/bbc.html} (accessed 07.01.2026)}~\citep{greene2006practical}, \textit{BBCNews}\textsuperscript{\ref{footnote:bbc}}~\citep{greene2006practical}, \textit{WebKB}\footnote{\url{https://www.cs.cmu.edu/~webkb/} (accessed 07.01.2026)}, \textit{Reuters21578}\textsuperscript{\ref{footnote:uci}} (Reuters)~\citep{uciRepository} and \textit{20Newsgroups}\textsuperscript{\ref{footnote:uci}} (20NewsG)~\citep{uciRepository}. BBCSports consists of sports news articles regarding athletics, cricket, football, rugby, and tennis and BBCNews consists of news stories regarding business, entertainment, politics, sport, and tech. WebKB is a collection of university websites within six different categories. In the case of Reuters, we use articles from the categories `grain', `money-fx', `earn', `acq', and `crude', and 20NewsG contains messages from $20$ newsgroups. All text data sets are pre-processed by applying stemming\footnote{\url{https://www.nltk.org/api/nltk.stem.SnowballStemmer.html} (accessed 07.01.2026)} and using bag-of-words\footnote{\url{https://scikit-learn.org/stable/modules/generated/sklearn.feature_extraction.text.CountVectorizer.html} (accessed 07.01.2026)}. Afterward, we consider the most frequent $2000$ terms.
Another text data set used is \textit{Sport Articles}\textsuperscript{\ref{footnote:uci}} (SportA)~\citep{uciRepository}, which does not count occurrences of words but the frequencies of $55$ different word groups like `personal pronouns.'
The \textit{Gene Expression}\textsuperscript{\ref{footnote:uci}} (GeneExp) data set~\citep{uciRepository} contains five clusters, each describing a different kind of tumor. The \textit{Human Dendritic Cells}\footnote{\label{footnote:cell_benchmark}\url{https://github.com/JinmiaoChenLab/Batch-effect-removal-benchmarking/} (accessed 07.01.2026)} (HDendritic) data set~\citep{tran2020benchmark} consists of human blood dendritic cell single-cell RNA sequencing data regarding four cell types and the \textit{Mouse Cell Atlas}\textsuperscript{\ref{footnote:cell_benchmark}} (MouseAtlas) data~\citep{tran2020benchmark} contains $11$ cell types from different organ systems. As gene expression data often contains estimated counts, we round all values to the closest integer.

\begin{table*}[t]
    \caption{Data set characteristics: the number of rows $n$, columns $m$ and clusters $K$ as well as the data range, the sparsity (ratio of zeros) and the imbalance (ratio between the smallest and largest ground truth cluster, i.e., $\text{min}_{k\in[K]}|C_k| / \text{max}_{k\in[K]}|C_k|$). Furthermore, the table includes information about the data domain. A dagger $\dagger$ indicates that the original number of features has been modified.}
    \label{tab:dataset_characteristics}
    \centering
    \resizebox{1\textwidth}{!}{
    \begin{tabular}{l|r|r|r|r|r|r|r}
        \toprule
        \textbf{Data set} & $n$ & $m$ & $K$ & Range & Sparsity & Imbalance & Domain\\
         \midrule
        Synth & $1000$ & $6$ & $3$ & $[0, 125]$ & $1\%$ & $1.00$ & Tabular (synthetic)\\
        Wholesales & $440$ & $6$ & $2$ & $[3, 112151]$ & $0\%$ & $0.48$ & Tabular (economics)\\
        SportA & $1000$ & $55$ & $2$ & $[0, 897]$ & $31\%$ & $0.57$ & Tabular (text characteristics)\\
        Optdigits & $5620$ & $64$ & $10$ & $[0, 16]$ & $49\%$ & $0.97$ & Image (digits)\\
        BBCSports & $737$ & $2000^\dagger$ & $5$ & $[0, 145]$ & $92\%$ & $0.38$ & Text (sport articles)\\
        BBCNews & $2225$ & $2000^\dagger$ & $5$ & $[0, 245]$ & $92\%$ & $0.76$ & Text (news articles)\\
        WebKB & $4518$ & $2000^\dagger$ & $6$ & $[0, 3992]$ & $95\%$ & $0.08$ & Text (university websites)\\
        Reuters & $8367$ & $2000^\dagger$ & $5$ & $[0, 245]$ & $97\%$ & $0.15$ & Text (news articles)\\
        20NewsG & $18846$ & $2000^\dagger$ & $20$ & $[0, 11151]$ & $97\%$ & $0.63$ & Text (news articles)\\
        MouseAtlas & $6954$ & $15006$ & $11$ & $[0, 274689]$ & $91\%$ & $0.06$ & Tabular (biology)\\
        GeneExp & $801$ & $20531$ & $5$ & $[0, 21]$ & $16\%$ & $0.26$ & Tabular (biology)\\
        HDendritic & $576$ & $26593$ & $4$ & $[0, 598415]$ & $82\%$ & $0.50$ & Tabular (biology)\\
        \bottomrule
    \end{tabular}
    }
\end{table*}

Lastly, we consider a synthetic data set (Synth) with three clusters. The distribution of the first four columns follows the logic shown in Fig.~\ref{fig:exampleMatrix}, where the cluster membership determines the ratio between column entries. For rows in the first cluster, the ratio between the first four columns is $(2, 3, 4, 1)$, meaning $v_1=2v_4, v_2=3v_4$, and $v_3=4v_4$. The second and third clusters follow ratios of $(4, 1, 4, 1)$ and $(3, 2, 4, 1)$, respectively. The base value $v_4$ ranges within $[1, 30]$, resulting in $v_i\in [1, 120]$ for $i\in [4]$. The fifth column contains uniformly distributed values within $[0, 15]$, and the last column is stable across all clusters with a constant value of $20$. To simulate real-world variance, we add random noise within $[1, 5]$ to $20\%$ of the entries. The data set is designed such that only the first two columns are relevant for clustering and, thus, should be assigned to $C_1$. Since the third and fourth columns follow the same distribution for all rows but depend on the row scaling, they should be assigned to $C_0$. Finally, the fifth and sixth columns are independent of both row scaling and cluster distributions and should therefore be moved to $C_-$. A good clustering result for Synth suggests that the algorithm is able to ignore noise features and thus concentrate on cluster-relevant structures.

The characteristics of all benchmark data sets are given in Table~\ref{tab:dataset_characteristics}.

\section{Implementations}
\label{sec:implementations}

For the conducted experiments, we use our own implementations for \Method, PoissonL, PoissonC and Spherical $k$-Means (SKM). These can be found in our repository\footnote{\url{https://github.com/collinleiber/3CPO} (accessed 27.01.2026)}. The $k$-Means-based approaches use the $k$-Means implementation from \textit{scikit-learn}\footnote{\url{https://scikit-learn.org/stable/modules/generated/sklearn.cluster.KMeans.html} (accessed 07.01.2026)}. For CROINFO, CoclustMod and CoclustSpecMod we use the implementations provided by the \textit{coclust}\footnote{\url{https://github.com/franrole/cclust_package} (accessed 07.01.2026)} package \citep{role2019coclust}. The implementations of EBML\footnote{\label{footnote:elbm}\url{https://github.com/Saeidhoseinipour/ELBMcoclust} (accessed 07.01.2026)}, SEBML\textsuperscript{\ref{footnote:elbm}} and TauCC\footnote{\url{https://github.com/rupensa/tauCC} (accessed 07.01.2026)} are taken from the repositories as referenced in the respective publications.

\section{Additional Results}
In the following sections, we provide complementary experimental results that did not fit into the main paper.

\subsection{ACC and NMI Results}\label{sec:additional_results}

In addition to the ARI results presented in Table~\ref{tab:experiments_ari}, Table~\ref{tab:tfidf_ari} and Table~\ref{tab:ablation_ari}, we also evaluate the clustering results using ACC and NMI. 
These extended evaluations are provided in Table~\ref{tab:experiments} (traditional algorithms), Table~\ref{tab:coclustering} (co-clustering), and Table~\ref{tab:tfidf} (tf-idf/bm25-based text data analysis). Furthermore, Table~\ref{tab:ablation} contains the additional results for the ablation study discussed in Sect.~\ref{sec:ablation}.

\begin{table*}
\centering
\caption{Clustering results (in \%) of traditional clustering algorithms. Entries correspond to the mean of ten executions $\pm$ the standard deviation. The best result for each data set and evaluation metric is marked in \textbf{bold}, the runner-up is \underline{underlined}, and the third place is \dashuline{dashed-underlined}.}
\label{tab:experiments}
\resizebox{1\textwidth}{!}{
\begin{tabular}{l|l|ccccccccc}
\toprule
\textbf{Data set} & \textbf{Metric} & 3CPO & PoissonL & PoissonC & SKM & KM & STD+KM & MM+KM & RF+KM & RCA+KM\\
\midrule
Synth & ACC & \bm{$98.5 \pm 0.0$} & $58.0 \pm 0.0$ & $57.2 \pm 0.0$ & \underline{$61.3 \pm 0.1$} & $36.6 \pm 0.2$ & $61.0 \pm 0.0$ & \dashuline{$61.1 \pm 0.0$} & $61.0 \pm 0.0$ & $35.8 \pm 0.0$\\
& NMI & \bm{$92.5 \pm 0.0$} & $42.4 \pm 0.1$ & $39.9 \pm 0.0$ & $49.0 \pm 0.1$ & $0.3 \pm 0.0$ & \dashuline{$49.1 \pm 0.0$} & \underline{$49.5 \pm 0.1$} & $48.3 \pm 0.0$ & $0.2 \pm 0.0$\\
& ARI & \bm{$95.5 \pm 0.0$} & $33.3 \pm 0.1$ & $29.9 \pm 0.0$ & \dashuline{$42.7 \pm 0.0$} & $0.2 \pm 0.0$ & $42.1 \pm 0.0$ & \underline{$42.9 \pm 0.1$} & $42.5 \pm 0.0$ & $0.0 \pm 0.0$\\
\midrule
Wholesales & ACC & $77.5 \pm 0.0$ & \dashuline{$78.2 \pm 0.0$} & \underline{$78.9 \pm 0.0$} & $75.9 \pm 0.0$ & $59.3 \pm 0.0$ & $76.8 \pm 0.0$ & $76.6 \pm 0.1$ & $73.2 \pm 0.0$ & \bm{$84.4 \pm 0.1$}\\
& NMI & \underline{$31.9 \pm 0.0$} & $23.1 \pm 0.0$ & \dashuline{$24.0 \pm 0.0$} & $19.5 \pm 0.0$ & $0.9 \pm 0.0$ & $20.0 \pm 0.0$ & $20.3 \pm 0.2$ & $17.2 \pm 0.0$ & \bm{$36.1 \pm 0.2$}\\
& ARI & $30.1 \pm 0.0$ & \dashuline{$31.3 \pm 0.0$} & \underline{$32.8 \pm 0.0$} & $26.5 \pm 0.0$ & $-3.1 \pm 0.0$ & $28.2 \pm 0.0$ & $27.9 \pm 0.2$ & $21.3 \pm 0.0$ & \bm{$47.0 \pm 0.3$}\\
\midrule
SportA & ACC & \underline{$78.5 \pm 0.2$} & \bm{$78.6 \pm 0.1$} & $73.4 \pm 0.8$ & $77.2 \pm 0.0$ & $73.3 \pm 0.0$ & \dashuline{$78.3 \pm 0.0$} & $75.6 \pm 0.0$ & $64.3 \pm 0.0$ & $63.6 \pm 0.0$\\
& NMI & \dashuline{$24.1 \pm 0.5$} & \underline{$24.2 \pm 0.2$} & $18.2 \pm 0.2$ & $23.0 \pm 0.0$ & $14.1 \pm 0.0$ & \bm{$24.4 \pm 0.0$} & $17.8 \pm 0.1$ & $13.3 \pm 0.0$ & $0.3 \pm 0.0$\\
& ARI & \underline{$32.3 \pm 0.5$} & \bm{$32.5 \pm 0.2$} & $21.9 \pm 1.5$ & $29.5 \pm 0.0$ & $21.3 \pm 0.0$ & \dashuline{$31.9 \pm 0.0$} & $25.9 \pm 0.1$ & $7.5 \pm 0.0$ & $0.1 \pm 0.0$\\
\midrule
Optdigits & ACC & \underline{$80.0 \pm 2.0$} & \bm{$80.1 \pm 2.1$} & $63.7 \pm 3.5$ & \underline{$80.0 \pm 0.1$} & $79.3 \pm 0.2$ & \bm{$80.1 \pm 0.1$} & \dashuline{$79.4 \pm 0.4$} & $79.1 \pm 0.2$ & $10.6 \pm 0.1$\\
& NMI & $75.1 \pm 0.9$ & \dashuline{$75.2 \pm 1.0$} & $63.4 \pm 2.1$ & $74.8 \pm 0.1$ & \bm{$75.6 \pm 0.3$} & $74.8 \pm 0.1$ & \underline{$75.5 \pm 0.4$} & $75.1 \pm 0.3$ & $1.4 \pm 0.2$\\
& ARI & $66.7 \pm 2.2$ & $67.0 \pm 1.8$ & $45.0 \pm 3.1$ & \underline{$67.5 \pm 0.1$} & $67.1 \pm 0.2$ & \bm{$67.6 \pm 0.1$} & \dashuline{$67.3 \pm 0.3$} & $66.5 \pm 0.2$ & $0.0 \pm 0.0$\\
\midrule
BBCSports & ACC & \bm{$96.1 \pm 1.2$} & \underline{$70.7 \pm 6.6$} & \dashuline{$54.1 \pm 4.3$} & $37.1 \pm 1.7$ & $28.7 \pm 0.3$ & $36.3 \pm 1.5$ & $35.9 \pm 2.5$ & $37.4 \pm 1.5$ & $40.6 \pm 2.7$\\
& NMI & \bm{$89.4 \pm 2.0$} & \underline{$58.6 \pm 6.0$} & \dashuline{$32.5 \pm 4.5$} & $17.7 \pm 2.4$ & $2.7 \pm 0.2$ & $16.3 \pm 1.7$ & $15.2 \pm 3.0$ & $16.4 \pm 2.3$ & $11.8 \pm 5.9$\\
& ARI & \bm{$90.1 \pm 2.9$} & \underline{$51.6 \pm 8.1$} & \dashuline{$27.7 \pm 4.0$} & $10.4 \pm 1.7$ & $0.3 \pm 0.2$ & $9.5 \pm 0.9$ & $6.9 \pm 3.6$ & $9.8 \pm 1.7$ & $3.7 \pm 2.6$\\
\midrule
BBCNews & ACC & \bm{$95.7 \pm 0.4$} & \underline{$95.4 \pm 0.4$} & \dashuline{$81.1 \pm 5.0$} & $63.2 \pm 3.0$ & $33.2 \pm 0.7$ & $56.8 \pm 2.0$ & $50.9 \pm 1.7$ & $55.5 \pm 2.2$ & $37.8 \pm 9.5$\\
& NMI & \bm{$87.2 \pm 0.8$} & \underline{$86.2 \pm 0.8$} & \dashuline{$66.5 \pm 4.3$} & $38.3 \pm 1.8$ & $8.0 \pm 0.3$ & $34.1 \pm 1.0$ & $25.6 \pm 0.6$ & $32.2 \pm 1.8$ & $28.3 \pm 15.2$\\
& ARI & \bm{$89.9 \pm 0.9$} & \underline{$89.2 \pm 0.8$} & \dashuline{$63.4 \pm 6.7$} & $31.5 \pm 4.0$ & $6.0 \pm 0.3$ & $24.6 \pm 1.6$ & $16.4 \pm 1.5$ & $25.6 \pm 1.8$ & $15.4 \pm 12.0$\\
\midrule
WebKB & ACC & \bm{$57.4 \pm 2.7$} & \underline{$51.8 \pm 2.0$} & \dashuline{$44.5 \pm 4.0$} & $38.7 \pm 0.3$ & $39.2 \pm 0.1$ & $39.2 \pm 0.4$ & $38.8 \pm 0.5$ & $40.0 \pm 3.1$ & $36.3 \pm 0.3$\\
& NMI & \bm{$37.2 \pm 1.2$} & \underline{$34.1 \pm 1.4$} & \dashuline{$26.6 \pm 2.3$} & $18.2 \pm 0.2$ & $2.8 \pm 0.0$ & $18.0 \pm 0.2$ & $16.7 \pm 0.4$ & $15.6 \pm 3.3$ & $1.5 \pm 0.4$\\
& ARI & \bm{$31.5 \pm 3.2$} & \underline{$25.4 \pm 0.9$} & \dashuline{$20.2 \pm 3.8$} & $11.4 \pm 0.2$ & $3.6 \pm 0.0$ & $12.0 \pm 0.2$ & $10.9 \pm 0.4$ & $11.0 \pm 2.7$ & $-0.0 \pm 0.1$\\
\midrule
Reuters & ACC & \bm{$80.9 \pm 2.9$} & \underline{$80.1 \pm 3.4$} & \dashuline{$65.7 \pm 4.2$} & $50.5 \pm 0.2$ & $44.5 \pm 6.0$ & $57.9 \pm 0.0$ & $57.2 \pm 0.0$ & $41.1 \pm 1.2$ & $47.3 \pm 0.3$\\
& NMI & \bm{$70.8 \pm 2.9$} & \underline{$68.1 \pm 2.5$} & \dashuline{$53.2 \pm 3.2$} & $35.9 \pm 0.3$ & $19.6 \pm 1.6$ & $37.2 \pm 0.0$ & $37.2 \pm 0.0$ & $29.7 \pm 0.2$ & $1.4 \pm 0.6$\\
& ARI & \bm{$67.8 \pm 1.6$} & \underline{$65.2 \pm 5.0$} & \dashuline{$44.1 \pm 4.0$} & $21.9 \pm 0.2$ & $15.2 \pm 0.6$ & $32.5 \pm 0.1$ & $31.9 \pm 0.0$ & $3.7 \pm 4.2$ & $0.3 \pm 0.3$\\
\midrule
20NewsG & ACC & \bm{$38.4 \pm 2.5$} & \underline{$37.2 \pm 2.4$} & \dashuline{$29.6 \pm 1.9$} & $12.2 \pm 0.3$ & $7.5 \pm 0.1$ & $11.8 \pm 0.3$ & $11.1 \pm 0.2$ & $9.8 \pm 0.2$ & $5.7 \pm 0.4$\\
& NMI & \bm{$39.9 \pm 1.0$} & \underline{$39.4 \pm 0.9$} & \dashuline{$32.2 \pm 1.3$} & $5.8 \pm 0.4$ & $2.7 \pm 0.0$ & $5.8 \pm 0.2$ & $5.0 \pm 0.2$ & $3.7 \pm 0.4$ & $0.9 \pm 0.5$\\
& ARI & \bm{$23.0 \pm 1.3$} & \underline{$22.2 \pm 1.9$} & \dashuline{$14.6 \pm 1.5$} & $2.0 \pm 0.2$ & $0.4 \pm 0.0$ & $2.1 \pm 0.1$ & $1.5 \pm 0.1$ & $1.1 \pm 0.2$ & $0.0 \pm 0.0$\\
\midrule
MouseAtlas & ACC & \bm{$67.5 \pm 6.0$} & \underline{$67.1 \pm 3.9$} & $46.6 \pm 7.3$ & \dashuline{$56.4 \pm 2.9$} & $32.6 \pm 0.4$ & $54.6 \pm 1.7$ & $52.4 \pm 1.7$ & $50.9 \pm 2.1$ & $20.7 \pm 0.0$\\
& NMI & \bm{$67.6 \pm 2.3$} & \underline{$65.7 \pm 3.0$} & $42.9 \pm 7.1$ & \dashuline{$57.2 \pm 1.6$} & $21.8 \pm 0.6$ & $56.7 \pm 0.9$ & $51.3 \pm 1.3$ & $49.4 \pm 1.9$ & $0.8 \pm 0.1$\\
& ARI & \bm{$56.3 \pm 6.2$} & \underline{$55.4 \pm 5.0$} & $18.5 \pm 9.5$ & \dashuline{$39.3 \pm 2.6$} & $1.6 \pm 0.1$ & $38.4 \pm 1.4$ & $36.3 \pm 2.0$ & $28.6 \pm 2.6$ & $0.1 \pm 0.0$\\
\midrule
GeneExp & ACC & \bm{$99.6 \pm 0.1$} & \underline{$99.4 \pm 0.1$} & $93.0 \pm 6.8$ & \underline{$99.4 \pm 0.0$} & \dashuline{$99.3 \pm 0.1$} & \bm{$99.6 \pm 0.0$} & \dashuline{$99.3 \pm 0.0$} & \dashuline{$99.3 \pm 0.1$} & $39.3 \pm 4.2$\\
& NMI & \underline{$98.4 \pm 0.2$} & \dashuline{$97.8 \pm 0.2$} & $90.3 \pm 6.5$ & $97.6 \pm 0.0$ & $97.3 \pm 0.2$ & \bm{$98.5 \pm 0.1$} & $97.1 \pm 0.0$ & $97.4 \pm 0.2$ & $6.0 \pm 13.5$\\
& ARI & \bm{$99.1 \pm 0.1$} & \underline{$98.7 \pm 0.1$} & $86.7 \pm 11.0$ & \dashuline{$98.5 \pm 0.0$} & $98.2 \pm 0.1$ & \bm{$99.1 \pm 0.1$} & $98.3 \pm 0.0$ & \dashuline{$98.5 \pm 0.1$} & $3.7 \pm 9.8$\\
\midrule
HDendritic & ACC & $87.5 \pm 7.2$ & \dashuline{$91.2 \pm 6.1$} & $84.8 \pm 8.0$ & \underline{$91.9 \pm 0.1$} & $57.0 \pm 11.1$ & \bm{$92.1 \pm 0.2$} & $89.7 \pm 0.1$ & $57.0 \pm 11.1$ & $33.6 \pm 0.6$\\
& NMI & \underline{$77.8 \pm 3.8$} & \bm{$80.9 \pm 4.0$} & $73.7 \pm 3.1$ & $77.2 \pm 0.2$ & $48.1 \pm 10.6$ & \dashuline{$77.4 \pm 0.2$} & $73.9 \pm 0.1$ & $48.1 \pm 10.6$ & $1.0 \pm 0.3$\\
& ARI & $79.6 \pm 5.8$ & \bm{$83.4 \pm 5.2$} & $74.8 \pm 6.1$ & \dashuline{$79.9 \pm 0.2$} & $37.5 \pm 13.3$ & \underline{$80.2 \pm 0.4$} & $75.5 \pm 0.2$ & $37.5 \pm 13.3$ & $-0.1 \pm 0.1$\\
\bottomrule
\end{tabular}}
\end{table*}

\begin{table*}
\centering
\caption{Clustering results (in \%) of co-clustering  algorithms. Entries correspond to the mean of ten executions $\pm$ the standard deviation. The best result for each data set and evaluation metric is marked in \textbf{bold}, the runner-up is \underline{underlined}, and the third place is \dashuline{dashed-underlined}.}
\label{tab:coclustering}
\resizebox{0.9\textwidth}{!}{
\begin{tabular}{l|l|ccccccc}
\toprule
\textbf{Data set} & \textbf{Metric} & 3CPO & CROINFO & CCMod & CCSMod & ELBM & SELBM & TauCC\\
\midrule
Synth & ACC & \bm{$98.5 \pm 0.0$} & $57.9 \pm 0.1$ & \underline{$60.0 \pm 0.2$} & $45.8 \pm 14.5$ & $45.4 \pm 7.7$ & $42.5 \pm 0.2$ & \dashuline{$59.8 \pm 0.0$}\\
& NMI & \bm{$92.5 \pm 0.0$} & \underline{$42.0 \pm 0.2$} & \dashuline{$34.0 \pm 0.4$} & $20.0 \pm 23.6$ & $14.7 \pm 14.1$ & $7.3 \pm 0.1$ & $33.7 \pm 0.5$\\
& ARI & \bm{$95.5 \pm 0.0$} & \underline{$32.8 \pm 0.3$} & \dashuline{$25.2 \pm 0.1$} & $17.8 \pm 20.8$ & $9.2 \pm 8.6$ & $3.5 \pm 1.6$ & $25.1 \pm 0.6$\\
\midrule
Wholesales & ACC & \dashuline{$77.5 \pm 0.0$} & \bm{$79.2 \pm 0.2$} & \underline{$78.4 \pm 0.0$} & $68.8 \pm 3.1$ & $64.7 \pm 4.3$ & $65.5 \pm 0.1$ & $75.7 \pm 0.0$\\
& NMI & \bm{$31.9 \pm 0.0$} & \underline{$25.1 \pm 0.5$} & \underline{$25.1 \pm 0.0$} & $2.3 \pm 6.7$ & $1.6 \pm 3.9$ & $0.3 \pm 0.1$ & \dashuline{$21.8 \pm 0.0$}\\
& ARI & \dashuline{$30.1 \pm 0.0$} & \bm{$33.6 \pm 0.5$} & \underline{$32.0 \pm 0.0$} & $3.2 \pm 9.1$ & $2.0 \pm 5.0$ & $1.9 \pm 0.6$ & $27.6 \pm 0.0$\\
\midrule
SportA & ACC & \bm{$78.5 \pm 0.2$} & $75.2 \pm 0.4$ & \dashuline{$77.7 \pm 0.0$} & \underline{$78.1 \pm 0.0$} & $66.9 \pm 0.0$ & $66.6 \pm 0.1$ & $70.1 \pm 6.6$\\
& NMI & \bm{$24.1 \pm 0.5$} & $19.4 \pm 0.7$ & \dashuline{$21.2 \pm 0.0$} & \underline{$23.1 \pm 0.0$} & $4.3 \pm 0.0$ & $4.0 \pm 0.1$ & $10.9 \pm 10.5$\\
& ARI & \bm{$32.3 \pm 0.5$} & $25.4 \pm 0.8$ & \dashuline{$30.3 \pm 0.0$} & \underline{$31.4 \pm 0.1$} & $7.1 \pm 0.0$ & $6.5 \pm 0.1$ & $15.8 \pm 15.1$\\
\midrule
Optdigits & ACC & \bm{$80.0 \pm 2.0$} & \underline{$65.7 \pm 1.7$} & $33.6 \pm 1.9$ & $45.8 \pm 3.8$ & \dashuline{$61.5 \pm 5.0$} & $40.3 \pm 2.8$ & $20.3 \pm 7.2$\\
& NMI & \bm{$75.1 \pm 0.9$} & \underline{$60.7 \pm 2.2$} & $34.4 \pm 1.5$ & $50.9 \pm 2.5$ & \dashuline{$57.0 \pm 2.0$} & $39.4 \pm 3.7$ & $19.6 \pm 13.1$\\
& ARI & \bm{$66.7 \pm 2.2$} & \underline{$49.7 \pm 2.1$} & $21.4 \pm 1.5$ & $33.2 \pm 2.9$ & \dashuline{$45.3 \pm 3.0$} & $23.1 \pm 4.3$ & $10.6 \pm 7.5$\\
\midrule
BBCSports & ACC & \bm{$96.1 \pm 1.2$} & \dashuline{$62.3 \pm 3.7$} & \underline{$64.2 \pm 2.3$} & $60.8 \pm 1.1$ & $28.4 \pm 0.2$ & $31.6 \pm 0.0$ & $51.6 \pm 5.7$\\
& NMI & \bm{$89.4 \pm 2.0$} & \underline{$58.3 \pm 3.9$} & $47.6 \pm 5.7$ & \dashuline{$51.7 \pm 1.0$} & $1.7 \pm 0.1$ & $0.6 \pm 0.0$ & $28.0 \pm 7.5$\\
& ARI & \bm{$90.1 \pm 2.9$} & \underline{$48.3 \pm 5.2$} & \dashuline{$41.0 \pm 4.3$} & $38.1 \pm 0.1$ & $0.3 \pm 0.1$ & $0.3 \pm 0.0$ & $26.2 \pm 6.3$\\
\midrule
BBCNews & ACC & \bm{$95.7 \pm 0.4$} & \dashuline{$79.9 \pm 1.4$} & \underline{$87.3 \pm 0.9$} & $76.6 \pm 0.3$ & $31.0 \pm 0.6$ & $27.6 \pm 0.4$ & $60.5 \pm 9.6$\\
& NMI & \bm{$87.2 \pm 0.8$} & $63.2 \pm 1.7$ & \underline{$67.8 \pm 1.7$} & \dashuline{$66.1 \pm 0.3$} & $5.7 \pm 0.2$ & $5.2 \pm 0.2$ & $45.6 \pm 9.2$\\
& ARI & \bm{$89.9 \pm 0.9$} & $62.1 \pm 2.3$ & \underline{$71.9 \pm 1.8$} & \dashuline{$62.3 \pm 0.3$} & $4.5 \pm 0.2$ & $2.9 \pm 0.6$ & $41.8 \pm 10.5$\\
\midrule
WebKB & ACC & \bm{$57.4 \pm 2.7$} & $46.6 \pm 1.7$ & \dashuline{$53.7 \pm 0.7$} & \underline{$56.8 \pm 0.1$} & $35.6 \pm 0.1$ & $36.3 \pm 0.0$ & $48.3 \pm 2.7$\\
& NMI & \bm{$37.2 \pm 1.2$} & \dashuline{$26.1 \pm 1.0$} & $25.8 \pm 0.7$ & \underline{$32.0 \pm 0.0$} & $2.4 \pm 0.0$ & $0.0 \pm 0.0$ & $13.1 \pm 2.3$\\
& ARI & \bm{$31.5 \pm 3.2$} & $20.1 \pm 1.4$ & \dashuline{$20.4 \pm 0.7$} & \underline{$25.8 \pm 0.1$} & $3.7 \pm 0.1$ & $0.0 \pm 0.0$ & $11.3 \pm 2.0$\\
\midrule
Reuters & ACC & \bm{$80.9 \pm 2.9$} & \dashuline{$70.8 \pm 0.7$} & \underline{$77.9 \pm 2.8$} & $67.4 \pm 0.0$ & $59.2 \pm 0.1$ & $48.0 \pm 1.1$ & $68.8 \pm 0.4$\\
& NMI & \bm{$70.8 \pm 2.9$} & $55.6 \pm 1.5$ & \underline{$59.1 \pm 3.1$} & \dashuline{$56.1 \pm 0.0$} & $19.8 \pm 0.1$ & $6.9 \pm 0.6$ & $35.2 \pm 0.9$\\
& ARI & \bm{$67.8 \pm 1.6$} & \dashuline{$59.9 \pm 1.2$} & \underline{$64.6 \pm 4.1$} & $46.1 \pm 0.0$ & $34.6 \pm 0.3$ & $10.1 \pm 1.0$ & $43.2 \pm 0.5$\\
\midrule
20NewsG & ACC & \bm{$38.4 \pm 2.5$} & \underline{$28.4 \pm 1.0$} & $15.5 \pm 0.5$ & \dashuline{$19.1 \pm 0.9$} & $7.9 \pm 0.3$ & $5.4 \pm 0.1$ & $8.7 \pm 1.9$\\
& NMI & \bm{$39.9 \pm 1.0$} & \underline{$30.4 \pm 0.7$} & $17.3 \pm 0.5$ & \dashuline{$24.8 \pm 2.5$} & $2.8 \pm 0.1$ & $0.3 \pm 0.1$ & $6.9 \pm 4.4$\\
& ARI & \bm{$23.0 \pm 1.3$} & \underline{$16.2 \pm 0.7$} & \dashuline{$6.9 \pm 0.6$} & $6.2 \pm 2.3$ & $0.5 \pm 0.1$ & $0.0 \pm 0.0$ & $2.1 \pm 1.5$\\
\midrule
MouseAtlas & ACC & \bm{$67.5 \pm 6.0$} & \underline{$61.1 \pm 2.1$} & \dashuline{$58.2 \pm 2.6$} & $54.8 \pm 0.0$ & $31.3 \pm 0.9$ & $25.8 \pm 1.1$ & $50.4 \pm 6.1$\\
& NMI & \bm{$67.6 \pm 2.3$} & \underline{$62.6 \pm 1.3$} & \dashuline{$57.2 \pm 2.5$} & $52.2 \pm 0.0$ & $22.3 \pm 1.0$ & $8.5 \pm 1.9$ & $51.3 \pm 4.6$\\
& ARI & \bm{$56.3 \pm 6.2$} & \underline{$49.7 \pm 1.8$} & \dashuline{$38.4 \pm 1.8$} & $27.4 \pm 0.1$ & $1.4 \pm 0.3$ & $0.5 \pm 0.1$ & $34.4 \pm 4.6$\\
\midrule
GeneExp & ACC & \bm{$99.6 \pm 0.1$} & $77.3 \pm 2.0$ & \dashuline{$84.4 \pm 2.3$} & \underline{$97.6 \pm 0.0$} & $32.3 \pm 1.7$ & $37.5 \pm 0.0$ & $67.2 \pm 7.5$\\
& NMI & \bm{$98.4 \pm 0.2$} & $73.9 \pm 3.1$ & \dashuline{$80.2 \pm 1.6$} & \underline{$93.8 \pm 0.0$} & $9.1 \pm 1.9$ & $0.0 \pm 0.0$ & $61.3 \pm 8.5$\\
& ARI & \bm{$99.1 \pm 0.1$} & $59.0 \pm 3.1$ & \dashuline{$72.4 \pm 2.9$} & \underline{$93.2 \pm 0.0$} & $4.3 \pm 1.4$ & $0.0 \pm 0.0$ & $53.1 \pm 9.4$\\
\midrule
HDendritic & ACC & \underline{$87.5 \pm 7.2$} & \bm{$92.3 \pm 0.5$} & \dashuline{$80.7 \pm 4.5$} & $33.7 \pm 0.0$ & $72.8 \pm 3.8$ & $36.2 \pm 4.8$ & $64.3 \pm 16.2$\\
& NMI & \underline{$77.8 \pm 3.8$} & \bm{$77.9 \pm 1.5$} & \dashuline{$72.6 \pm 3.2$} & $1.0 \pm 0.0$ & $45.9 \pm 2.6$ & $6.5 \pm 10.9$ & $46.2 \pm 20.3$\\
& ARI & \underline{$79.6 \pm 5.8$} & \bm{$81.9 \pm 1.3$} & \dashuline{$72.2 \pm 5.0$} & $0.0 \pm 0.0$ & $45.1 \pm 4.3$ & $4.0 \pm 7.8$ & $42.3 \pm 23.0$\\
\bottomrule
\end{tabular}
}
\end{table*}

\begin{table*}
\centering
\caption{Clustering results (in \%) regarding text data sets pre-processed by TF-IDF and BM25. Entries correspond to the mean of ten executions $\pm$ the standard deviation. The best performance is highlighted in \textbf{bold}.}
\label{tab:tfidf}
\resizebox{0.65\textwidth}{!}{
\begin{tabular}{l|l|ccccc}
\toprule
\textbf{Data set} & \textbf{Metric} & \Method & TF-IDF+KM & BM25+KM & TF-IDF+SKM & BM25+SKM\\
\midrule
BBCSports & ACC & \bm{$96.1 \pm 1.2$} & $86.9 \pm 6.3$ & $64.1 \pm 8.0$ & $89.2 \pm 5.0$ & $92.4 \pm 2.9$\\
& NMI & \bm{$89.4 \pm 2.0$} & $74.7 \pm 6.3$ & $58.9 \pm 10.5$ & $77.8 \pm 4.7$ & $82.0 \pm 4.6$\\
& ARI & \bm{$90.1 \pm 2.9$} & $71.0 \pm 11.4$ & $43.7 \pm 17.0$ & $76.5 \pm 7.1$ & $81.1 \pm 6.8$\\
\midrule
BBCNews & ACC & $95.7 \pm 0.4$ & $93.2 \pm 1.2$ & $94.0 \pm 5.3$ & $94.1 \pm 0.4$ & \bm{$96.0 \pm 0.2$}\\
& NMI & $87.2 \pm 0.8$ & $81.6 \pm 1.8$ & $85.9 \pm 3.9$ & $83.3 \pm 0.8$ & \bm{$87.9 \pm 0.4$}\\
& ARI & $89.9 \pm 0.9$ & $84.2 \pm 2.8$ & $87.7 \pm 6.8$ & $86.4 \pm 0.8$ & \bm{$90.7 \pm 0.3$}\\
\midrule
WebKB & ACC & \bm{$57.4 \pm 2.7$}  & $45.3 \pm 0.7$ & $44.8 \pm 2.7$ & $49.2 \pm 1.2$ & $51.9 \pm 2.6$\\
& NMI & \bm{$37.2 \pm 1.2$} & $26.7 \pm 0.9$ & $26.0 \pm 1.1$ & $30.4 \pm 1.3$ & $36.0 \pm 2.1$\\
& ARI & \bm{$31.5 \pm 3.2$} & $18.8 \pm 1.0$ & $14.6 \pm 1.3$ & $22.4 \pm 1.1$ & $29.0 \pm 1.9$\\
\midrule
Reuters & ACC & \bm{$80.9 \pm 2.9$} & $50.0 \pm 0.2$ & $60.4 \pm 5.9$ & $56.1 \pm 0.1$ & $73.1 \pm 0.6$\\
& NMI & \bm{$70.8 \pm 2.9$}  & $33.6 \pm 0.2$ & $42.9 \pm 4.5$ & $49.0 \pm 0.3$ & $64.0 \pm 2.3$\\
& ARI & \bm{$67.8 \pm 1.6$}  & $19.3 \pm 0.2$ & $34.8 \pm 5.0$ & $34.0 \pm 0.2$ & $55.8 \pm 1.2$\\
\midrule
20NewsG & ACC & $38.4 \pm 2.5$ & $21.3 \pm 1.2$ & $21.9 \pm 1.6$ & $24.3 \pm 0.9$ & \bm{$39.6 \pm 1.5$}\\
& NMI & \bm{$39.9 \pm 1.0$} & $20.1 \pm 1.2$ & $26.9 \pm 1.4$ & $23.0 \pm 1.3$ & $38.0 \pm 1.2$\\
& ARI & \bm{$23.0 \pm 1.3$} & $5.6 \pm 0.6$ & $5.5 \pm 0.5$ & $8.2 \pm 0.4$ & $21.1 \pm 1.2$\\
\bottomrule
\end{tabular}
}
\end{table*}

\begin{table*}
\centering
\caption{Clustering results (in \%) and the final number of columns included in $C_1$ of various ablation studies. Entries correspond to the mean of ten executions $\pm$ the standard deviation. Colors indicate whether the average performance of \Method lies above (green), within (yellow) or below (red) the standard deviation band, where above is better for ACC, NMI and ARI and below is better for $|C_1|$.}
\label{tab:ablation}
\resizebox{0.7\textwidth}{!}{
\begin{tabular}{l|l|ccccc}
\toprule
\textbf{Data set} & \textbf{Metric} & 3CPO & $C_{-}=\emptyset$ & $C_{0}=\emptyset$ & $b({\bf X}_{\cdot j})=0$ & $b({\bf X}_{\cdot j})=\text{BIC}({\bf X}_{\cdot j})$\\
\midrule
Synth & ACC & $98.5 \pm 0.0$ & \cellcolor{green!30}$58.0 \pm 0.0$ & \cellcolor{yellow!30}$98.5 \pm 0.0$ & \cellcolor{yellow!30}$98.5 \pm 0.0$ & \cellcolor{yellow!30}$98.5 \pm 0.0$\\
& NMI & $92.5 \pm 0.0$ & \cellcolor{green!30}$42.4 \pm 0.1$ & \cellcolor{yellow!30}$92.5 \pm 0.0$ & \cellcolor{yellow!30}$92.5 \pm 0.0$ & \cellcolor{yellow!30}$92.5 \pm 0.0$\\
& ARI & $95.5 \pm 0.0$ & \cellcolor{green!30}$33.3 \pm 0.1$ & \cellcolor{yellow!30}$95.5 \pm 0.0$ & \cellcolor{yellow!30}$95.5 \pm 0.0$ & \cellcolor{yellow!30}$95.5 \pm 0.0$\\
& $|C_1|$ & $2.0 \pm 0.0$ & \cellcolor{green!30}$5.8 \pm 0.6$ & \cellcolor{green!30}$4.0 \pm 0.0$ & \cellcolor{green!30}$4.0 \pm 0.0$ & \cellcolor{yellow!30}$2.0 \pm 0.0$\\
\midrule
Wholesales & ACC & $77.5 \pm 0.0$ & \cellcolor{red!30}$78.2 \pm 0.0$ & \cellcolor{yellow!30}$77.5 \pm 0.0$ & \cellcolor{yellow!30}$77.5 \pm 0.0$ & \cellcolor{yellow!30}$77.5 \pm 0.0$\\
& NMI & $31.9 \pm 0.0$ & \cellcolor{green!30}$23.1 \pm 0.0$ & \cellcolor{yellow!30}$31.9 \pm 0.0$ & \cellcolor{yellow!30}$31.9 \pm 0.0$ & \cellcolor{yellow!30}$31.9 \pm 0.0$\\
& ARI & $30.1 \pm 0.0$ & \cellcolor{red!30}$31.3 \pm 0.0$ & \cellcolor{yellow!30}$30.1 \pm 0.0$ & \cellcolor{yellow!30}$30.1 \pm 0.0$ & \cellcolor{yellow!30}$30.1 \pm 0.0$\\
& $|C_1|$ & $5.0 \pm 0.0$ & \cellcolor{green!30}$6.0 \pm 0.0$ & \cellcolor{yellow!30}$5.0 \pm 0.0$ & \cellcolor{yellow!30}$5.0 \pm 0.0$ & \cellcolor{yellow!30}$5.0 \pm 0.0$\\
\midrule
SportA & ACC & $78.5 \pm 0.2$ & \cellcolor{yellow!30}$78.5 \pm 0.0$ & \cellcolor{yellow!30}$78.6 \pm 0.1$ & \cellcolor{yellow!30}$78.6 \pm 0.1$ & \cellcolor{yellow!30}$78.6 \pm 0.1$\\
& NMI & $24.1 \pm 0.5$ & \cellcolor{yellow!30}$24.0 \pm 0.2$ & \cellcolor{yellow!30}$24.2 \pm 0.2$ & \cellcolor{yellow!30}$24.2 \pm 0.2$ & \cellcolor{yellow!30}$24.4 \pm 0.3$\\
& ARI & $32.3 \pm 0.5$ & \cellcolor{yellow!30}$32.3 \pm 0.1$ & \cellcolor{red!30}$32.6 \pm 0.2$ & \cellcolor{red!30}$32.6 \pm 0.2$ & \cellcolor{yellow!30}$32.5 \pm 0.3$\\
& $|C_1|$ & $41.1 \pm 1.4$ & \cellcolor{green!30}$42.5 \pm 1.1$ & \cellcolor{green!30}$52.0 \pm 0.0$ & \cellcolor{green!30}$52.0 \pm 0.0$ & \cellcolor{yellow!30}$41.3 \pm 0.9$\\
\midrule
Optdigits & ACC & $80.0 \pm 2.0$ & \cellcolor{yellow!30}$80.1 \pm 2.1$ & \cellcolor{yellow!30}$80.1 \pm 2.0$ & \cellcolor{yellow!30}$80.1 \pm 2.0$ & \cellcolor{yellow!30}$80.1 \pm 2.0$\\
& NMI & $75.1 \pm 0.9$ & \cellcolor{yellow!30}$75.1 \pm 1.0$ & \cellcolor{yellow!30}$75.2 \pm 1.0$ & \cellcolor{yellow!30}$75.2 \pm 1.0$ & \cellcolor{yellow!30}$75.2 \pm 1.0$\\
& ARI & $66.7 \pm 2.2$ & \cellcolor{yellow!30}$66.9 \pm 1.8$ & \cellcolor{yellow!30}$66.8 \pm 2.2$ & \cellcolor{yellow!30}$66.8 \pm 2.2$ & \cellcolor{yellow!30}$66.8 \pm 2.2$\\
& $|C_1|$ & $55.8 \pm 0.6$ & \cellcolor{green!30}$56.0 \pm 0.0$ & \cellcolor{green!30}$56.0 \pm 0.0$ & \cellcolor{green!30}$62.0 \pm 0.0$ & \cellcolor{green!30}$56.0 \pm 0.0$\\
\midrule
BBCSports & ACC & $96.1 \pm 1.2$ & \cellcolor{green!30}$87.3 \pm 6.4$ & \cellcolor{green!30}$89.1 \pm 4.0$ & \cellcolor{green!30}$80.3 \pm 8.3$ & \cellcolor{yellow!30}$93.7 \pm 5.5$\\
& NMI & $89.4 \pm 2.0$ & \cellcolor{green!30}$79.4 \pm 5.7$ & \cellcolor{green!30}$78.0 \pm 5.3$ & \cellcolor{green!30}$68.8 \pm 5.2$ & \cellcolor{yellow!30}$87.1 \pm 4.2$\\
& ARI & $90.1 \pm 2.9$ & \cellcolor{green!30}$78.0 \pm 8.2$ & \cellcolor{green!30}$75.2 \pm 8.5$ & \cellcolor{green!30}$63.3 \pm 7.6$ & \cellcolor{yellow!30}$87.0 \pm 6.3$\\
& $|C_1|$ & $799.2 \pm 7.6$ & \cellcolor{green!30}$860.6 \pm 28.1$ & \cellcolor{green!30}$1218.8 \pm 14.5$ & \cellcolor{green!30}$1953.7 \pm 5.9$ & \cellcolor{red!30}$720.8 \pm 14.7$\\
\midrule
BBCNews & ACC & $95.7 \pm 0.4$ & \cellcolor{yellow!30}$95.8 \pm 0.3$ & \cellcolor{yellow!30}$95.6 \pm 0.3$ & \cellcolor{yellow!30}$95.5 \pm 0.2$ & \cellcolor{yellow!30}$95.8 \pm 0.3$\\
& NMI & $87.2 \pm 0.8$ & \cellcolor{yellow!30}$87.2 \pm 0.8$ & \cellcolor{yellow!30}$86.6 \pm 0.7$ & \cellcolor{green!30}$86.7 \pm 0.4$ & \cellcolor{yellow!30}$87.3 \pm 0.6$\\
& ARI & $89.9 \pm 0.9$ & \cellcolor{yellow!30}$90.1 \pm 0.7$ & \cellcolor{yellow!30}$89.6 \pm 0.7$ & \cellcolor{yellow!30}$89.5 \pm 0.4$ & \cellcolor{yellow!30}$90.1 \pm 0.7$\\
& $|C_1|$ & $1379.0 \pm 6.2$ & \cellcolor{green!30}$1437.4 \pm 4.5$ & \cellcolor{green!30}$1758.4 \pm 2.3$ & \cellcolor{green!30}$1984.8 \pm 2.3$ & \cellcolor{red!30}$1367.4 \pm 6.1$\\
\midrule
WebKB & ACC & $57.4 \pm 2.7$ & \cellcolor{green!30}$50.7 \pm 2.3$ & \cellcolor{yellow!30}$58.2 \pm 3.0$ & \cellcolor{yellow!30}$57.4 \pm 2.6$ & \cellcolor{yellow!30}$56.6 \pm 1.9$\\
& NMI & $37.2 \pm 1.2$ & \cellcolor{green!30}$34.5 \pm 0.6$ & \cellcolor{yellow!30}$36.7 \pm 1.2$ & \cellcolor{yellow!30}$37.0 \pm 1.5$ & \cellcolor{yellow!30}$37.1 \pm 0.8$\\
& ARI & $31.5 \pm 3.2$ & \cellcolor{green!30}$24.9 \pm 1.1$ & \cellcolor{yellow!30}$33.8 \pm 3.5$ & \cellcolor{yellow!30}$32.6 \pm 2.7$ & \cellcolor{yellow!30}$30.9 \pm 2.5$\\
& $|C_1|$ & $1314.9 \pm 18.4$ & \cellcolor{green!30}$1407.9 \pm 42.5$ & \cellcolor{green!30}$1854.2 \pm 6.8$ & \cellcolor{green!30}$1936.2 \pm 4.6$ & \cellcolor{red!30}$1282.7 \pm 19.2$\\
\midrule
Reuters & ACC & $80.9 \pm 2.9$ & \cellcolor{yellow!30}$80.2 \pm 3.7$ & \cellcolor{yellow!30}$80.1 \pm 3.2$ & \cellcolor{yellow!30}$78.1 \pm 4.2$ & \cellcolor{yellow!30}$80.8 \pm 2.7$\\
& NMI & $70.8 \pm 2.9$ & \cellcolor{yellow!30}$68.9 \pm 2.7$ & \cellcolor{yellow!30}$70.5 \pm 3.0$ & \cellcolor{yellow!30}$68.7 \pm 3.7$ & \cellcolor{yellow!30}$71.7 \pm 2.3$\\
& ARI & $67.8 \pm 1.6$ & \cellcolor{yellow!30}$65.7 \pm 5.1$ & \cellcolor{yellow!30}$67.2 \pm 3.3$ & \cellcolor{yellow!30}$65.4 \pm 4.2$ & \cellcolor{yellow!30}$68.3 \pm 1.3$\\
& $|C_1|$ & $1321.9 \pm 7.9$ & \cellcolor{green!30}$1390.7 \pm 29.1$ & \cellcolor{green!30}$1914.6 \pm 2.1$ & \cellcolor{green!30}$1976.0 \pm 1.7$ & \cellcolor{red!30}$1213.3 \pm 11.7$\\
\midrule
20NewsG & ACC & $38.4 \pm 2.5$ & \cellcolor{yellow!30}$37.9 \pm 1.3$ & \cellcolor{yellow!30}$38.1 \pm 1.9$ & \cellcolor{yellow!30}$37.9 \pm 1.8$ & \cellcolor{yellow!30}$37.3 \pm 2.0$\\
& NMI & $39.9 \pm 1.0$ & \cellcolor{yellow!30}$39.4 \pm 1.2$ & \cellcolor{yellow!30}$39.8 \pm 0.8$ & \cellcolor{yellow!30}$39.8 \pm 0.8$ & \cellcolor{yellow!30}$39.5 \pm 1.2$\\
& ARI & $23.0 \pm 1.3$ & \cellcolor{yellow!30}$23.2 \pm 1.7$ & \cellcolor{yellow!30}$22.4 \pm 1.0$ & \cellcolor{yellow!30}$22.5 \pm 0.9$ & \cellcolor{yellow!30}$22.8 \pm 2.0$\\
& $|C_1|$ & $1453.7 \pm 15.3$ & \cellcolor{yellow!30}$1456.7 \pm 13.1$ & \cellcolor{green!30}$1990.5 \pm 0.8$ & \cellcolor{green!30}$1999.0 \pm 0.0$ & \cellcolor{green!30}$1564.4 \pm 10.4$\\
\midrule
MouseAtlas & ACC & $67.5 \pm 6.0$ & \cellcolor{yellow!30}$67.1 \pm 3.9$ & \cellcolor{yellow!30}$67.5 \pm 6.0$ & \cellcolor{yellow!30}$67.5 \pm 6.0$ & \cellcolor{yellow!30}$67.5 \pm 6.1$\\
& NMI & $67.6 \pm 2.3$ & \cellcolor{yellow!30}$65.7 \pm 3.0$ & \cellcolor{yellow!30}$67.6 \pm 2.3$ & \cellcolor{yellow!30}$67.6 \pm 2.3$ & \cellcolor{yellow!30}$67.5 \pm 2.3$\\
& ARI & $56.3 \pm 6.2$ & \cellcolor{yellow!30}$55.4 \pm 5.0$ & \cellcolor{yellow!30}$56.3 \pm 6.2$ & \cellcolor{yellow!30}$56.3 \pm 6.2$ & \cellcolor{yellow!30}$56.2 \pm 6.2$\\
& $|C_1|$ & $14566.0 \pm 58.8$ & \cellcolor{green!30}$14753.0 \pm 14.8$ & \cellcolor{green!30}$14650.3 \pm 26.2$ & \cellcolor{green!30}$14736.9 \pm 21.3$ & \cellcolor{yellow!30}$14550.3 \pm 60.4$\\
\midrule
GeneExp & ACC & $99.6 \pm 0.1$ & \cellcolor{green!30}$99.4 \pm 0.1$ & \cellcolor{yellow!30}$99.6 \pm 0.1$ & \cellcolor{yellow!30}$99.5 \pm 0.1$ & \cellcolor{yellow!30}$99.5 \pm 0.1$\\
& NMI & $98.4 \pm 0.2$ & \cellcolor{green!30}$97.7 \pm 0.2$ & \cellcolor{yellow!30}$98.3 \pm 0.2$ & \cellcolor{green!30}$97.9 \pm 0.2$ & \cellcolor{yellow!30}$98.2 \pm 0.3$\\
& ARI & $99.1 \pm 0.1$ & \cellcolor{green!30}$98.7 \pm 0.1$ & \cellcolor{yellow!30}$99.1 \pm 0.1$ & \cellcolor{green!30}$98.8 \pm 0.1$ & \cellcolor{yellow!30}$99.0 \pm 0.2$\\
& $|C_1|$ & $6445.1 \pm 3.1$ & \cellcolor{green!30}$7652.8 \pm 3.0$ & \cellcolor{green!30}$7718.8 \pm 3.0$ & \cellcolor{green!30}$14135.0 \pm 0.0$ & \cellcolor{green!30}$8065.6 \pm 5.3$\\
\midrule
HDendritic & ACC & $87.5 \pm 7.2$ & \cellcolor{yellow!30}$91.2 \pm 6.1$ & \cellcolor{yellow!30}$87.5 \pm 7.2$ & \cellcolor{yellow!30}$87.5 \pm 7.2$ & \cellcolor{yellow!30}$87.5 \pm 7.2$\\
& NMI & $77.8 \pm 3.8$ & \cellcolor{yellow!30}$80.9 \pm 4.0$ & \cellcolor{yellow!30}$77.8 \pm 3.8$ & \cellcolor{yellow!30}$77.8 \pm 3.8$ & \cellcolor{yellow!30}$77.8 \pm 3.8$\\
& ARI & $79.6 \pm 5.8$ & \cellcolor{yellow!30}$83.4 \pm 5.2$ & \cellcolor{yellow!30}$79.6 \pm 5.8$ & \cellcolor{yellow!30}$79.6 \pm 5.7$ & \cellcolor{yellow!30}$79.6 \pm 5.7$\\
& $|C_1|$ & $16672.9 \pm 406.2$ & \cellcolor{green!30}$23694.0 \pm 388.1$ & \cellcolor{green!30}$17599.1 \pm 393.0$ & \cellcolor{green!30}$20094.5 \pm 274.9$ & \cellcolor{yellow!30}$16574.9 \pm 343.4$\\
\bottomrule
\end{tabular}}
\end{table*}

Generally, the different metrics show similar trends and rarely diverge. Striking differences arise mainly with ACC when dealing with highly unbalanced cluster sizes, such as in the WebKB data set. Here, the smaller clusters have significantly less influence on the final score. Consequently, the ACC is often disproportionately higher than the corresponding NMI and ARI results, which more accurately reflect the difficulty of capturing smaller groups.

\subsection{Initialization Strategies for $R_k$}

\begin{table*}
\centering
\caption{Clustering results (in \%) and the final number of columns included in $C_1$ for different initialization strategies for $R_k$. Poisson-$k$-Means++ corresponds to the strategy of \Method, proposed in Sect. \ref{sec:initial}. Representation corresponds to Tab. \ref{tab:experiments_ari}.}
\label{tab:experimentInit}
\resizebox{0.95\textwidth}{!}{
\begin{tabular}{l|l|ccccccc}
\toprule
\textbf{Data set} & \textbf{Metric} & Poisson-$k$-Means++ & random labels & random centers & $k$-Means++ & $k$-Means (full) & RF+$k$-Means++ & RF+$k$-Means (full)\\
\midrule
Synth & ACC & \bm{$98.5 \pm 0.0$} & \bm{$98.5 \pm 0.0$} & \bm{$98.5 \pm 0.0$} & \bm{$98.5 \pm 0.0$} & \bm{$98.5 \pm 0.0$} & \bm{$98.5 \pm 0.0$} & \underline{$60.0 \pm 0.0$}\\
& NMI & \bm{$92.5 \pm 0.0$} & \bm{$92.5 \pm 0.0$} & \bm{$92.5 \pm 0.0$} & \bm{$92.5 \pm 0.0$} & \bm{$92.5 \pm 0.0$} & \bm{$92.5 \pm 0.0$} & \underline{$44.6 \pm 0.0$}\\
& ARI & \bm{$95.5 \pm 0.0$} & \bm{$95.5 \pm 0.0$} & \bm{$95.5 \pm 0.0$} & \bm{$95.5 \pm 0.0$} & \bm{$95.5 \pm 0.0$} & \bm{$95.5 \pm 0.0$} & \underline{$37.1 \pm 0.0$}\\
& $|C_1|$ & \bm{$2.0 \pm 0.0$} & \bm{$2.0 \pm 0.0$} & \bm{$2.0 \pm 0.0$} & \bm{$2.0 \pm 0.0$} & \bm{$2.0 \pm 0.0$} & \bm{$2.0 \pm 0.0$} & \underline{$3.0 \pm 0.0$}\\
\midrule
Wholesales & ACC & \underline{$77.5 \pm 0.0$} & \underline{$77.5 \pm 0.0$} & \underline{$77.5 \pm 0.0$} & \bm{$77.7 \pm 0.3$} & \underline{$77.5 \pm 0.0$} & \underline{$77.5 \pm 0.0$} & \underline{$77.5 \pm 0.0$}\\
& NMI & \bm{$31.9 \pm 0.0$} & \bm{$31.9 \pm 0.0$} & \bm{$31.9 \pm 0.0$} & \underline{$29.2 \pm 3.9$} & \bm{$31.9 \pm 0.0$} & \bm{$31.9 \pm 0.0$} & \bm{$31.9 \pm 0.0$}\\
& ARI & \underline{$30.1 \pm 0.0$} & \underline{$30.1 \pm 0.0$} & \underline{$30.1 \pm 0.0$} & \bm{$30.5 \pm 0.5$} & \underline{$30.1 \pm 0.0$} & \underline{$30.1 \pm 0.0$} & \underline{$30.1 \pm 0.0$}\\
& $|C_1|$ & \bm{$5.0 \pm 0.0$} & \bm{$5.0 \pm 0.0$} & \bm{$5.0 \pm 0.0$} & \underline{$5.3 \pm 0.4$} & \bm{$5.0 \pm 0.0$} & \bm{$5.0 \pm 0.0$} & \bm{$5.0 \pm 0.0$}\\
\midrule
SportA & ACC & \underline{$78.5 \pm 0.2$} & \underline{$78.5 \pm 0.1$} & \underline{$78.5 \pm 0.2$} & \underline{$78.5 \pm 0.2$} & \dashuline{$78.2 \pm 0.0$} & \bm{$78.6 \pm 0.2$} & $76.5 \pm 0.0$\\
& NMI & \underline{$24.1 \pm 0.5$} & \dashuline{$24.0 \pm 0.1$} & \dashuline{$24.0 \pm 0.5$} & \dashuline{$24.0 \pm 0.4$} & $23.8 \pm 0.0$ & \bm{$24.2 \pm 0.5$} & $20.5 \pm 0.0$\\
& ARI & \underline{$32.3 \pm 0.5$} & \underline{$32.3 \pm 0.1$} & \underline{$32.3 \pm 0.5$} & \underline{$32.3 \pm 0.3$} & \dashuline{$31.7 \pm 0.0$} & \bm{$32.5 \pm 0.4$} & $27.9 \pm 0.0$\\
& $|C_1|$ & \dashuline{$41.1 \pm 1.4$} & \underline{$40.6 \pm 0.5$} & \dashuline{$41.1 \pm 1.4$} & \dashuline{$41.1 \pm 0.7$} & \bm{$39.0 \pm 0.0$} & $41.9 \pm 1.1$ & $46.0 \pm 0.0$\\
\midrule
Optdigits & ACC & \dashuline{$80.0 \pm 2.0$} & \underline{$80.1 \pm 2.0$} & \bm{$80.9 \pm 0.3$} & $78.6 \pm 3.1$ & $76.0 \pm 3.1$ & $79.3 \pm 2.8$ & $77.1 \pm 0.1$\\
& NMI & \underline{$75.1 \pm 0.9$} & \underline{$75.1 \pm 1.0$} & \bm{$75.5 \pm 0.2$} & $74.4 \pm 1.5$ & $73.2 \pm 1.5$ & \dashuline{$74.8 \pm 1.3$} & $74.0 \pm 0.1$\\
& ARI & \dashuline{$66.7 \pm 2.2$} & \underline{$66.8 \pm 2.3$} & \bm{$67.6 \pm 0.4$} & $65.3 \pm 3.2$ & $62.0 \pm 3.6$ & $65.9 \pm 3.0$ & $64.6 \pm 0.1$\\
& $|C_1|$ & $55.8 \pm 0.6$ & \bm{$54.0 \pm 0.0$} & $55.4 \pm 0.9$ & \dashuline{$55.0 \pm 1.0$} & \underline{$54.2 \pm 0.6$} & \dashuline{$55.0 \pm 1.0$} & $55.6 \pm 0.8$\\
\midrule
BBCSports & ACC & \bm{$96.1 \pm 1.2$} & $89.9 \pm 9.1$ & $91.2 \pm 5.0$ & $91.5 \pm 6.7$ & $88.1 \pm 8.0$ & \dashuline{$92.0 \pm 5.3$} & \underline{$93.2 \pm 5.6$}\\
& NMI & \bm{$89.4 \pm 2.0$} & $83.5 \pm 7.7$ & $84.2 \pm 3.2$ & \dashuline{$85.8 \pm 5.0$} & $84.2 \pm 3.8$ & $83.9 \pm 5.7$ & \underline{$87.0 \pm 5.0$}\\
& ARI & \bm{$90.1 \pm 2.9$} & $82.8 \pm 10.6$ & $84.4 \pm 3.5$ & \dashuline{$84.7 \pm 7.2$} & $82.7 \pm 5.8$ & $83.7 \pm 6.5$ & \underline{$86.9 \pm 5.8$}\\
& $|C_1|$ & $799.2 \pm 7.6$ & $796.8 \pm 7.9$ & $798.0 \pm 19.1$ & $792.8 \pm 21.1$ & \bm{$782.3 \pm 22.7$} & \underline{$785.3 \pm 17.1$} & \dashuline{$791.8 \pm 13.2$}\\
\midrule
BBCNews & ACC & \underline{$95.7 \pm 0.4$} & \bm{$96.0 \pm 0.5$} & \underline{$95.7 \pm 0.6$} & $95.5 \pm 0.8$ & $95.5 \pm 0.7$ & $95.4 \pm 0.5$ & \dashuline{$95.6 \pm 0.5$}\\
& NMI & \underline{$87.2 \pm 0.8$} & \bm{$87.7 \pm 1.1$} & \dashuline{$87.1 \pm 1.5$} & $86.7 \pm 1.7$ & $86.8 \pm 1.6$ & $86.6 \pm 1.0$ & $86.9 \pm 1.0$\\
& ARI & \underline{$89.9 \pm 0.9$} & \bm{$90.5 \pm 1.2$} & \underline{$89.9 \pm 1.5$} & $89.4 \pm 1.9$ & $89.4 \pm 1.8$ & $89.3 \pm 1.2$ & \dashuline{$89.5 \pm 1.0$}\\
& $|C_1|$ & $1379.0 \pm 6.2$ & $1378.4 \pm 5.5$ & $1380.4 \pm 5.5$ & \bm{$1376.2 \pm 6.3$} & \dashuline{$1376.8 \pm 6.7$} & \underline{$1376.3 \pm 4.2$} & $1378.2 \pm 4.7$\\
\midrule
WebKB & ACC & \underline{$57.4 \pm 2.7$} & \bm{$58.8 \pm 2.3$} & \dashuline{$57.1 \pm 1.9$} & $56.0 \pm 3.8$ & $56.3 \pm 3.4$ & $55.4 \pm 2.7$ & $55.0 \pm 3.7$\\
& NMI & \dashuline{$37.2 \pm 1.2$} & \bm{$37.9 \pm 0.4$} & \underline{$37.3 \pm 0.8$} & $36.4 \pm 1.5$ & $36.7 \pm 1.1$ & $36.1 \pm 1.5$ & $36.6 \pm 1.3$\\
& ARI & \underline{$31.5 \pm 3.2$} & \bm{$32.6 \pm 2.7$} & \dashuline{$31.4 \pm 2.3$} & $30.0 \pm 4.0$ & $30.8 \pm 3.3$ & $30.2 \pm 2.5$ & $30.9 \pm 2.8$\\
& $|C_1|$ & $1314.9 \pm 18.4$ & $1326.4 \pm 17.5$ & $1317.2 \pm 8.2$ & \dashuline{$1312.9 \pm 31.8$} & $1313.1 \pm 26.1$ & \underline{$1309.0 \pm 9.9$} & \bm{$1307.6 \pm 25.5$}\\
\midrule
Reuters & ACC & $80.9 \pm 2.9$ & \underline{$81.3 \pm 2.6$} & $79.6 \pm 3.8$ & \dashuline{$81.1 \pm 6.5$} & \bm{$84.9 \pm 4.6$} & $78.9 \pm 2.4$ & $77.6 \pm 4.9$\\
& NMI & \dashuline{$70.8 \pm 2.9$} & $70.5 \pm 1.7$ & $69.7 \pm 2.1$ & \underline{$71.5 \pm 3.1$} & \bm{$72.1 \pm 1.9$} & $69.0 \pm 2.5$ & $67.7 \pm 2.5$\\
& ARI & \bm{$67.8 \pm 1.6$} & \underline{$67.6 \pm 0.7$} & $65.8 \pm 2.9$ & $65.9 \pm 4.3$ & \dashuline{$66.6 \pm 2.8$} & $65.0 \pm 3.7$ & $62.9 \pm 5.1$\\
& $|C_1|$ & $1321.9 \pm 7.9$ & $1321.0 \pm 9.7$ & $1318.9 \pm 11.7$ & $1316.3 \pm 15.2$ & \underline{$1306.5 \pm 17.1$} & \dashuline{$1311.5 \pm 28.7$} & \bm{$1301.8 \pm 32.8$}\\
\midrule
20NewsG & ACC & \underline{$38.4 \pm 2.5$} & $37.0 \pm 1.4$ & \underline{$38.4 \pm 2.4$} & $25.9 \pm 1.3$ & $25.8 \pm 1.5$ & \dashuline{$37.9 \pm 2.0$} & \bm{$39.5 \pm 1.5$}\\
& NMI & \underline{$39.9 \pm 1.0$} & $39.1 \pm 0.8$ & \underline{$39.9 \pm 0.7$} & $32.9 \pm 1.6$ & $32.8 \pm 2.0$ & \dashuline{$39.7 \pm 0.7$} & \bm{$40.6 \pm 0.7$}\\
& ARI & \dashuline{$23.0 \pm 1.3$} & $22.3 \pm 1.3$ & \dashuline{$23.0 \pm 0.8$} & $16.2 \pm 1.2$ & $16.1 \pm 1.5$ & \underline{$23.2 \pm 1.4$} & \bm{$23.7 \pm 1.0$}\\
& $|C_1|$ & $1453.7 \pm 15.3$ & \dashuline{$1405.2 \pm 8.9$} & $1439.8 \pm 11.8$ & \underline{$1144.3 \pm 25.3$} & \bm{$1135.0 \pm 21.9$} & $1444.9 \pm 14.8$ & $1458.0 \pm 11.4$\\
\midrule
MouseAtlas & ACC & $67.5 \pm 6.0$ & $65.9 \pm 2.5$ & \dashuline{$67.7 \pm 2.6$} & $64.2 \pm 3.2$ & $63.2 \pm 2.0$ & \underline{$68.1 \pm 12.5$} & \bm{$74.5 \pm 3.0$}\\
& NMI & $67.6 \pm 2.3$ & $67.6 \pm 2.1$ & \underline{$68.1 \pm 1.4$} & $66.5 \pm 3.3$ & \dashuline{$68.0 \pm 0.5$} & $64.2 \pm 10.9$ & \bm{$70.6 \pm 1.6$}\\
& ARI & \dashuline{$56.3 \pm 6.2$} & $55.4 \pm 3.5$ & \underline{$56.5 \pm 3.4$} & $52.6 \pm 5.3$ & $54.3 \pm 0.9$ & $51.2 \pm 21.8$ & \bm{$63.2 \pm 2.9$}\\
& $|C_1|$ & \dashuline{$14566.0 \pm 58.8$} & $14662.8 \pm 7.2$ & $14576.3 \pm 71.2$ & \underline{$14524.3 \pm 35.0$} & \bm{$14508.8 \pm 17.1$} & $14616.6 \pm 64.7$ & $14605.5 \pm 59.3$\\
\midrule
GeneExp & ACC & \bm{$99.6 \pm 0.1$} & \bm{$99.6 \pm 0.1$} & \underline{$92.8 \pm 7.2$} & \bm{$99.6 \pm 0.0$} & \bm{$99.6 \pm 0.0$} & \bm{$99.6 \pm 0.1$} & \bm{$99.6 \pm 0.0$}\\
& NMI & \underline{$98.4 \pm 0.2$} & \dashuline{$98.3 \pm 0.2$} & $93.6 \pm 4.7$ & \underline{$98.4 \pm 0.1$} & \bm{$98.5 \pm 0.0$} & \dashuline{$98.3 \pm 0.2$} & \bm{$98.5 \pm 0.0$}\\
& ARI & \underline{$99.1 \pm 0.1$} & \underline{$99.1 \pm 0.2$} & $91.0 \pm 8.3$ & \underline{$99.1 \pm 0.1$} & \bm{$99.2 \pm 0.0$} & \dashuline{$99.0 \pm 0.1$} & \bm{$99.2 \pm 0.0$}\\
& $|C_1|$ & $6445.1 \pm 3.1$ & $6446.0 \pm 2.8$ & \bm{$6272.2 \pm 239.6$} & \dashuline{$6444.4 \pm 2.7$} & \underline{$6443.0 \pm 0.0$} & $6447.3 \pm 3.2$ & \underline{$6443.0 \pm 0.0$}\\
\midrule
HDendritic & ACC & \bm{$87.5 \pm 7.2$} & \dashuline{$81.2 \pm 5.9$} & \underline{$84.3 \pm 7.6$} & $49.1 \pm 14.5$ & $46.0 \pm 14.1$ & $72.1 \pm 6.8$ & $66.3 \pm 5.1$\\
& NMI & \bm{$77.8 \pm 3.8$} & \dashuline{$74.6 \pm 3.4$} & \underline{$76.4 \pm 4.5$} & $27.8 \pm 25.1$ & $22.1 \pm 24.4$ & $64.1 \pm 9.6$ & $55.4 \pm 7.1$\\
& ARI & \bm{$79.6 \pm 5.8$} & \dashuline{$74.8 \pm 4.9$} & \underline{$76.9 \pm 6.7$} & $19.7 \pm 18.9$ & $15.3 \pm 18.1$ & $58.2 \pm 15.0$ & $45.0 \pm 11.4$\\
& $|C_1|$ & $16672.9 \pm 406.2$ & $16635.5 \pm 406.8$ & $16601.9 \pm 483.0$ & \underline{$5940.5 \pm 4774.3$} & \bm{$5667.2 \pm 3879.9$} & $15235.7 \pm 1694.7$ & \dashuline{$13682.6 \pm 1782.3$}\\
\bottomrule
\end{tabular}}
\end{table*}

We conduct a series of experiments to evaluate the influence of our proposed initialization strategy for the cluster assignments $R_k$ (Sect. \ref{sec:initial}) -- named Poisson-$k$-Means++. We compare against random cluster assignments (random labels), picking random rows followed by the assignment method described in Sect~\ref{sec:initial} (random centers), the seeding mechanism of $k$-Means++ using the Euclidean distance, and a full execution of $k$-Means. For the $k$-Means++ version, the labels are received by determining for each row ${\bf X}_{i \cdot}$ the closest selected row based on the Euclidean distance. For $k$-Means++ and $k$-Means (full), we also test a version in which we first divide each row by the row-sum (named as RF+). The results are shown in Table \ref{tab:experimentInit}.

We can see that all versions perform similarly for Synth, Wholesales, and GeneExp.
Furthermore, it is evident that there is no clear winner in all other scenarios. Poisson-$k$-Means++, random labels, and random centers all provide good solutions across the benchmark. However, if one considers the better results for BBCSports and HDendritic, the version of \Method using Poisson-$k$-Means++ is a reasonable choice. 

\subsection{Evaluation of the Runtime}

\begin{figure*}[t]
    \centering
    \begin{subfigure}{0.385\textwidth}
        \centering
        \includegraphics[width=\textwidth]{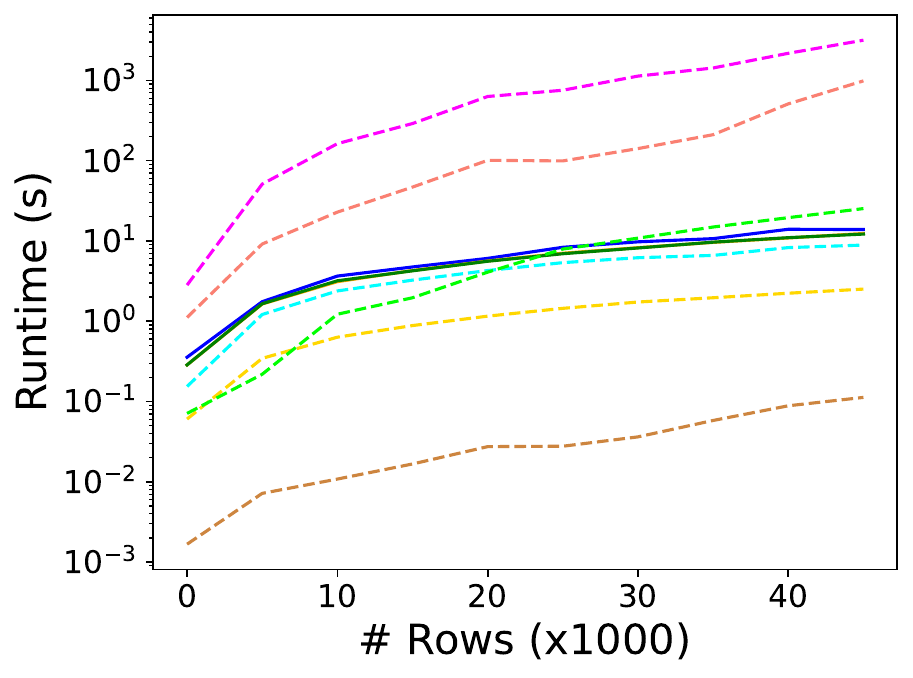}
        \caption{Runtime with an increasing $n$ ($m=6$, logarithmic scale).}
        \label{fig:runtime_n}
    \end{subfigure}
    \begin{subfigure}{0.385\textwidth}
        \centering
        \includegraphics[width=\textwidth]{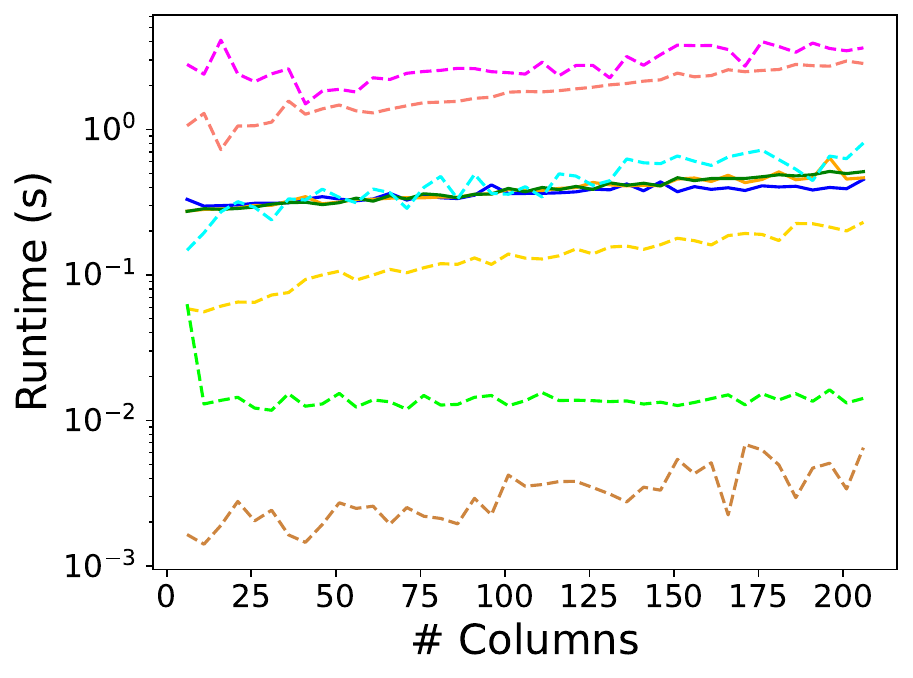}
        \caption{Runtime with an increasing $m$ ($n=1000$, logarithmic scale).}
        \label{fig:runtime_m}
    \end{subfigure}
    \begin{subfigure}{0.21\textwidth}
        \centering
        \includegraphics[width=\textwidth]{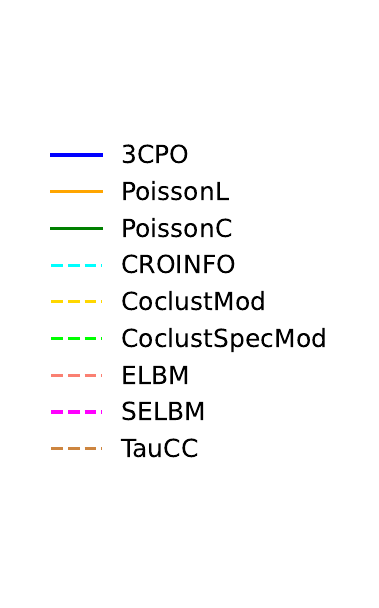}
        \vspace{0.5cm}
    \end{subfigure}
    \caption{Runtime experiments investigating the impact of the number of rows $n$ and columns $m$. Both experiments build on the Synth data set, where the number of clusters $K=3$. The experiments are run on an Apple M3 Pro with $16$GB memory.}
    \label{fig:runtime}
\end{figure*}

We compare the runtime of the Poisson-based approaches \Method, PoissonL, and PoissonC and the co-clustering algorithms. Here, each entry corresponds to the sum of ten runs, except for TauCC, where the entries correspond to a single run. In the first experiment (Fig. \ref{fig:runtime_n}), we use the Synth data set as described above and increase the number of rows $n \in [1\,000, 50\,000]$. The runtime of \Method, PoissonL and PoissonC is very similar and shows a linear increase with $n$. \Method is slightly slower as it updates the column partitions in each iteration and needs to re-calculate $\lambda_i^R$, while it is only computed once for PoissonL and PoissonC. CoclustMod and CROINFO also show a similar behavior while the runtime of CoclustSpecMod, ELBM, SELBM and TauCC increases exponentially.

If we increase the number of columns in $C_0$ for Synth as done in the second experiment (Fig. \ref{fig:runtime_m}), we can see that \Method is slightly slower than PoissonL and PoissonC in the beginning. However, it is faster for large $m$ as \Method assigns the additional columns to $C_0$ and, thus, needs to consider fewer columns when updating the cluster assignments $R_k$. The co-clustering approaches CoclustMod, CROINFO, ELBM, and TauCC show an exponential increase of the runtime with increasing $m$. CoclustSpecMod often terminated immediately after initialization and appears to have problems with the structure of the data.

In summary, the experiments confirm the linear complexity of \Method with respect to $n$ and $m$ as described in Sect. \ref{sec:observations}.

\subsection{Detected Outliers for Optdigits}\label{sec:outliers_optdigits}

Inspecting the outliers identified by \Method within the Optdigits data set, it is noticeable that more than half of those belong to cluster eight according to the ground truth. Considering that our outlier detection uses a distribution shared across all clusters and considering the row-scaling due to $\lambda_i^R$, the outlier definition is prone to rows that use similar values for all pixels. As the digit eight requires the most ``active'' pixels, it is the most likely to fall into the outlier category if it deviates too much from an average writing style. 

To validate this, we visually examine a random set of inliers and outliers and find that the majority of outliers exhibit genuinely unusual characteristics despite the class imbalance in the outlier set. A common trait is, for example, a gap within the structure of the eight.
Fig.~\ref{fig:optdigits_outliers} shows a random selection of inliers and outliers which showcase usual and unusual characteristics.

\begin{figure*}[t]
    \centering
    \begin{subfigure}{0.54\textwidth}
    \centering
        \includegraphics[width=0.22\textwidth]{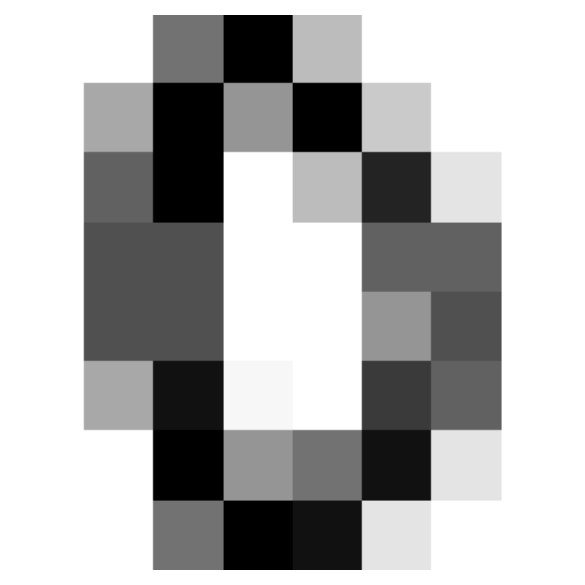}
        \includegraphics[width=0.22\textwidth]{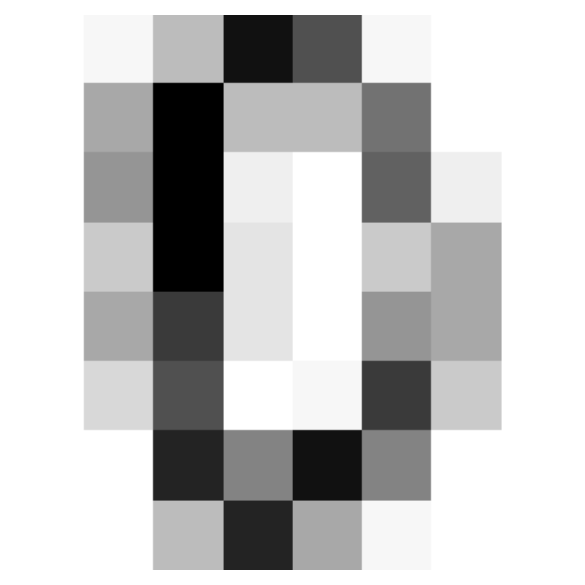}
        \includegraphics[width=0.22\textwidth]{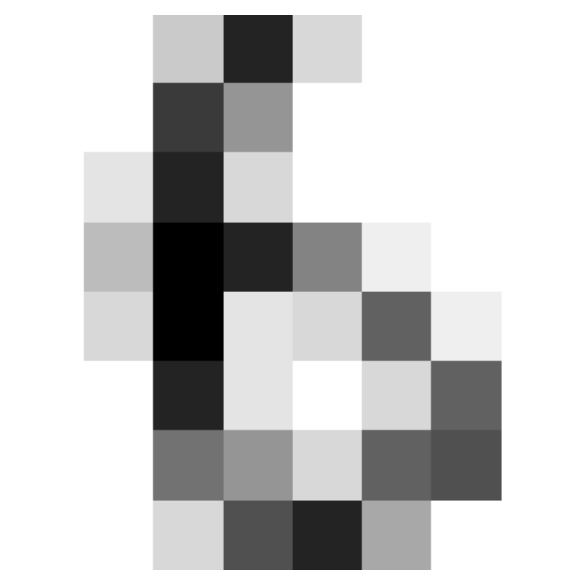}
        \includegraphics[width=0.22\textwidth]{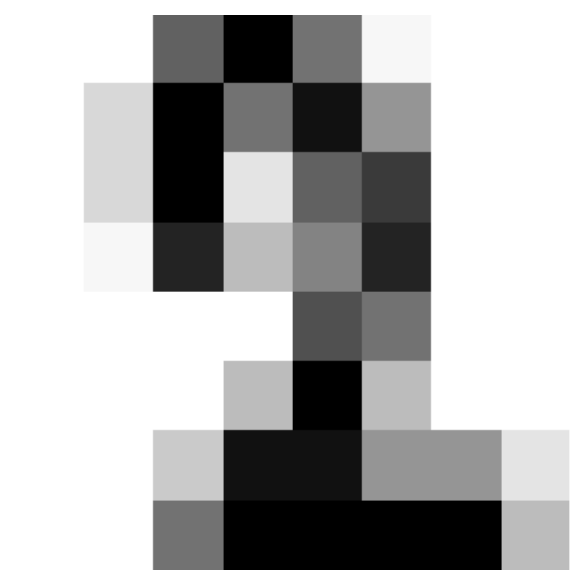}
        \includegraphics[width=0.22\textwidth]{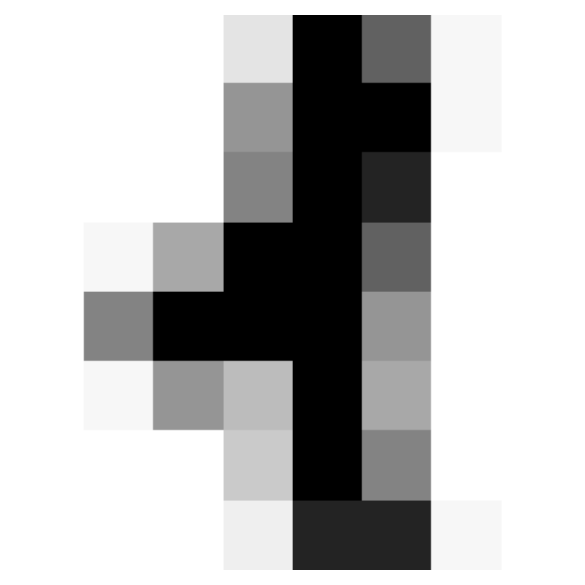}
        \includegraphics[width=0.22\textwidth]{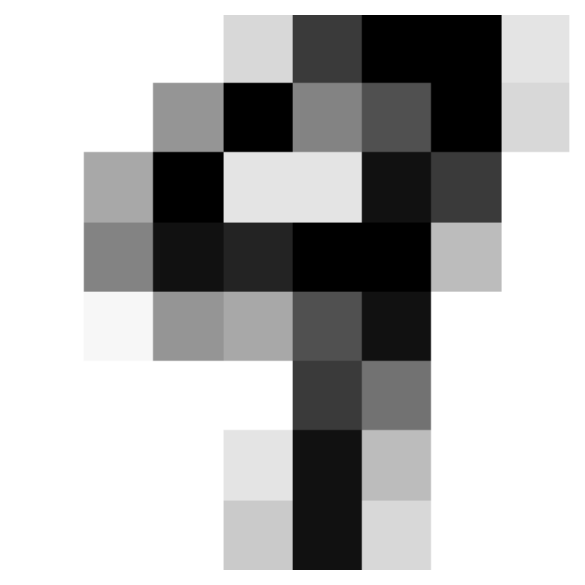}
        \includegraphics[width=0.22\textwidth]{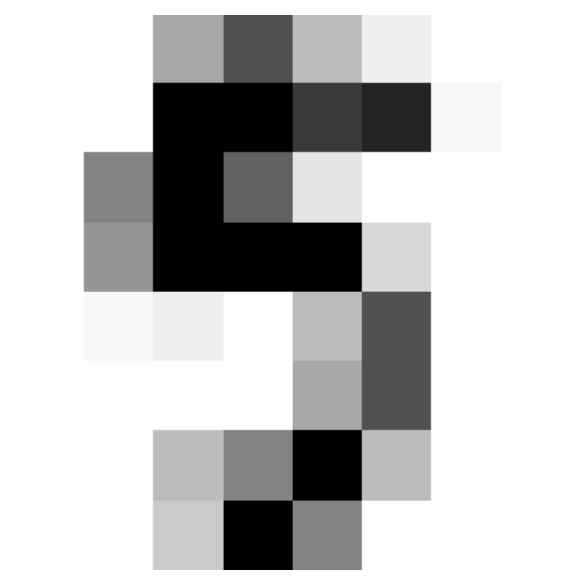}
        \includegraphics[width=0.22\textwidth]{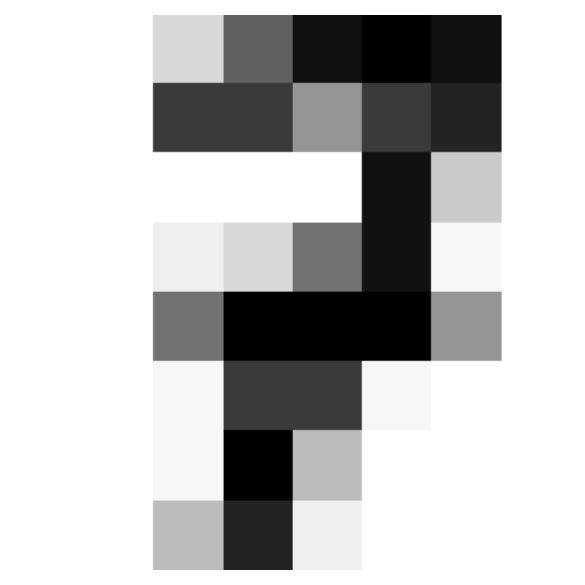}
        \caption{Inliers.}
    \end{subfigure}
    \begin{subfigure}{0.54\textwidth}
        \centering
        \includegraphics[width=0.22\textwidth]{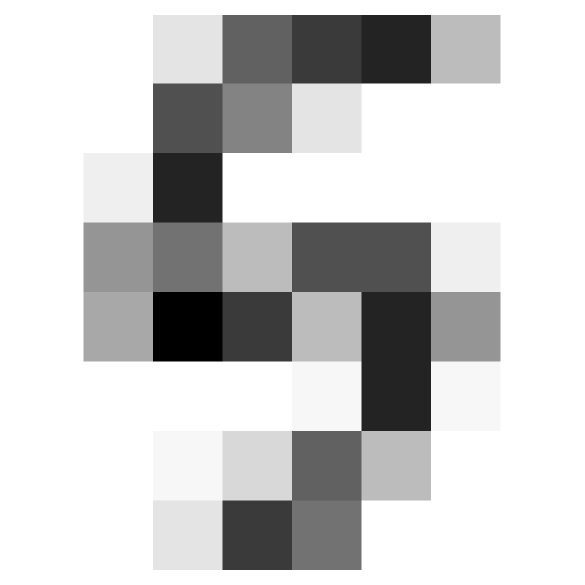}
        \includegraphics[width=0.22\textwidth]{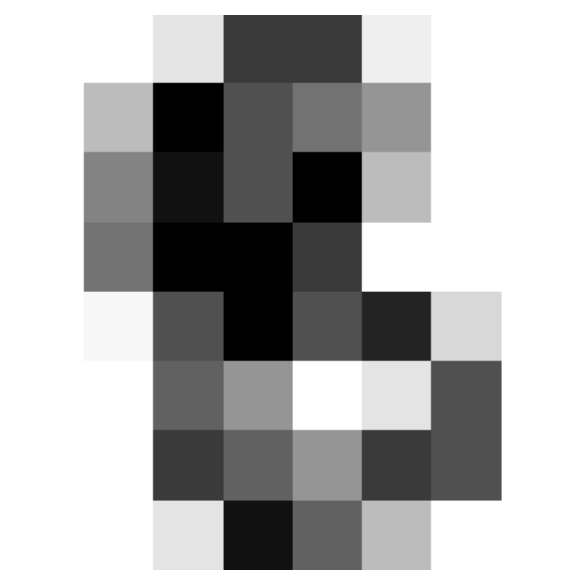}
        \includegraphics[width=0.22\textwidth]{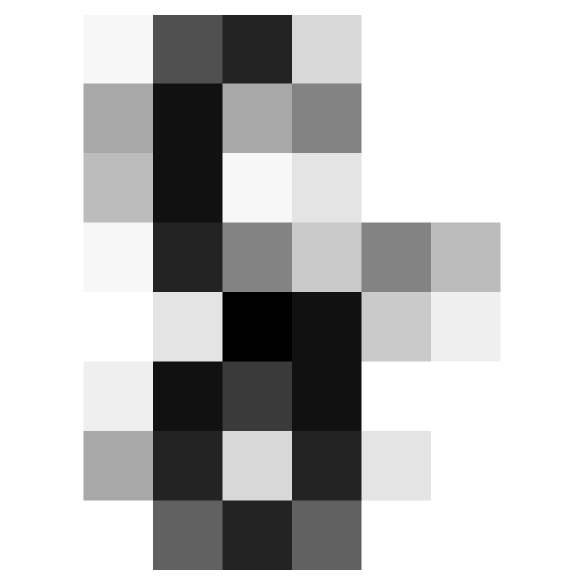}
        \includegraphics[width=0.22\textwidth]{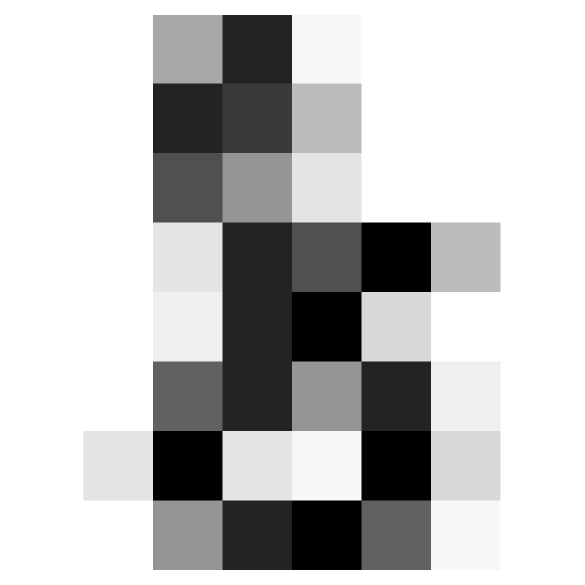}
        \includegraphics[width=0.22\textwidth]{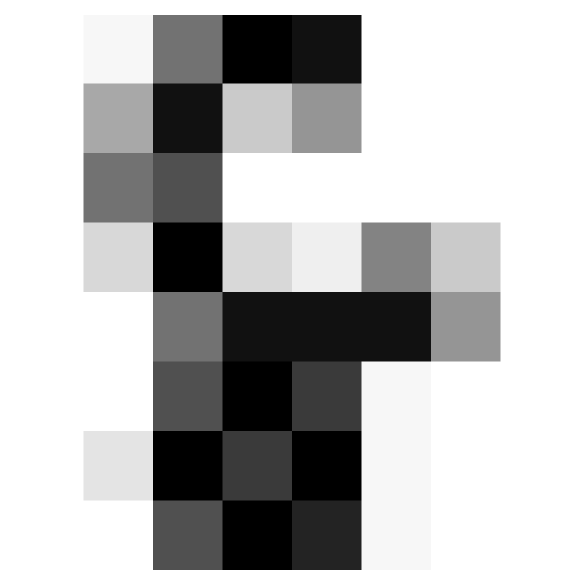}
        \includegraphics[width=0.22\textwidth]{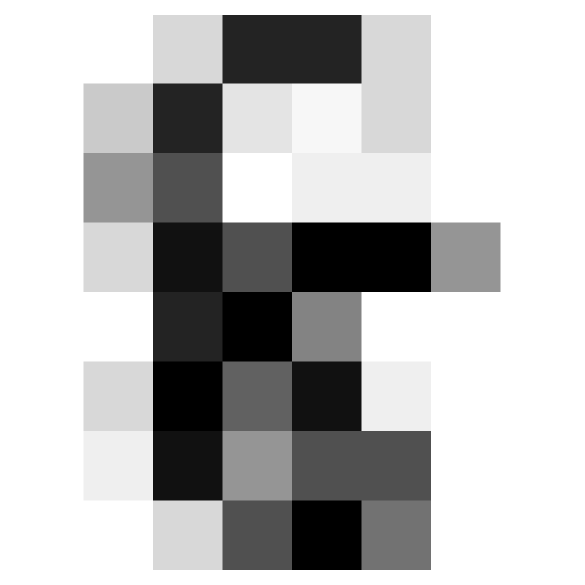}
        \includegraphics[width=0.22\textwidth]{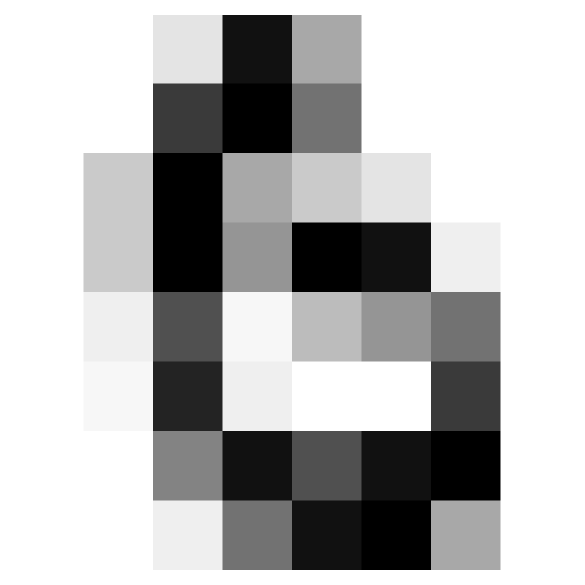}
        \includegraphics[width=0.22\textwidth]{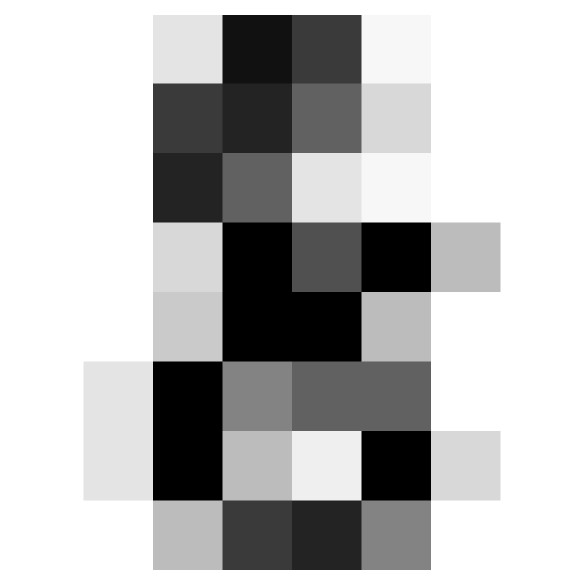}
        \caption{Outliers.}
    \end{subfigure}
    \caption{Visual examples of eight inliers and eight outliers identified by \Method in the Optdigits data set.}
    \label{fig:optdigits_outliers}
\end{figure*}

\section{Estimating the Number of Clusters for $k$-Means-based Approaches}\label{sec:estimate_k_kmeans}

In the following, we evaluate whether the $K$-estimation strategy used by \Method outperforms established strategies for $k$-Means. We utilize three criteria to estimate $K$ for $k$-Means: (1) elbows in the inertia curve \citep{thorndike1953belongs} detected by the Needle algorithm~\citep{satopaa2011finding}, (2) the Silhouette score~\citep{rousseeuw1987silhouettes}, and (3) the BIC score as defined for $X$-Means~\citep{pelleg2000x}. For visualization, inertia and BIC are min–max normalized for each algorithm; the Silhouette score is bound within $[-1,1]$. The remaining setting is equal to the one described in Sect.~\ref{sec:estimate_k}. We vary the number of clusters $K\in\{2,3,\dots,30\}$ and, for each $K$, run each clustering method $20$ times. In the end, we only evaluate the run with the best inertia. The results for Synth, GeneExp and BBCSports are illustrated in Fig.~\ref{fig:k_estimate_km}.

We observe that all standard $K$-estimation techniques struggle to identify the ground-truth number of clusters for the Synth data set. Specifically, the elbow method suggests $K \approx 7$, while the Silhouette score recommends $K=2$ and the BIC score suggests $K=30$. For the GeneExp data set, the $K$-estimation is more consistent; the elbow method and Silhouette score both indicate $K \approx 6$ for most algorithms (excluding RCA+KM), while the BIC score recommends $K=12$ or $K=13$. 

For BBCSports, both the elbow method and Silhouette score provide poor recommendations, whereas BIC yields $K$ within $[3, 8]$ for all algorithms except standard $k$-Means (KM). Overall, we do not have a clear winner that is performing well in all scenarios. Our proposed method \Method, on the other hand, was able to provide a reasonable number of clusters in all three cases.

In addition to the visual examples presented in Fig.~\ref{fig:k_estimate_km}, we report results for all data sets in Tab.~\ref{tab:estimateK}, stating the estimated number of clusters $K$ together with the corresponding ARI and Purity~\citep{manning2008introduction} values. We include Purity to assess how pure clusters are when $K$ is overestimated. Since Purity increases mechanically with larger $K$, it should be interpreted alongside the $K$-estimation error.

Across data sets, the Silhouette score tends to underestimate $K$ as $56$ of $62$ outcomes are $K=2$. This leads to good results for Wholesales and SportA. In contrast, the BIC score often overestimates $K$, returning the upper bound $K \in \{29, 30\}$ in $44$ cases (note that the maximum $K$ is $30$ in our experiments). This often leads to high Purity values but significantly lower ARI results. It should be noted that to some degree these weaknesses may be related to the data transformations that have been conducted (STD, MM, RF, RCA). 

For Optdigits, elbow and Silhouette lead to very good results, providing estimates of $K=9$ or $K=10$ for all $k$-Means-based algorithms except RCA+KM. In this case, the results for \Method and TauCC are far worse. In general, \Method struggles to match the ground truth $K$ on most tabular data sets. However, it performs notably better than the competitors on high-dimensional text data, achieving higher ARI and Purity values in most experiments. Concretely, it estimates the correct $K$ for Synth and BBCSports, is only off by one for BBCNews and GeneExp, and is close for 20NewsG and WebKB. Based on the results for Synth and 20NewsG, \Method appears capable of accurately estimating both low and high cluster counts.

Regarding the co-clustering algorithm TauCC, which determines $K$ intrinsically, we find that it correctly identifies $K=3.0$ (averaged across runs) for Synth. However, it tends to underestimate the number of clusters for GeneExp and BBCSports, returning an average of $K=3.0$ for both. This trend of underestimation is consistent across other data sets, with averages of $K=2.1$, $K=9.5$, and $K=3.6$ for Optdigits, 20NewsG, and MouseAtlas, respectively.

\begin{figure*}[t]
    \centering
    \begin{subfigure}{1\textwidth}
        \centering
        \includegraphics[width=0.3\textwidth]{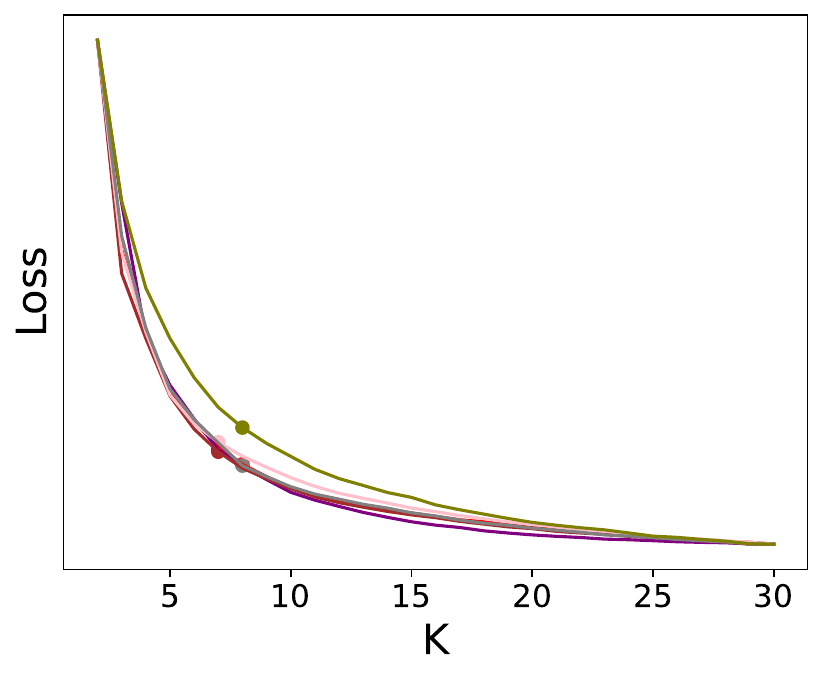}
        \includegraphics[width=0.315\textwidth]{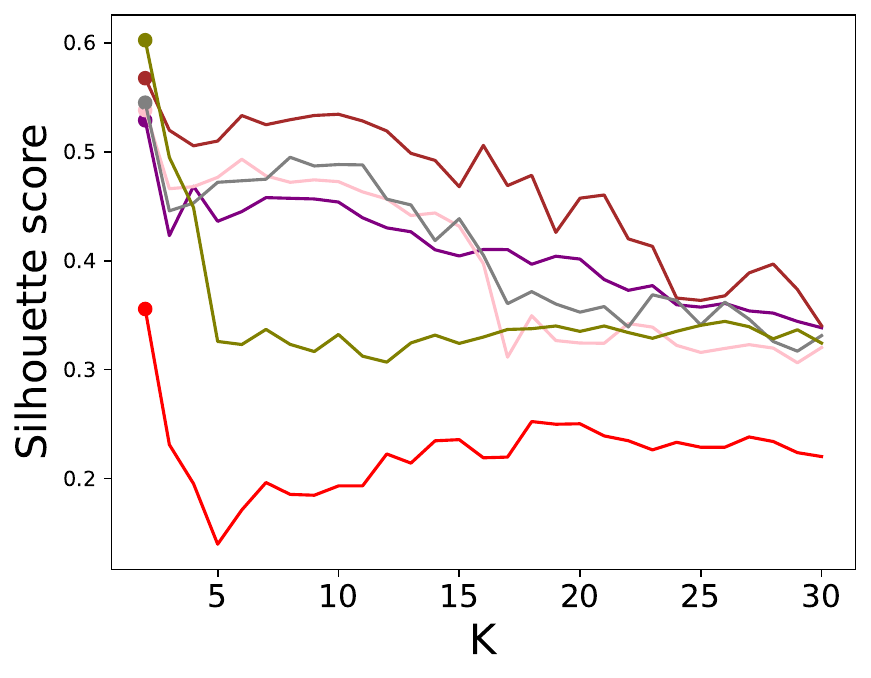}
        \includegraphics[width=0.3\textwidth]{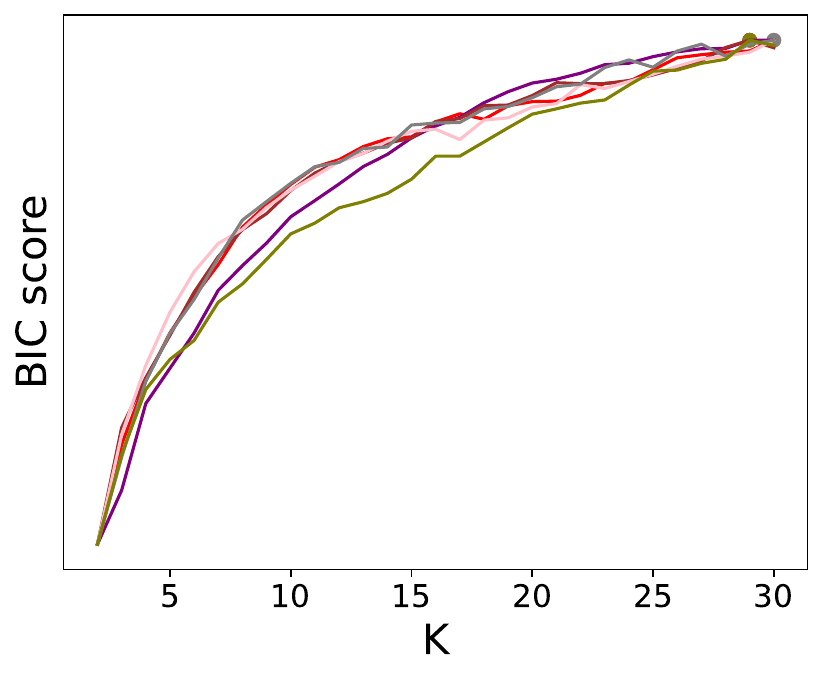}
        \caption{Loss (inertia), Silhouette score, and BIC score for the Synth data set ($K_{gt}=3$).}
    \end{subfigure}
    \begin{subfigure}{1\textwidth}
        \centering
        \includegraphics[width=0.3\textwidth]{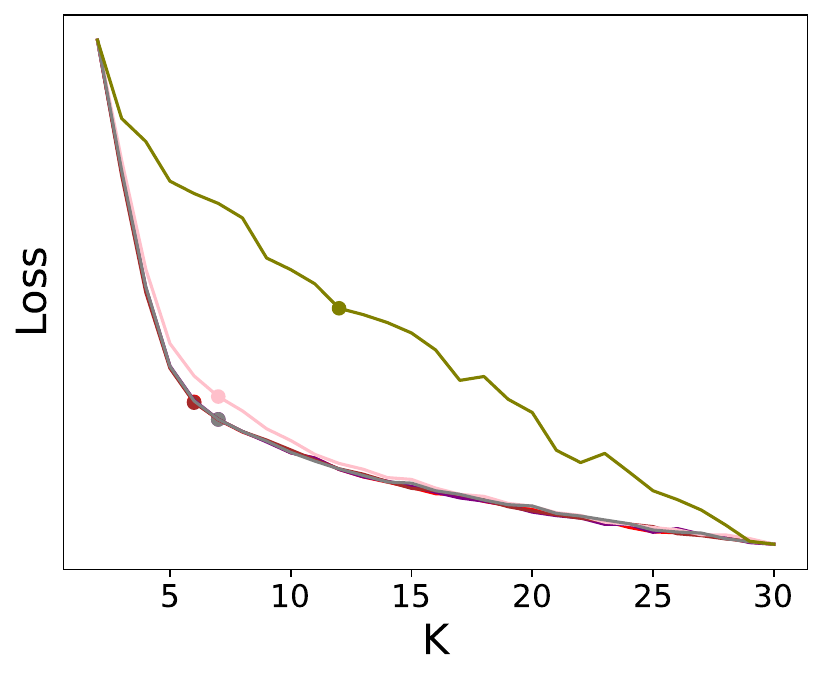}
        \includegraphics[width=0.315\textwidth]{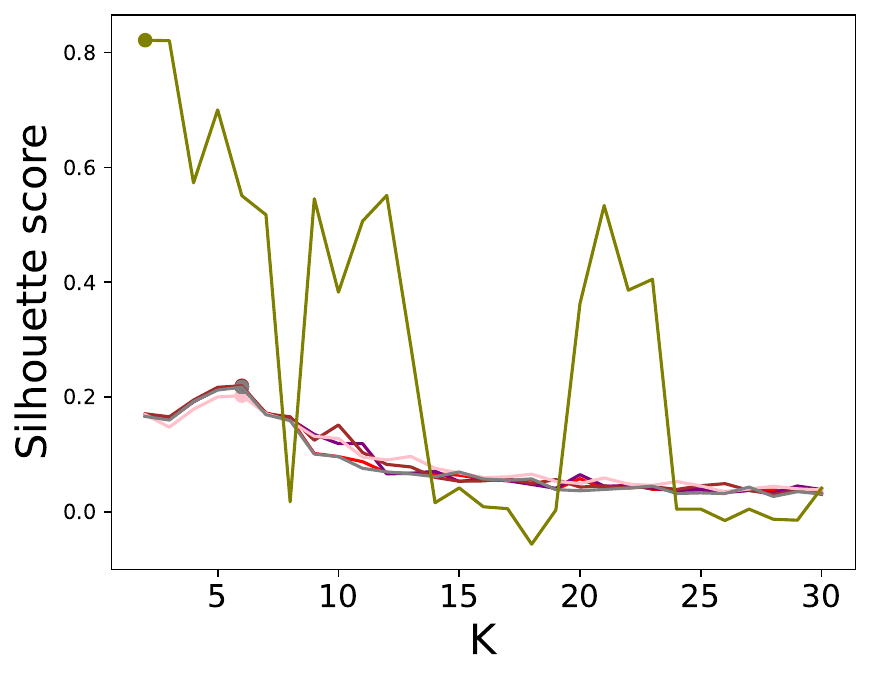}
        \includegraphics[width=0.3\textwidth]{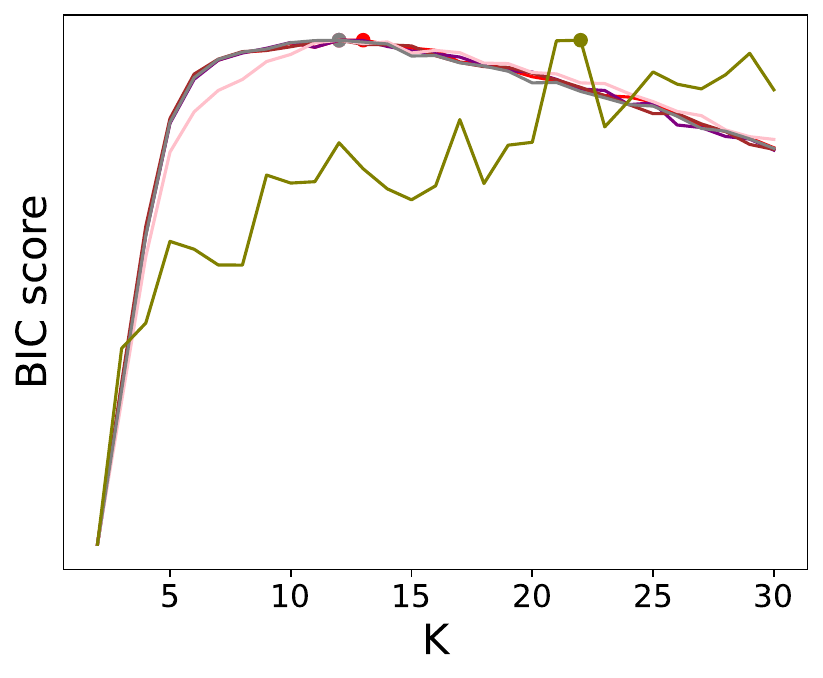}
        \caption{Loss (inertia), Silhouette score, and BIC score for the GeneExp data set ($K_{gt}=5$).}
    \end{subfigure}
    \begin{subfigure}{1\textwidth}
        \centering
        \includegraphics[width=0.3\textwidth]{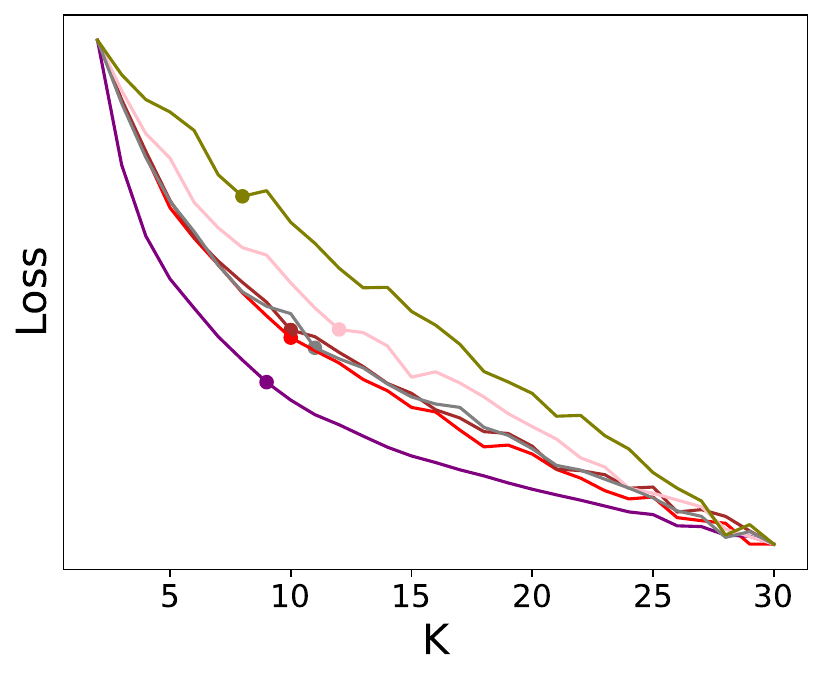}
        \includegraphics[width=0.315\textwidth]{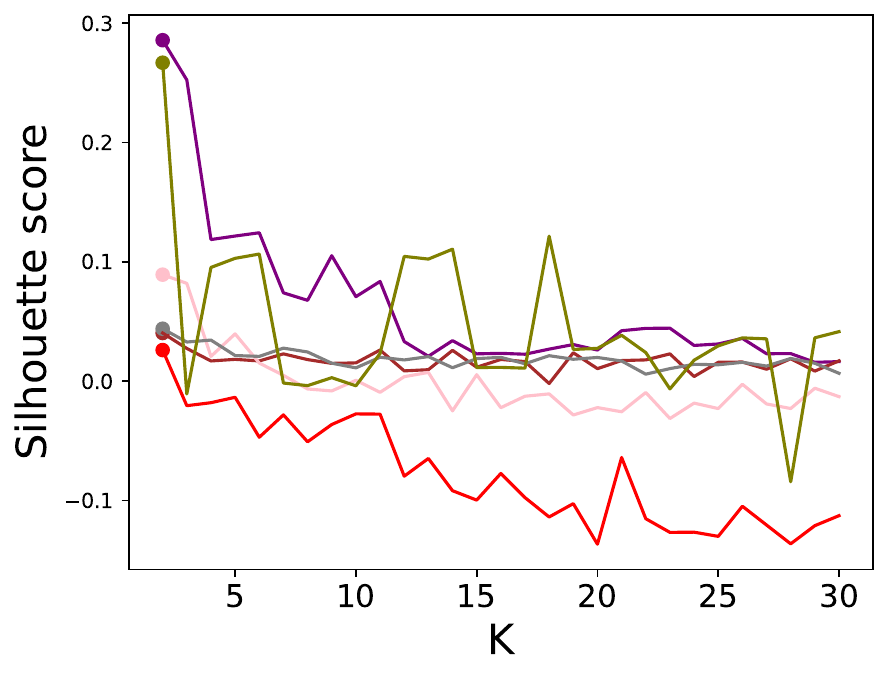}
        \includegraphics[width=0.3\textwidth]{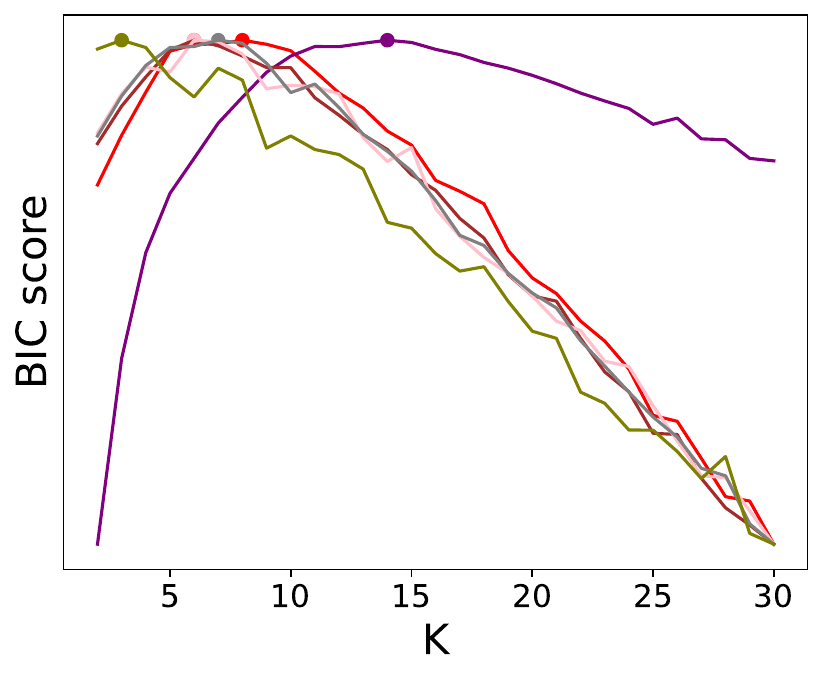}
        \caption{Loss (inertia), Silhouette score, and BIC score for the BBCSports data set ($K_{gt}=5$).}
    \end{subfigure}
    \begin{subfigure}{1\textwidth}
        \centering
        \includegraphics[width=0.8\textwidth]{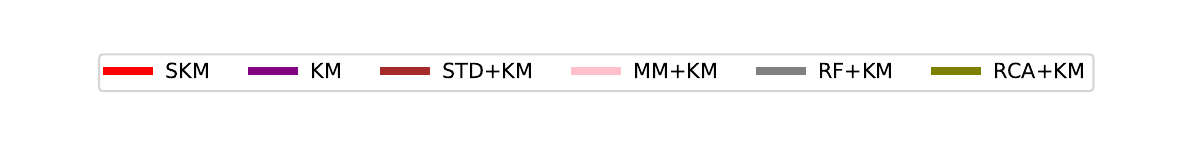}
    \end{subfigure}
    \caption{Loss (inertia), Silhouette score, and BIC score of the $k$-Means-based comparison algorithms for an increasing number of clusters $K$. The colored dots indicate the suggested number of clusters for each algorithm and evaluation method.}
    \label{fig:k_estimate_km}
\end{figure*}

\begin{table*}[t]
\centering
\caption{The estimated number of clusters $K$ and the corresponding ARI and Purity values obtained by different $K$-estimation strategies. TauCC intrinsically estimates $K$, \Method uses the method described in Sect.~\ref{sec:estimate_k}, and the $k$-Means-based approaches use the elbow method, Silhouette score (SIL) and BIC score. The result that best matches the ground truth is highlighted in \textbf{bold}.}
\label{tab:estimateK}
\resizebox{\textwidth}{!}{
\begin{tabular}{l|l|cc|ccc|ccc|ccc|ccc|ccc|ccc}
\toprule
\multirow{2}{*}{\textbf{Data set}} & \multirow{2}{*}{\textbf{Metric}} & \multirow{2}{*}{\Method} & TauCC & \multicolumn{3}{c|}{SKM} & \multicolumn{3}{c|}{KM} & \multicolumn{3}{c|}{STD+KM} & \multicolumn{3}{c|}{MM+KM} & \multicolumn{3}{c|}{RF+KM} & \multicolumn{3}{c}{RCA+KM}\\
& & & (mean) & Elbow & SIL & BIC & Elbow & SIL & BIC & Elbow & SIL & BIC & Elbow & SIL & BIC & Elbow & SIL & BIC & Elbow & SIL & BIC\\
\midrule
Synth & $K$ & \bm{$3$} & \bm{$3.0$} & $8$ & $2$ & $30$ & $7$ & $2$ & $29$ & $7$ & $2$ & $29$ & $7$ & $2$ & $30$ & $8$ & $2$ & $30$ & $8$ & $2$ & $29$ \\
($K_{gt} = 3$)& ARI & \bm{$95.5$} & $25.1$ & $53.1$ & $-0.1$ & $25.2$ & $16.7$ & $0.4$ & $10.2$ & $60.4$ & $0.0$ & $31.7$ & $63.3$ & $-0.1$ & $22.0$ & $50.6$ & $0.0$ & $23.1$ & $21.7$ & $0.0$ & $17.7$ \\
& Purity & \bm{$98.5$} & $59.8$ & $86.0$ & $34.1$ & $95.1$ & $64.8$ & $36.7$ & $88.3$ & $89.0$ & $34.2$ & $94.6$ & $88.1$ & $34.1$ & $94.3$ & $85.3$ & $34.1$ & $95.3$ & $57.4$ & $34.3$ & $83.1$ \\
\midrule
Wholesales & $K$ & $30$ & $3.0$ & $8$ & \bm{$2$} & $29$ & $8$ & $2$ & $30$ & $8$ & \bm{$2$} & $30$ & $9$ & \bm{$2$} & $30$ & $8$ & \bm{$2$} & $30$ & $7$ & $4$ & $30$ \\
($K_{gt} = 2$)& ARI & $3.9$ & $27.6$ & $12.7$ & $26.5$ & $5.1$ & $23.2$ & $-3.1$ & $7.3$ & $13.4$ & \bm{$28.2$} & $5.6$ & $13.3$ & $27.8$ & $7.9$ & $8.8$ & $21.3$ & $2.8$ & $15.3$ & $18.8$ & $3.9$ \\
& Purity & $88.6$ & $75.7$ & $82.3$ & $75.9$ & $86.6$ & $86.4$ & $67.7$ & \bm{$90.7$} & $80.7$ & $76.8$ & $84.5$ & $81.8$ & $76.6$ & $87.3$ & $80.9$ & $73.2$ & $83.6$ & $85.5$ & $84.3$ & $87.0$ \\
\midrule
SportA & $K$ & $19$ & $1.7$ & $8$ & \bm{$2$} & $23$ & $6$ & $3$ & $30$ & $9$ & \bm{$2$} & $22$ & $9$ & \bm{$2$} & $29$ & $9$ & \bm{$2$} & $26$ & $8$ & \bm{$2$} & $29$ \\
($K_{gt} = 2$)& ARI & $3.2$ & $15.8$ & $8.0$ & $29.5$ & $2.7$ & $14.5$ & $19.6$ & $6.3$ & $7.0$ & \bm{$31.9$} & $3.1$ & $8.0$ & $25.9$ & $2.6$ & $6.2$ & $7.5$ & $2.5$ & $4.8$ & $0.1$ & $6.0$ \\
& Purity & $79.3$ & $70.1$ & $80.1$ & $77.2$ & $78.7$ & $74.3$ & $72.1$ & \bm{$80.9$} & $78.3$ & $78.3$ & $78.8$ & $80.1$ & $75.6$ & $79.2$ & $78.2$ & $64.3$ & $78.9$ & $0.66$ & $63.6$ & $75.9$ \\
\midrule
Optdigits & $K$ & $30$ & $2.1$ & $9$ & $9$ & $30$ & \bm{$10$} & \bm{$10$} & $30$ & $9$ & $9$ & $30$ & \bm{$10$} & \bm{$10$} & $30$ & $9$ & $9$ & $30$ & $8$ & $2$ & $30$ \\
($K_{gt} = 10$) & ARI & $48.0$ & $10.6$ & $60.1$ & $60.1$ & $43.6$ & \bm{$67.1$} & \bm{$67.1$} & $43.9$ & $60.3$ & $60.3$ & $43.1$ & $67.0$ & $67.0$ & $50.8$ & $59.6$ & $59.6$ & $44.8$ & $0.0$ & $0.0$ & $11.8$ \\
& Purity & $91.4$ & $20.3$ & $73.4$ & $73.4$ & $91.7$ & $80.3$ & $80.3$ & $94.0$ & $73.4$ & $73.4$ & $92.2$ & $80.3$ & $80.3$ & \bm{$94.3$} & $73.2$ & $73.2$ & $92.0$ & $10.7$ & $10.2$ & $30.9$ \\
\midrule
BBCSports & $K$ & \bm{$5$} & $3.0$ & $10$ & $2$ & $8$ & $9$ & $2$ & $14$ & $10$ & $2$ & $6$ & $12$ & $2$ & $6$ & $11$ & $2$ & $7$ & $8$ & $2$ & $3$ \\
($K_{gt} = 5$) & ARI & \bm{$92.0$} & $26.2$ & $17.3$ & $5.8$ & $22.4$ & $0.3$ & $0.0$ & $0.7$ & $14.1$ & $4.2$ & $11.3$ & $8.9$ & $1.6$ & $7.1$ & $11.1$ & $5.6$ & $15.3$ & $0.8$ & $0.3$ & $1.2$ \\
& Purity & \bm{$97.2$} & $51.6$ & $64.2$ & $36.0$ & $66.8$ & $37.7$ & $36.0$ & $41.2$ & $60.0$ & $36.0$ & $47.2$ & $50.2$ & $36.0$ & $47.1$ & $54.5$ & $36.0$ & $57.8$ & $39.5$ & $36.5$ & $38.7$ \\
\midrule
BBCNews & $K$ & \bm{$6$} & $3.4$ & $14$ & $2$ & $18$ & \bm{$6$} & $2$ & $29$ & $12$ & $2$ & $14$ & $15$ & $2$ & $15$ & $11$ & $2$ & $12$ & $15$ & $2$ & $17$ \\
($K_{gt} = 5$) & ARI & \bm{$82.3$} & $41.8$ & $20.4$ & $4.6$ & $17.9$ & $6.3$ & $3.9$ & $11.5$ & $19.7$ & $3.3$ & $19.6$ & $14.8$ & $2.1$ & $14.8$ & $20.4$ & $2.3$ & $19.6$ & $0.4$ & $-0.1$ & $1.4$ \\
& Purity & \bm{$96.0$} & $60.5$ & $69.8$ & $26.8$ & $71.0$ & $35.2$ & $28.2$ & $55.6$ & $66.9$ & $26.3$ & $66.9$ & $59.3$ & $26.0$ & $59.3$ & $63.0$ & $27.0$ & $61.5$ & $30.5$ & $23.6$ & $30.7$ \\
\midrule
WebKB & $K$ & $9$ & $2.9$ & $9$ & $2$ & $30$ & \bm{$7$} & $2$ & $30$ & $9$ & $2$ & $27$ & $9$ & $2$ & $26$ & $12$ & $2$ & $30$ & $4$ & $2$ & $28$ \\
($K_{gt} = 6$) & ARI & \bm{$29.1$} & $11.3$ & $13.7$ & $0.1$ & $8.6$ & $3.9$ & $0.0$ & $5.8$ & $13.3$ & $0.7$ & $8.1$ & $10.3$ & $-0.4$ & $7.3$ & $10.6$ & $-0.8$ & $10.4$ & $0.1$ & $0.2$ & $-1.7$ \\
& Purity & \bm{$71.0$} & $48.3$ & $59.5$ & $36.3$ & $65.8$ & $39.5$ & $36.3$ & $44.8$ & $59.2$ & $36.3$ & $63.9$ & $54.7$ & $36.3$ & $60.0$ & $55.3$ & $36.3$ & $64.9$ & $36.9$ & $36.9$ & $39.3$ \\
\midrule
Reuters & $K$ & $18$ & $2.0$ & $8$ & $2$ & $30$ & $8$ & $2$ & $30$ & $8$ & $2$ & $30$ & $11$ & $2$ & $29$ & $9$ & $3$ & $29$ & \bm{$6$} & $2$ & $28$ \\
($K_{gt} = 5$) & ARI & $25.6$ & \bm{$43.2$} & $18.9$ & $27.2$ & $11.0$ & $16.3$ & $9.2$ & $12.0$ & $17.8$ & $26.4$ & $11.3$ & $14.7$ & $27.0$ & $10.5$ & $11.2$ & $15.2$ & $5.9$ & $0.2$ & $0.2$ & $0.3$ \\
& Purity & \bm{$96.7$} & $68.8$ & $68.9$ & $64.8$ & $85.1$ & $60.2$ & $48.9$ & $70.0$ & $68.0$ & $64.5$ & $85.8$ & $72.3$ & $64.5$ & $80.8$ & $65.0$ & $63.8$ & $76.9$ & $47.6$ & $47.4$ & $48.4$ \\
\midrule
20NewsG & $K$ & \bm{$22$} & $9.5$ & $11$ & $2$ & $30$ & $6$ & $2$ & $30$ & $11$ & $2$ & $30$ & $12$ & $2$ & $26$ & $11$ & $2$ & $30$ & $5$ & $2$ & $30$ \\
($K_{gt} = 20$) & ARI & \bm{$22.9$} & $2.1$ & $1.5$ & $0.4$ & $2.0$ & $0.0$ & $0.0$ & $0.5$ & $1.8$ & $0.6$ & $2.0$ & $1.2$ & $0.3$ & $1.7$ & $0.8$ & $0.4$ & $1.5$ & $0.0$ & $0.0$ & $0.0$ \\
& Purity & \bm{$41.7$} & $8.7$ & $11.4$ & $7.5$ & $13.8$ & $5.9$ & $5.3$ & $8.3$ & $11.2$ & $8.3$ & $13.6$ & $10.8$ & $7.7$ & $12.7$ & $9.7$ & $7.4$ & $11.3$ & $5.3$ & $5.3$ & $5.6$ \\
\midrule
MouseAtlas & $K$ & $26$ & $3.6$ & $9$ & $2$ & $30$ & $9$ & $2$ & $30$ & \bm{$11$} & $2$ & $29$ & $10$ & $2$ & $30$ & $10$ & $2$ & $30$ & $10$ & $2$ & $30$ \\
($K_{gt} = 11$) & ARI & \bm{$55.0$} & $34.4$ & $39.0$ & $3.2$ & $28.6$ & $1.5$ & $0.7$ & $1.2$ & $37.1$ & $17.5$ & $31.2$ & $34.9$ & $3.8$ & $26.9$ & $24.5$ & $2.5$ & $28.0$ & $0.0$ & $0.0$ & $0.4$ \\
& Purity & $65.3$ & $50.4 $ & $68.0$ & $25.2$ & \bm{$82.5$} & $34.4$ & $21.9$ & $38.0$ & $68.0$ & $43.8$ & $81.8$ & $63.3$ & $27.7$ & $77.6$ & $56.8$ & $25.1$ & $73.8$ & $20.7$ & $20.4$ & $22.0$ \\
\midrule
GeneExp & $K$ & \bm{$6$} & $3.0$ & \bm{$6$} & \bm{$6$} & $13$ & $7$ & \bm{$6$} & $12$ & \bm{$6$} & \bm{$6$} & $12$ & $7$ & \bm{$6$} & $12$ & $7$ & \bm{$6$} & $12$ & $12$ & $2$ & $22$ \\
($K_{gt} = 5$) & ARI & \bm{$87.9$} & $53.1$ & $87.6$ & $87.6$ & $45.9$ & $71.4$ & $87.6$ & $48.7$ & $87.8$ & $87.8$ & $48.7$ & $73.2$ & $87.1$ & $44.6$ & $71.2$ & $87.6$ & $48.4$ & $0.2$ & $0.1$ & $0.7$ \\
& Purity & \bm{$99.6$} & $67.2$ & $99.3$ & $99.3$ & $99.5$ & $99.3$ & $99.3$ & $99.5$ & $99.4$ & $99.4$ & $99.6$ & $99.4$ & $99.4$ & $99.4$ & $99.1$ & $99.3$ & $99.6$ & $38.5$ & $37.7$ & $39.7$ \\
\midrule
HDendritic & $K$ & $29$ & $2.7$ & $6$ & \bm{$4$} & $29$ & $11$ & $2$ & $30$ & $7$ & $3$ & $29$ & $7$ & $5$ & $23$ & $11$ & $2$ & $30$ & $3$ & $2$ & $2$ \\
($K_{gt} = 4$) & ARI & $74.9$ & $42.3$ & $54.9$ & $80.1$ & $28.8$ & \bm{$82.1$} & $-0.1$ & $38.5$ & $55.7$ & $68.9$ & $25.3$ & $49.8$ & $61.4$ & $24.3$ & \bm{$82.1$} & $-0.1$ & $38.5$ & $-0.1$ & $-0.1$ & $-0.1$ \\
& Purity & \bm{$95.8$} & $64.3$ & $92.7$ & $92.0$ & \bm{$95.8$} & $93.2$ & $33.5$ & $95.5$ & $93.6$ & $77.4$ & $95.1$ & $89.4$ & $88.5$ & $94.3$ & $93.2$ & $33.5$ & $95.5$ & $33.5$ & $33.5$ & $33.5$ \\
\bottomrule
\end{tabular}
}
\end{table*}

\end{appendices}

\end{document}